\pdfoutput=1
\documentclass[letterpaper]{article}
\usepackage[margin=1in]{geometry} 
\setlength{\parindent}{0em}
\setlength{\parskip}{.6em}

\usepackage{macro}

\usepackage{authblk}

\title{Meek Separators and Their Applications \\in Targeted Causal Discovery}
\author[1,*]{Kirankumar Shiragur}
\author[1,2,*]{Jiaqi Zhang}
\author[1,2]{Caroline Uhler}
\affil[1]{Eric and Wendy Schmidt Center, Broad Institute of MIT and Harvard}
\affil[2]{LIDS, Massachusetts Institute of Technology}
\affil[*]{Equal contributions. Alphabetical order.}
\date{}

\usepackage{hyperref}       
\hypersetup{
    unicode=false,          
    colorlinks=true,        
    linkcolor=red,          
    citecolor=darkgreen,    
    filecolor=magenta,      
    urlcolor=blue           
}
\usepackage[capitalize, nameinlink]{cleveref}

\begin{document}
\maketitle

\addtocontents{toc}{\protect\setcounter{tocdepth}{0}}

\begin{abstract}
Learning causal structures from interventional data is a fundamental problem with broad applications across various fields. While many previous works have focused on recovering the entire causal graph, in practice, there are scenarios where learning only part of the causal graph suffices. This is called \emph{targeted} causal discovery. In our work, we focus on two such well-motivated problems: subset search and causal matching. We aim to minimize the number of interventions in both cases.

Towards this, we introduce the \emph{Meek separator}, which is a subset of vertices that, when intervened, decomposes the remaining unoriented edges into smaller connected components. We then present an efficient algorithm to find Meek separators that are of small sizes. Such a procedure is helpful in designing various divide-and-conquer-based approaches. In particular, we propose two randomized algorithms that achieve logarithmic approximation for subset search and causal matching, respectively. Our results provide the first known average-case provable guarantees for both problems. We believe that this opens up possibilities to design near-optimal methods for many other targeted causal structure learning problems arising from various applications.
\end{abstract}

\section{Introduction}
\label{sec:introduction}
Discovering the causal structure among a set of variables is an important problem permeating multiple fields including biology, epidemiology, economics, and social science \cite{friedman2000using,robins2000marginal, spirtes2000causation, pearl2003causality}. A common way to represent the causal structure is through a \emph{directed acyclic graph} (DAG), where an arc between two variables encodes a direct causal effect \cite{spirtes2000causation}. The goal of causal discovery is thus to recover this DAG from data. With observational data, a DAG is generally only identifiable up to its \emph{Markov equivalence class} (MEC) \cite{verma1990, andersson1997characterization}. Identifiability can be improved by performing interventions on the variables. In particular, a more refined MEC can be identified with both hard and soft interventions \cite{hauser2012characterization, yang2018characterizing}, where a \emph{hard} intervention eliminates the dependency between its target variables and their parents in the DAG and a \emph{soft} intervention modifies this dependency without removing it \cite{eberhardt2007interventions}.

As intervention experiments tend to be expensive in practice, a critical problem is to design algorithms to select interventions that minimize the number of trials needed to learn about the structure. Previous works have considered both fully identifying the DAG while minimizing the total number/cost of interventions \cite{hauser2014two,shanmugam2015learning,kocaoglu2017cost,greenewald2019sample,squires2020active} and learning the most about the underlying DAG given a fixed budget \cite{ghassami2018budgeted}. While recovering the entire causal graph yields a holistic view of the relationships between variables, it is sometimes sufficient to learn only part of the causal graph for a particular downstream task. This is sometimes termed \emph{targeted causal discovery} \cite{squires2022causal} and it has arisen in various different applications recently \cite{agrawal2019abcd,gamella2020active,zhang2021matching,choo2023subset}. The benefit of targeted causal discovery is that the number of required interventions can be significantly less than that needed to fully identify the DAG.
In our work, we consider two such problems, \emph{subset search} and \emph{causal matching}, described in the following.

\noindent\textbf{Subset Search.} Proposed by \cite{choo2023subset}, the problem of subset search aims to recover a subset of the causal relationships between variables. Formally, let $\cG=(V,E)$ denote the underlying DAG. Given a subset of target edges $T\subseteq E$, the goal is to recover the orientation of edges in $T$ with the minimum number of interventions. Subset search problems arise in many applications, including local graph discovery for feature selection \cite{aliferis2003hiton}, out-of-distribution generalization in machine learning \cite{heinze2018invariant}, and learning gene regulatory networks \cite{davidson2005gene}. As a concrete example, consider the study of melanoma \cite{frangieh2021multimodal}, a type of skin cancer. To understand its development for potential treatments, it may be of interest to identify the causal relationships between genes that are known to be relevant to melanoma. In this case, the problem can be formulated as recovering the subset of edges between melanoma-related genes.

\noindent\textbf{Causal Matching.} Motivated by many sequential design problems, \cite{zhang2021matching} considered causal matching, where the goal is to identify an intervention which transforms a causal system to a desired state. Formally, let $V$ be the system variables and $P$ be its joint distribution which factorizes according to the underlying DAG $\cG$. Given a desired joint distribution $Q$ over $V$, the goal is to identify an intervention $I$ such that the interventional distribution $P^I$ best matches $Q$ under some metric. Similar to \cite{zhang2021matching}, we will focus on a special form of this problem, \emph{causal mean matching}, where the metric between $P^I$ and $Q$ is characterized by the mean discrepancy $\|\E_{P^I}[V]-\E_{Q}[V]\|_2$. Such problems arise in, for example, cellular reprogramming in genomics, which is of great interest for regenerative medicine \cite{cherry2012reprogramming}. The aim of this field is to reprogram easily accessible cell types into some desired cell types using genetic interventions. Since genes regulate each other via an underlying network, a targeted strategy can infer just enough about the structure in order to match the desired state.

\noindent\textbf{Contributions.}
One of our primary contributions is an efficient algorithm for finding an intervention set $\cI$ of small size that, when intervened, decomposes the remaining unoriented edges into connected components of smaller sizes. This procedure is powerful in its flexibility to be used to design various divide-and-conquer based algorithms. In particular, we demonstrate this on the two targeted causal discovery problems of subset search and causal matching.\footnote{In this work, we assume causal sufficiency \cite{spirtes2000causation} and noiseless setting with given observational data.}
\begin{itemize}
    \vspace{-.1in}
    \item For the subset search problem, we obtain an efficient randomized algorithm that in expectation achieves a logarithmic approximation to the minimum number of required interventions (verification number; defined in Section \ref{sec:prelim}). 
    \vspace{-.05in}
    \item For the causal mean matching problem, we derive a randomized algorithm that on average achieves a logarithmic competitive ratio against the optimal number of required interventions. 
    \vspace{-.1in}
\end{itemize}
For both problems, we obtain \emph{exponential} improvements -- our analysis gives the first non-trivial competitive ratio in expectation; in contrast, prior works \cite{choo2023subset,zhang2021matching} show that no deterministic algorithm can achieve a better than linear approximation against the optimal solution for all instances.  

Philosophically, the results in our work are a step towards the study of targeted causal discovery problems arising from various applications, where recovering the entire causal graph is unnecessary.

\noindent\textbf{Organization.}
In Section~\ref{sec:prelim}, we provide formal definitions and useful results. In Section~\ref{sec:results}, we state all our main results. We provide our algorithm for Meek separator and its analysis in Section~\ref{sec:meek_sep}. We use the Meek separator subroutine to solve for subset search in Section~\ref{sec:sub_search} and causal matching in Section~\ref{sec:causal_state}. In Section~\ref{sec:experiments}, we demonstrate empirically our proposed algorithms on synthetic data.
\section{Preliminaries and Related Work}
\label{sec:prelim}

\subsection{Basic Graph Definitions} Let $\cG = (V,E)$ be a graph on $|V| = n$ vertices.
We use $V(\cG)$, $E(\cG)$ and $A(\cG) \subseteq E(\cG)$ to denote its vertices, edges, and oriented arcs respectively.
When the referred graph is clear from the context, we use $V$ (or $E$) instead of $V(\cG)$ (or $E(\cG$)) for simplicity.
The graph $\cG$ is fully oriented if $A(\cG) = E(\cG)$, and partially oriented otherwise.
For any two vertices $u,v \in V$, we write $u \sim v$ if they are adjacent and $u \not\sim v$ otherwise.
To specify the arc orientations, we use $u \to v$ or $u \gets v$.
For any subset $V' \subseteq V$ and $E' \subseteq E$, let $\cG[V']$ and $\cG[E']$ denote the vertex-induced and edge-induced subgraphs respectively.
Consider a vertex $v \in V$ in a directed graph, let $\Pa(v), \Anc(v)$ and $\Des(v)$ denote the parents, ancestors and descendants of $v$ respectively.
Let $\Des[v] = \Des(v) \cup \{v\}$ and $\Anc[v] = \Anc(v) \cup \{v\}$.
The \emph{skeleton} $skel(\cG)$ of a graph $\cG$ refers to the graph where all edges are made undirected.
A \emph{v-structure} refers to three distinct vertices $u,v,w$ such that $u \to v \gets w$ and $u \not\sim w$.
A \emph{simple cycle} is a sequence of $k \geq 3$ vertices where $v_1 \sim v_2 \sim \ldots \sim v_k \sim v_1$.
The cycle is directed if at least one of the edges is directed and all oriented arcs are in the same direction along the cycle.
A partially oriented graph is a \emph{chain graph} if it contains no directed cycle.
In the undirected graph $\cG[E\setminus A]$ obtained by removing all arcs from a chain graph $\cG$, each connected component in $\cG[E \setminus A]$ is called a \emph{chain component}.
We use $CC(\cG)$ to denote the set of all such chain components, where each chain component $\cH \in CC(\cG)$ is a subgraph of $\cG$ and $V = \dot\cup_{\cH \in CC(\cG)} V(\cH)$.\footnote{We denote $A \,\dot\cup\, B$ as the union of two disjoint sets $A$ and $B$.}
For any partially oriented graph, an \emph{acyclic completion} or \emph{consistent extension} refers to an assignment of orientations to undirected edges such that the resulting fully oriented graph has no directed cycles.

\subsection{Graphical Concepts in Causal Models}

\begin{wrapfigure}{R}{0.355\textwidth}
\vspace*{-0.15in}
     \centering
     \begin{subfigure}[b]{0.36\textwidth}
         \centering
         \includegraphics[width=0.375\textwidth]{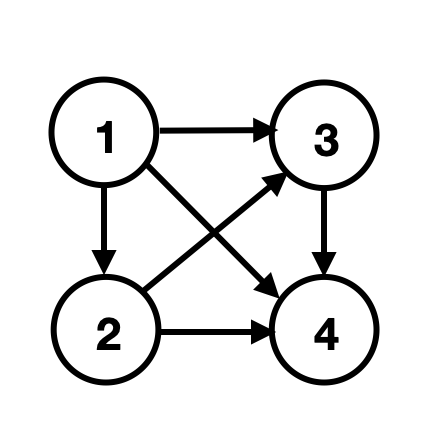}
         \caption{{A moral DAG.}}\label{fig:moral-dag}
     \end{subfigure}
     \begin{subfigure}[b]{0.36\textwidth}
         \centering
         \includegraphics[width=0.375\textwidth]{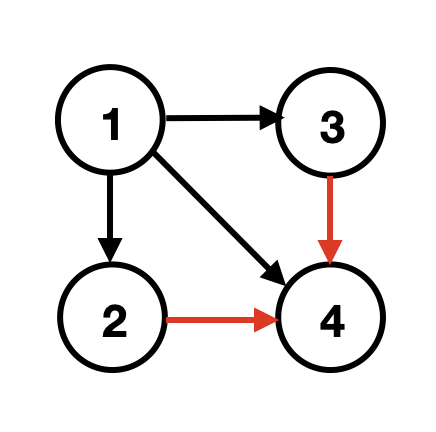}
         \caption{{A DAG with v-structure $2\to 4\leftarrow 3$.}}
     \end{subfigure}
        \caption{{An example of moral DAGs. \textbf{(a)} a moral DAG without v-structures. \textbf{(b)} an \emph{im}moral DAG.}
        }
        \label{fig:moral-dags}
\end{wrapfigure}

Directed acyclic graphs (DAGs) are fully oriented chain graphs,
where vertices represent random variables and their joint distribution $P$ factorizes according to the DAG:
$
P(v_1, \ldots, v_n) = \prod_{i=1}^n P(v_i \mid \Pa(v))
$.
We can associate a \emph{valid permutation} or \emph{topological ordering} $\pi : V \to [n]$ to any (partially oriented) DAG such that oriented arcs $u\to v$ satisfy $\pi(u) < \pi(v)$ (and assigning undirected edges $u\sim v$ as $u \to v$ when $\pi(u) < \pi(v)$ is an acyclic completion). 
Note that such valid permutation is not necessarily unique.
Two DAGs $\cG_1, \cG_2$ are in the same \emph{Markov equivalence class} (MEC) if any positive distribution $P$ which factorizes according to $\cG_1$ also factorizes according $\cG_2$.
For any DAG $\cG$, we denote its MEC by $[\cG]$.
It is known that DAGs in the same MEC share the same skeleton and v-structures \cite{verma1990,andersson1997characterization}.
A \emph{moral} DAG is a DAG without v-structures.
Figure~\ref{fig:moral-dags} illustrates this definition.
The \emph{essential graph} $\cE(\cG)$ of $[\cG]$ is a partially oriented graph such that an arc $u \to v$ is oriented if $u \to v$ in \emph{every} DAG in MEC $[\cG]$, and an edge $u \sim v$ is undirected if there exists two DAGs $\cG_1, \cG_2 \in [\cG]$ such that $u \to v$ in $\cG_1$ and $u \gets v$ in $\cG_2$.
An edge $u \sim v$ is a \emph{covered edge} \cite[Definition 2]{chickering2013transformational} if $\Pa(u) \setminus \{v\} = \Pa(v) \setminus \{u\}$.
We now give some useful definition and result for graph separators:

\begin{definition}[$\alpha$-separator and $\alpha$-clique separator \cite{choo2022verification}]\label{def:traditional-sep}
Let $A,B,C$ be a partition of the vertices $V$ of a graph $\cG$.
We say that $C$ is an \emph{$\alpha$-separator} if no edge joins\footnote{Edge $u\sim v$ \emph{joins} vertex $u$ with vertex $v$.} a vertex in $A$ with a vertex in $B$ and $|A|, |B| \leq \alpha \cdot |V|$. We call $C$ is an \emph{$\alpha$-clique separator} if it is an \emph{$\alpha$-separator} and a clique.\footnote{A clique is a fully connected graph, i.e., $u\sim v$ for every pair of distinct vertices in the graph.}
\end{definition}
\begin{lemma}[Theorem 1,3 in \cite{gilbert1984separatorchordal}, instantiated for unweighted graphs]
\label{thm:chordal-separator}
Let $\cG = (V,E)$ be a chordal graph\footnote{A chordal graph is a graph where every cycle of length at least 4 has a chord (c.f. \cite{blair1993introduction}).} with $|V| \geq 2$. Denote $\omega(\cG)$ as the size of its largest clique.
There exists a $\nicefrac{1}{2}$-clique separator $C$ involving at most $\omega(\cG)-1$ vertices.
The clique $C$ can be computed in $\cO(|E|)$ time.
\end{lemma}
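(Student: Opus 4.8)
The plan is to translate the statement, via the \emph{clique tree} of $\cG$, into a balanced-separator question on a tree, solve that question with a centroid argument, and then adjust the resulting pieces so that the balance requirement and the clique-size bound of \Cref{def:traditional-sep} hold simultaneously. First I would build a clique tree (junction tree) of $\cG$: running maximum cardinality search or Lex-BFS on a chordal graph produces, in $\cO(|V|+|E|)$ time, a perfect elimination ordering, and from it one reads off the maximal cliques $\mathcal{K}$ of $\cG$ together with a tree $T$ on node set $\mathcal{K}$ satisfying the \emph{running intersection property}: for every $v\in V$, the maximal cliques containing $v$ induce a subtree $T_v$ of $T$. For connected $\cG$ one has $|E|\ge |V|-1$, so this is $\cO(|E|)$; the disconnected case reduces to the connected one by a standard argument (apply the statement inside a component carrying more than half of the vertices, using the vertex-weighted form of the theorem when needed, and route the remaining components to the lighter side). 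From now on assume $\cG$ is connected.

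Next I would record what clique-tree edges buy us. Fix an edge $e=(K_i,K_j)$ of $T$ and set $S_e=K_i\cap K_j$. Deleting $e$ splits $T$ into a subtree containing $K_i$ and one containing $K_j$; by running intersection, any vertex not in $S_e$ occurs in the cliques of exactly one of the two sides (if it occurred on both, $T_v$ would contain $K_i$ and $K_j$, hence $S_e$). Let $A_e$ and $B_e$ be the vertices occurring only on the $K_i$-side and only on the $K_j$-side, respectively. Then $(A_e,B_e,S_e)$ partitions $V$ with no edge between $A_e$ and $B_e$, the set $S_e$ is a clique, and since distinct maximal cliques are incomparable we have $S_e\subsetneq K_i$, so $|S_e|\le |K_i|-1\le \omega(\cG)-1$. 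An analogous statement holds for deleting a node $K$ of $T$, with the branches of $T-K$ giving pairwise non-adjacent vertex groups covering $V\setminus K$. Thus it suffices to exhibit an edge of $T$ — or a maximal clique of $\cG$, possibly with one vertex removed — at which the two resulting sides each have at most $|V|/2$ vertices.

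To localize such a place in $T$, I would use a centroid argument with each vertex weighted by $1$. For an edge $e$ put $h(e)=\max(|A_e|,|B_e|)$; if some $e$ has $h(e)\le |V|/2$ we are done, taking $C=S_e$, $A=A_e$, $B=B_e$. Otherwise orient every edge of $T$ toward its heavier side and take a sink node $K^\star$ (one exists, since any orientation of a tree is acyclic). For each edge $e_i=(K^\star,N_i)$ incident to $K^\star$, being oriented toward $K^\star$ forces the branch side to be the lighter side, so the number $n_i$ of vertices occurring only in that branch satisfies $n_i\le |V|/2$, and these counts sum to $|V|-|K^\star|$. One then builds the separator near $K^\star$ by a case analysis on $|K^\star|$ and on the branch sizes: the separator is taken to be $K^\star$ itself, or $K^\star$ with one well-chosen vertex deleted, or the label $S_{e_1}$ of the clique-tree edge joining $K^\star$ to its heaviest branch, possibly enlarged by a few vertices of that heaviest maximal clique (each such set is a clique of size $\le\omega(\cG)-1$, being a proper subset of a maximal clique). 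Tracking the vertex counts through these choices yields $A,B,C$ with $|A|,|B|\le |V|/2$ and $|C|\le\omega(\cG)-1$. Finally, the clique tree, the per-branch counts (one traversal of $T$), the orientation, and the bookkeeping are all $\cO(|E|)$, giving the running-time claim.

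The step I expect to be the main obstacle is precisely this endgame: arranging that \emph{both} $A$ and $B$ have at most $|V|/2$ vertices \emph{and} $|C|\le\omega(\cG)-1$. The plain choice $C=K^\star$ overshoots the size bound when $K^\star$ is a largest clique, and even when it is not, one cannot always group the branches of $T-K^\star$ into two halves each of size at most $|V|/2$ — for instance with three branches of size close to $|V|/3$. Deciding when to pass to an edge label $S_e$, when to enlarge it inside the incident maximal clique, and when instead to delete a single carefully chosen vertex from $K^\star$, and checking that a valid partition always survives, is where genuine care is required; the clique-tree construction and the existence of the centroid are otherwise routine chordal-graph facts.
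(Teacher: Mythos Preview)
The paper does not prove this lemma at all: it is stated in the preliminaries section purely as a citation of Theorems~1 and~3 from Gilbert, Rose, and Tarjan (1984), with no argument given. So there is no ``paper's own proof'' to compare your proposal against.

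That said, a brief remark on your sketch. The clique-tree plus centroid framework you outline is indeed the standard route to this result, and is essentially how the original Gilbert--Rose--Tarjan argument proceeds. You have correctly identified the one nontrivial step --- simultaneously guaranteeing $|A|,|B|\le |V|/2$ and $|C|\le\omega(\cG)-1$ at the centroid --- and you are candid that you have not carried it through. As written, then, your proposal is an accurate high-level plan with the crux left unresolved; if you want a complete proof you will need to actually execute the case analysis you gesture at (or consult the cited reference, where it is done). For the purposes of this paper, however, no proof was expected: the lemma is imported as a black box.
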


\subsection{Interventions and Verifying Sets}
An \emph{intervention} $I\subseteq V$, associated with random variables $V$ with joint distribution $P$ that factorizes according to $\cG$, is an experiment where the conditional distributions $P(v\mid \Pa(v))$ for $v\in I$ are changed. \emph{Hard} interventions refer to changes that eliminate the dependency between $v$ and $\Pa(v)$, while \emph{soft} interventions modify this dependency without removing it \cite{eberhardt2007interventions}. Let $P^I$ denote the interventional distribution. Observational data is a special case where $I = \varnothing$. An intervention is \emph{atomic} if $|I| = 1$ and \emph{bounded} if $|I| \leq k$.
We call a set of interventions $\cI \subseteq 2^V$ an \emph{intervention set}.

With observational data, a DAG $\cG$ is \emph{generally} only identifiable up to its MEC\footnote{With additional parametric assumptions (e.g., \cite{peters2014identifiability}), additional identifiability can be achieved.}, i.e., $\cE(\cG)$ \cite{andersson1997characterization}. Identifiability can be improved with interventional data and it is known that intervening on $I$ allows us to infer the edge orientation of any edge cut by $I$ and $V\setminus I$ and possibly additional edges given by the Meek rules (Appendix \ref{sec:appendix-meek-rules}), for both hard \cite{hauser2012characterization} and soft \cite{yang2018characterizing} interventions.\footnote{For both cases, a ``faithfulness'' assumption needs to be assumed for $P,P^I$ (c.f. \cite{yang2018characterizing}).}
For intervention set $\cI$, let the \emph{$\cI$-essential graph} $\cE_{\cI}(\cG)$ of $\cG$ be the essential graph representing all DAGs in $[\cG]$ whose orientations of arcs cut by $I$ and $V \setminus I$ are the same as $\cG$ for all $I\in \cI$.
Figure~\ref{fig:essential-graph} illustrates these concepts.
The aforementioned results state that $\cG$ can be identified up to $\cE_{\cI}(\cG)$ with observational and interventional data from $\cI$.
We state some useful properties about $\cI$-essential graphs from \cite{hauser2014two}. First, every $\cI$-essential graph is a chain graph with chordal chain components.
This includes the case of $\cI = \varnothing$.
Second, orientations in one chain component do not affect orientations in other components.
In other words, to fully orient any essential graph $\cE_\cI(\cG)$, it is necessary and sufficient to orient every chain component in $\cE_\cI(\cG)$.

\begin{figure}[t]
     \centering
     \begin{subfigure}[b]{0.23\textwidth}
         \centering
         \includegraphics[width=0.587\textwidth]{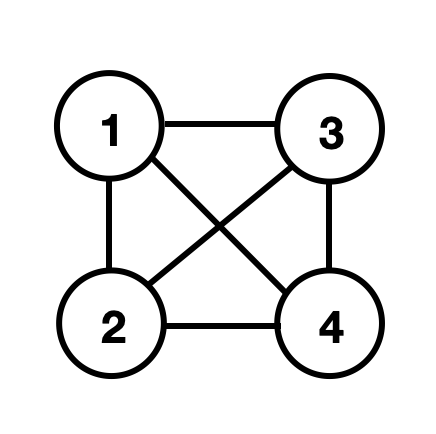}
         \caption{{Essential graph $\cE(\cG)$.}}
     \end{subfigure}
    \hfill
     \begin{subfigure}[b]{0.3\textwidth}
         \centering
         \includegraphics[width=0.45\textwidth]{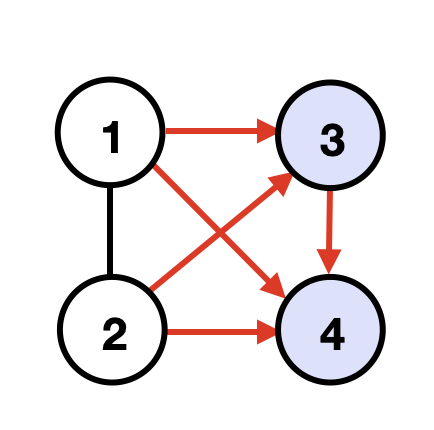}
         \caption{{$\cE_{\cI}(\cG)$ with $\cI=\{\{3\}, \{4\}\}$.}}
     \end{subfigure}
    \hfill
     \begin{subfigure}[b]{0.39\textwidth}
         \centering
         \includegraphics[width=0.346\textwidth]{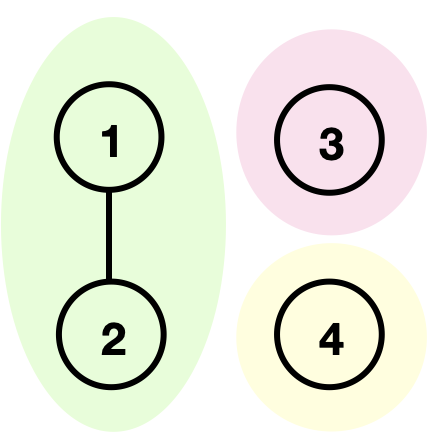}
         \caption{{Connected components of $CC(\cE_{\cI}(\cG))$.}}
     \end{subfigure}
        \caption{{An example illustrating essential graphs and connected chain components. Suppose the groud-truth DAG is given in Fig.~\ref{fig:moral-dag}. \textbf{(a)} the essential graph of $\cG$. \textbf{(b)} the $\cI$-essential graph of $\cG$ after two atomic interventions $\cI=\{\{3\},\{4\}\}$, where edges oriented by $\cI$ are indicated in red. \textbf{(c)} three connected components of $\cE_\cI(\cG)$ after removing the oriented edges.}
        }
        \label{fig:essential-graph}
        \vspace*{-.08in}
\end{figure}

A \emph{verifying set} $\cI$ for a DAG $\cG$ \cite{choo2022verification} is an intervention set that fully orients $\cG$ from $\cE(\cG)$, possibly with repeated applications of Meek rules (see Appendix~\ref{sec:appendix-meek-rules}).
In other words, for any graph $\cG = (V,E)$ and any verifying set $\cI$ of $\cG$, we have $\cE_{\cI}(\cG) = \cG$. 
A \emph{subset verifying set} $\cI$ \cite{choo2023subset} for a subset of \emph{target edges} $T \subseteq E$ in a DAG $\cG$ is an intervention set that fully orients all arcs in $T$ given $\cE(\cG)$, possibly with repeated applications of Meek rules.
Note that the subset verifying set depends on the target edges \emph{and} the underlying ground truth DAG --- the subset verifying set for the same $T \subseteq E$ may differ across two different DAGs $\cG_1,\cG_2$ in the same Markov equivalence class.


For bounded interventions of size at most $k$, the \emph{minimum verification number} $\nu_k(\cG, T)$ denotes the size of the minimum size subset verifying set for any DAG $\cG=(V,E)$ and subset of target edges $T \subseteq E$.
We write $\nu_1(\cG, T)$ when we restrict to atomic interventions.
When $k = 1$ and $T = E$ (i.e., full graph identification), \cite{choo2022verification} showed that it is necessary and sufficient to intervene on a minimum vertex cover of the covered edges in $\cG$. For any intervention set $\cI \subseteq 2^V$, we denote $R(\cG, \cI) = A(\cE_{\cI}(\cG)) \subseteq E$ as the set of oriented arcs in the $\cI$-essential graph of a DAG $\cG$.
For cleaner notation, we write $R(\cG,I)$ for one intervention $\cI = \{I\}$ for some $I \subseteq V$, and $R(\cG,v)$ for one atomic intervention $\cI = \{\{v\}\}$ for some $v \in V$.

\begin{definition}
For any intervention set $\cI \subseteq 2^V$, define $\cG^{\cI} = \cG[E \setminus R(G,\cI)]$ as the \emph{fully oriented} subgraph of $\cG$ induced by the unoriented edges in $\cE_\cI(\cG)$. In addition, for $u \in V$, let $\Pa_{\cG,\cI}(u) = \{x \in V: x \to u \in R(\cG,\cI)\}$ as the \emph{recovered parents} of $u$ by $\cI$.
\end{definition}

\newcommand{\expt}[1]{\mathbb{E}\left[ #1\right]}
\newcommand{\findsrc}{\mathrm{FindSource}}
\section{Results}
\label{sec:results}
 
Here we state all the main results of the paper. One of the primary contributions of our work is a randomized algorithm that outputs an intervention set $\cI$ of small size such that all connected components in the resulting $\cI$-essential graph have small sizes. We now formally define such intervention sets as Meek separators. An example of $\nicefrac12$-Meek separator is shown in Figure~\ref{fig:1}.
\begin{definition}[$\alpha$-Meek separator]
We call an intervention set $\cI$ an \emph{$\alpha$-Meek separator} of $\cG$ if each connected component $\cH \in CC(\cE_{\cI}(\cG))$ satisfies: $|V(\cH)|\leq \alpha |V(\cG)|$.
\end{definition}

\begin{figure}[t]
     \centering
     \begin{subfigure}[b]{0.17\textwidth}
         \centering
         \includegraphics[width=0.79\textwidth]{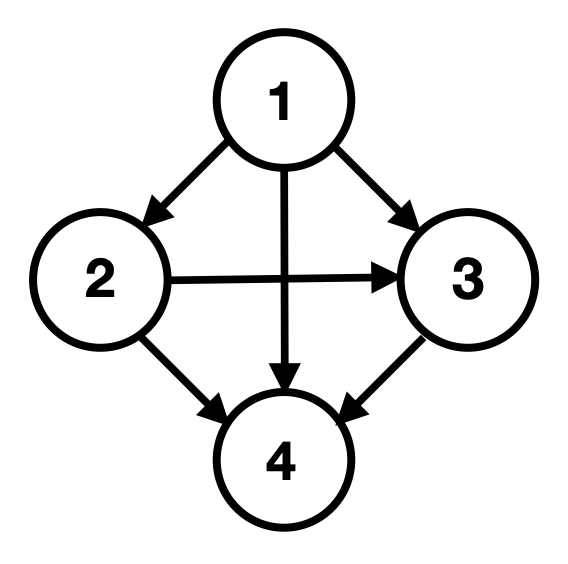}
         \caption{DAG $\cG$.}
     \end{subfigure}
    \hfill
     \begin{subfigure}[b]{0.32\textwidth}
         \centering
         \includegraphics[width=0.42\textwidth]{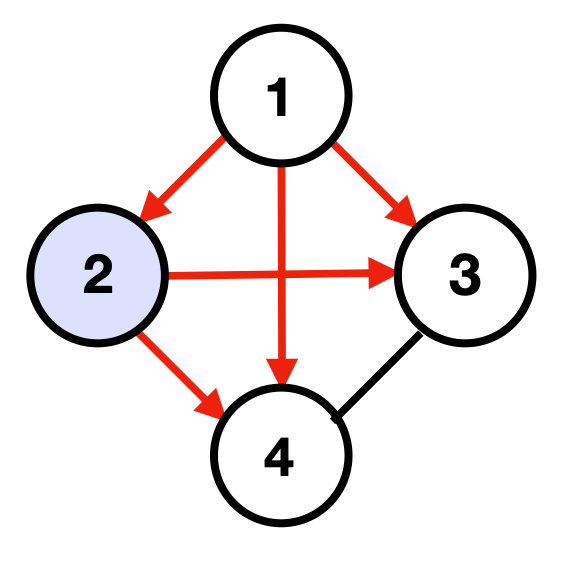}
         \caption{$\nicefrac12$-Meek separator $\cI=\{{\{2\}}\}$.}
     \end{subfigure}
    \hfill
     \begin{subfigure}[b]{0.17\textwidth}
         \centering
         \includegraphics[width=0.79\textwidth]{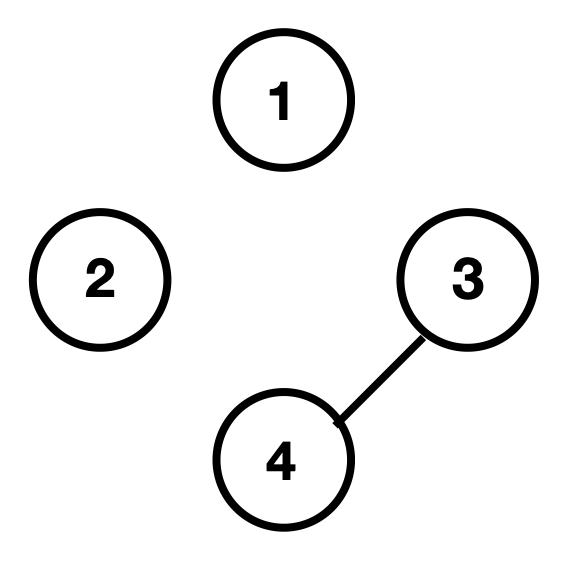}
         \caption{$CC(\cE_{\cI}(\cG))$.}
     \end{subfigure}
    \hfill
     \begin{subfigure}[b]{0.32\textwidth}
         \centering
         \includegraphics[width=0.42\textwidth]{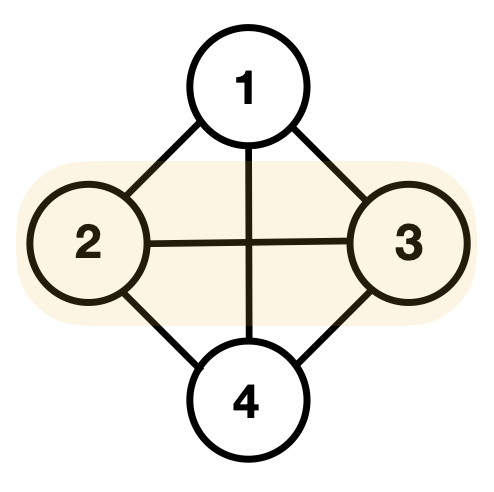}
         \caption{{$\nicefrac12$-graph separator $\{2,3\}$.}}
     \end{subfigure}
        \caption{An example comparing a $\nicefrac12$-Meek separator {with a $\nicefrac12$-graph separator. \textbf{(a)} the underlying true DAG $\cG$.  \textbf{(b)} the essential graph $\cE_\cI(\cG)$ after intervening on the $\nicefrac12$-Meek separator $\cI=\{\{2\}\}$, where edges oriented are indicated in red. \textbf{(c)} connected components of size $\leq \nicefrac{|V(\cG)|}{2}$ in $\cE_\cI(\cG)$. \textbf{(d)} the undirected version of $\cG$ with the $\nicefrac12$-graph separator $\{2,3\}$ highlighted.}
        }
        \label{fig:1}
        \vspace*{-.08in}
\end{figure}

Note that Meek separators differ from the traditional graph separators in Definition \ref{def:traditional-sep}, where in the latter we have bounds on the sizes of connected components in $\cG \backslash \cI$ instead of $CC(\cE_\cI(\cG))$. Graph separators of small size may not exist. For example, consider a fully connected DAG $\cG=(V,E)$ that forms a clique. Any $\alpha$-graph separator in $\cG$ must contain at least $(1-\alpha)|V|$ vertices, since every pair of vertices in the clique is connected. In contrast, we show that small-sized Meek separators always exist for any DAG. Moreover, we can efficiently find such separators by performing very few interventions, as given by the following theorem which we prove in Section \ref{sec:meek_sep}.
\begin{restatable}[Meek separator]{theorem}{thmmeeksep}\label{thm:meekseparator}
Given an essential graph $\cE(\cG)$ of an unknown DAG $\cG$, there exists a randomized procedure $\msep$ (given in Algorithm~\ref{alg:weighted-search}) that runs in polynomial time and adaptively intervenes on a set of atomic interventions $\cI$ such that we can find a $\nicefrac{1}{2}$-Meek separator of size at most $2$ and $\expt{|\cI|} \leq \cO(\log \omega(\cG))$,\footnote{The expectation in the result is over the randomness of the algorithm.} where $\omega(\cG)$ denotes the size of the largest clique in $\cG$.
\end{restatable}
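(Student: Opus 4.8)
The plan is a randomized binary search on a clique separator of the largest chain component, combined with a Meek-rule structural argument. First, by the stated properties of $\cI$-essential graphs, every chain component $\cH \in CC(\cE(\cG))$ is chordal, and since $\sum_{\cH} |V(\cH)| = |V(\cG)|$, at most one of them can have more than $\nicefrac{|V(\cG)|}{2}$ vertices. If even the largest satisfies $|V(\cH)| \le \nicefrac{|V(\cG)|}{2}$, then $\cI = \varnothing$ is already a $\nicefrac12$-Meek separator of size $0 \le 2$, so assume $h := |V(\cH)| > \nicefrac{|V(\cG)|}{2}$ for the (unique) largest $\cH$. Note also that intervening only on vertices of $\cH$ adds orientations to edges inside $\cH$ only, hence never affects the other chain components and only splits $\cH$ into smaller pieces; so it suffices to break $\cH$ into pieces of size $\le \nicefrac{|V(\cG)|}{2}$. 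Applying \Cref{thm:chordal-separator} to $\cH$ produces, in $\cO(|E|)$ time, a $\nicefrac12$-clique separator $C$ of $\cH$ with $|C| \le \omega(\cH) - 1 \le \omega(\cG) - 1$; write $A_1, \dots, A_k$ for the connected components of $\cH \setminus C$, each of size at most $\nicefrac h2$. Intervening on all of $C$ would orient every edge incident to $C$ and thereby break $\cH$ into subsets of the $A_i$'s, but $|C|$ may be as large as $\Theta(h)$, so we cannot afford this directly.

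The idea is that a constant number (in fact two) of carefully chosen vertices of $C$ already suffice, and they can be located by randomized binary search at cost $\cO(\log |C|)$ interventions in expectation. Once $\cH$ is fully oriented, the clique $C$ becomes a transitive tournament $c_1 \to c_2 \to \dots \to c_m$. The key observation is that intervening on a single $c_p$ orients all edges of $\cH$ incident to $c_p$, and then the Meek rules orient additional edges: applied inside the clique they orient every edge between $\{c_1,\dots,c_{p-1}\}$ and $\{c_{p+1},\dots,c_m\}$, and they propagate orientations into those components $A_i$ that ``hang off'' the successors of $c_p$ while avoiding the neighborhood of $c_p$. Using chordality to control how each $A_i$ attaches to a sub-clique of $C$, together with the $|A|,|B| \le \nicefrac h2$ guarantee of the clique separator, one shows there is a pair of positions $p < q$ (for a pure clique, e.g.\ a ``source'' $c_p$ and a ``median'' $c_q$) such that intervening on the size-$\le 2$ set $\{c_p, c_q\}$ leaves every chain component of the resulting essential graph of size at most $\nicefrac h2 \le \nicefrac{|V(\cG)|}{2}$; this set is the claimed $\nicefrac12$-Meek separator. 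The algorithm $\msep$ finds such a pair adaptively: it maintains a sub-clique $C' \subseteq C$ guaranteed to contain the next pivot, samples $v \in C'$ uniformly at random, intervenes on $v$, reads off the rank of $v$ within $C$ and the sizes of the (at most one) still-too-large chain component, and either declares success or recurses on the sub-clique of $C'$ corresponding to the side that remains too large, using $\findsrc$ as the basic building block.

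For the count, let $T(m)$ bound $\expt{|\cI|}$ when the current sub-clique has $m$ vertices. A uniformly random pivot has uniformly random rank, so the sub-clique we recurse on has size uniform in $\{0,1,\dots,m-1\}$, giving $T(m) \le 1 + \frac1m \sum_{j=0}^{m-1} T(j)$, which telescopes to $T(m) = T(m-1) + \nicefrac1m$ and hence $T(m) \le H_m = \cO(\log m)$. Running this a constant number of times on $C$, with $|C| \le \omega(\cG)$, yields $\expt{|\cI|} = \cO(\log \omega(\cG))$. Each step performs one intervention plus polynomial bookkeeping (recomputing the essential graph via the Meek rules, see Appendix~\ref{sec:appendix-meek-rules}, and the clique separator via \Cref{thm:chordal-separator}), and the expected number of steps is $\cO(\log\omega(\cG)) \le \cO(\log |V(\cG)|)$ (and can be truncated), so $\msep$ runs in polynomial time. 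I expect the main obstacle to be the structural claim of the second paragraph: proving via the Meek rules and chordality that two interventions on the clique separator always suffice, and that after one intervention the ``too-large'' side is unique so the recursion is well defined --- in particular ruling out the case where some $A_i$ wraps around the chosen pivot and keeps the pivot's predecessors and successors inside one large chain component.
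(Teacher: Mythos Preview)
Your overall architecture matches the paper's: reduce to the largest chain component $\cH$, take a $\nicefrac12$-clique separator $K$ of it, argue that two well-chosen clique vertices form a $\nicefrac12$-Meek separator, and locate them by randomized binary search along $K$ in $\cO(\log|K|)$ expected interventions. The binary-search recurrence and the polynomial-time accounting are fine.

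The gap is exactly where you flagged it, and your guess for the criterion is off. The paper's structural key is \Cref{lem:tt}: after intervening on a single vertex $v$ in a moral DAG, every remaining chain component lies entirely inside $B_v := \Des(v)$ or entirely inside $A_v := V \setminus \Des[v]$. Given this, the correct pair is \emph{not} ``source and median'' but the \emph{transition point} of the monotone sequence $|A_{v_1}| \le |A_{v_2}| \le \cdots \le |A_{v_m}|$ along the clique's topological order (\Cref{lem:exis_good_soln}): take $u$ to be the last clique vertex with $|A_u| \le \nicefrac{h}{2}$ and $x$ its immediate successor in $K$ (or just $u$ if it is the sink of $K$). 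Then every component of $\cE_{\{u,x\}}(\cH)$ sits in $A_u$ (size $\le \nicefrac h2$), in $B_x$ (size $\le \nicefrac h2$ since $|A_x| > \nicefrac h2$), or in $B_u \cap A_x$, which is disjoint from $K$ and hence of size $\le \nicefrac h2$ by the separator property (\Cref{lem:connectedsize}). Your ``source + median'' heuristic fails even on a pure clique: the median \emph{alone} is a $\nicefrac12$-Meek separator there, and in general neither source nor median need coincide with the transition point.

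This also sharpens the binary search: after intervening on $u_i$ you check whether the (unique, by \Cref{lem:tt}) too-large component lies in $B_{u_i}$ or $A_{u_i}$ --- equivalently, whether there is a directed path from $u_i$ to it in $\cE_{u_i}(\cH)$ --- and recurse on $K \cap \Des(u_i)$ or $K \cap \Anc(u_i)$ accordingly. Your description (``recurse on the side that remains too large'') is morally this, but you need \Cref{lem:tt} to know that side is well-defined. Finally, the reference to $\findsrc$ is backwards: in the paper $\findsrc$ calls $\msep$, not the other way around.
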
 
Although in the above result, we intervene on $\cO(\log \omega(\cG))$ nodes, our proofs show that there always exists a Meek separator of size at most $2$. However, to find such a Meek separator without knowing $\cG$ \emph{a priori}, our algorithm needs to perform $\cO(\log \omega(\cG))$ many interventions.
The above result is significant because it can be used to design divide-and-conquer based approaches for various problems. Specifically, we use the Meek separator algorithm as a subroutine to develop approximation algorithms for the subset search and causal matching problems. 

\noindent\textbf{Subset Search.}~~
Our first application of the Meek separator result is that it can be used as a subroutine for approximately solving the subset search problem, as demonstrated in Algorithm~\ref{alg:subset-search}. The analysis of our algorithm for subset search consists of two parts: (1) a lower bound on the number of interventions required for any algorithm (even with knowledge of $\cG$) to solve subset search, and (2) an upper bound on the number of interventions needed for our algorithm to solve subset search. By combining the two parts, we can bound the competitive ratio of our algorithm. 

For the first part, we provide a lower bound for the subset verification number described below. Recall that the subset verification number is the minimum number of interventions needed to orient edges in $T$ by any algorithm with full knowledge of $\cG$. Therefore it is a natural lower bound on the number of interventions required for any algorithm to solve subset search.
\begin{restatable}[Lower bound]{lemma}{lemlb}\label{lem:lb}
Let $\cG=(V,E)$ be a DAG and $T\subseteq E$ be a subset of target edges, then,
$\nu_{1}(\cG,T) \geq \max_{I \subseteq V}\sum_{\cH \in CC(\cE_{I}(\cG))} \kron(E(\cH)\cap T \neq \varnothing)~.$
\end{restatable}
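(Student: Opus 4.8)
The plan is to show that for \emph{any} single intervention $I \subseteq V$, the minimum subset verifying set must contain at least one intervention that ``touches'' each chain component $\cH \in CC(\cE_I(\cG))$ with $E(\cH) \cap T \neq \varnothing$, and that distinct such components require distinct interventions. Since $I$ ranges over all subsets of $V$, taking the maximum over $I$ gives the stated bound. The key structural facts I would invoke are the two properties of $\cI$-essential graphs quoted from \cite{hauser2014two}: (i) every $\cI$-essential graph is a chain graph with chordal chain components, and (ii) orientations inside one chain component do not affect orientations in any other component — equivalently, to orient all of $\cE_\cI(\cG)$ it suffices (and is necessary) to orient each chain component independently.

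First I would fix an arbitrary $I \subseteq V$ and an arbitrary minimum subset verifying set $\cI^\star$ for $(\cG, T)$, so $|\cI^\star| = \nu_1(\cG, T)$. The goal is to produce an injection from $\{\cH \in CC(\cE_I(\cG)) : E(\cH) \cap T \neq \varnothing\}$ into $\cI^\star$. The natural map sends a target-containing component $\cH$ to some intervention $\{v\} \in \cI^\star$ with $v \in V(\cH)$; the crux is arguing that (a) such a $v$ exists, and (b) the map can be made injective, i.e. no single vertex lies in two components (which is immediate since chain components partition $V$), so that distinct components with targets are served by distinct atomic interventions.

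For part (a): consider the intervention set $\cI' = \cI^\star \cup \{I\}$. Its essential graph $\cE_{\cI'}(\cG)$ refines both $\cE_{\cI^\star}(\cG) = \cG$ (since $\cI^\star$ is a verifying set for at least $T$; actually we only know it orients $T$, so more care is needed — see below) and $\cE_I(\cG)$. Here is the cleaner argument: take a chain component $\cH$ of $\cE_I(\cG)$ with some target edge $e \in E(\cH) \cap T$. By property (ii), the only way $\cI^\star$ can orient $e$ is through interventions whose effect is confined to $\cH$'s vertices — more precisely, an edge of $\cH$ can only be oriented by an intervention $\{v\}$ with $v \in V(\cH)$, because (using the Meek-rule characterization and that $\cH$ is a connected chordal chain component separated from the rest of the graph) interventions outside $V(\cH)$ leave all edges of $\cE_I(\cG)$ within $\cH$ unoriented. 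Hence $\cI^\star$ must contain at least one atomic intervention on a vertex of $V(\cH)$. Assigning $\cH \mapsto$ (one such intervention) gives a well-defined map, and since the $V(\cH)$ are pairwise disjoint across chain components, the map is injective. Therefore $\nu_1(\cG, T) = |\cI^\star| \geq \#\{\cH \in CC(\cE_I(\cG)) : E(\cH) \cap T \neq \varnothing\} = \sum_{\cH} \kron(E(\cH) \cap T \neq \varnothing)$, and maximizing over $I$ finishes the proof.

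The main obstacle I anticipate is rigorously justifying the claim that an edge lying in chain component $\cH$ of $\cE_I(\cG)$ cannot be oriented by an intervention on a vertex outside $V(\cH)$, once $I$ has already been applied. One must combine monotonicity of the essential-graph refinement (adding interventions only orients more edges, $R(\cG, \cI_1) \subseteq R(\cG, \cI_1 \cup \cI_2)$) with the decomposition property (ii) applied to $\cE_{I}(\cG)$: after intervening on $I$, the graph decomposes into independent chordal chain components, and an atomic intervention on $v \notin V(\cH)$ together with Meek rules cannot propagate orientations across the chain-component boundary into $\cH$. I would make this precise by noting that $\cE_{I \cup \{v\}}(\cG)$ restricted to $\cH$ equals $\cE_I(\cG)$ restricted to $\cH$ when $v \notin V(\cH)$, which follows from property (ii) since the orientation of edges in $\cH$ depends only on interventions affecting $\cH$. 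This is the one place where the argument needs the quoted structural theorem in an essential way rather than as a formality.
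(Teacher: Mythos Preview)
Your proof is correct and follows essentially the same approach as the paper's: both rest on the chain-component independence property from \cite{hauser2014two}, which forces any verifying set to place at least one atomic intervention inside each target-containing component of $\cE_I(\cG)$. The paper packages this as the one-line decomposition $\nu_1(\cG,T) \ge \sum_{\cH \in CC(\cE_\cI(\cG))} \nu_1(\cH, T\cap E(\cH))$ and then notes each nonempty summand is at least $1$, while you unpack the same reasoning into an explicit injection; the monotonicity-plus-decomposition step you flag as the ``main obstacle'' is exactly what the paper's inequality encodes.
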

For the upper bound, we show that our randomized algorithm, designed using the Meek separator subroutine, is competitive with respect to the aforementioned lower bound. Thus, it achieves a logarithmic approximation to the optimal number of interventions required to solve subset search.
\begin{restatable}[Upper bound]{theorem}{thmub}\label{thm:lb}
Let $\cG=(V,E)$ be a DAG with $|V|=n$ and $T\subseteq E$ be a subset of target edges. Algorithm~\ref{alg:subset-search} that takes essential graph $\cE(\cG)$ and $T$ as input, runs in polynomial time and adaptively intervenes on a set of atomic interventions $\cI \subseteq V$ that satisfies,
$\expt{|\cI|}\leq \cO\big(\log n \cdot \log \omega(\cG)\big)\cdot\nu_{1}(\cG,T) \text{ and } T \subseteq R(\cG,\cI)~.$
Furthermore, in the special case of $T=E$, the solution returned by it satisfies,
$\expt{|\cI|}\leq \cO(\log n)\cdot \nu_{1}(\cG) \text{ and } E \subseteq R(\cG,\cI)~.$
\end{restatable}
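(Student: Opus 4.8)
The plan is to analyze Algorithm~\ref{alg:subset-search}, which I expect to be a recursive divide-and-conquer procedure: repeatedly call the Meek separator subroutine $\msep$ on the current (sub)graph to break its unoriented edges into connected components of at most half the size, then recurse into those chain components that still contain target edges from $T$. Since each chain component of an $\cI$-essential graph can be oriented independently (as recalled from \cite{hauser2014two}), the recursion is well-defined, and the depth is $\cO(\log n)$ because each level halves the number of vertices in every surviving component. At each node of the recursion tree, $\msep$ uses $\cO(\log\omega(\cG))$ interventions in expectation by Theorem~\ref{thm:meekseparator}, so the total cost is $\cO(\log n \cdot \log\omega(\cG))$ times the number of recursion-tree nodes that are actually visited.

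The heart of the argument is therefore to bound the number of visited recursion-tree nodes by $\cO(\nu_1(\cG,T))$, using the lower bound from Lemma~\ref{lem:lb}. First I would observe that a recursion node is visited only if its associated chain component $\cH$ satisfies $E(\cH)\cap T\neq\varnothing$. The key structural claim is that the chain components arising at any single level of the recursion are exactly (or are refinements of) the chain components $CC(\cE_I(\cG))$ for the cumulative intervention set $I$ used down to that level, so the number of level-$\ell$ nodes containing a target edge is at most $\max_{I\subseteq V}\sum_{\cH\in CC(\cE_I(\cG))}\kron(E(\cH)\cap T\neq\varnothing)$, which by Lemma~\ref{lem:lb} is at most $\nu_1(\cG,T)$. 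Summing over $\cO(\log n)$ levels gives $\cO(\log n\cdot\nu_1(\cG,T))$ visited nodes — but this already contributes an extra $\log n$ factor, so I need to be more careful: I would instead charge each visited node to a distinct component in a \emph{single} well-chosen essential graph, or use the fact that the recursion tree has $\cO(\log n)$ depth and that the \emph{leaves} (or the branching structure) are controlled by $\nu_1(\cG,T)$, so that the total node count is $\cO(\nu_1(\cG,T)\cdot\log n)$ only once and the per-node $\log\omega(\cG)$ multiplies in, yielding $\expt{|\cI|}\le\cO(\log n\cdot\log\omega(\cG))\cdot\nu_1(\cG,T)$. Combining with $T\subseteq R(\cG,\cI)$, which holds because we recurse until every target edge lies in a trivial (fully oriented) component, completes the general bound.

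For the special case $T=E$, I would replace the per-node $\msep$ call and its $\cO(\log\omega(\cG))$ cost by the observation that when orienting an \emph{entire} chordal chain component, one can instead directly intervene on a $\nicefrac12$-clique separator via Lemma~\ref{thm:chordal-separator} — or appeal to the known result that full verification costs a minimum vertex cover of covered edges — so that each recursion node uses $\cO(1)$-type charged cost against $\nu_1(\cG)$ rather than $\cO(\log\omega(\cG))$. More concretely, I expect the argument to mirror existing $\cO(\log n)$-competitive full-discovery results (e.g.\ \cite{choo2023subset}): the recursion has depth $\cO(\log n)$, at each level the number of components is at most $\nu_1(\cG)$ by the full-graph analogue of Lemma~\ref{lem:lb}, and the per-component cost telescopes, giving $\expt{|\cI|}\le\cO(\log n)\cdot\nu_1(\cG)$ with $E\subseteq R(\cG,\cI)$.

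The main obstacle I anticipate is the bookkeeping in the middle step: ensuring that summing the per-level component counts does not introduce a spurious extra $\log n$ on top of the $\log n$ from the recursion depth. Resolving this cleanly requires either (i) a charging scheme that maps visited recursion nodes injectively into components of one fixed essential graph (so the $\nu_1(\cG,T)$ bound applies once, not per level), or (ii) a careful accounting showing the $\log n$ depth and the $\nu_1(\cG,T)$ component bound combine multiplicatively exactly once — together with the $\log\omega(\cG)$ from $\msep$ — to give the stated $\cO(\log n\cdot\log\omega(\cG))\cdot\nu_1(\cG,T)$. The polynomial runtime and the correctness guarantee $T\subseteq R(\cG,\cI)$ are then routine, following from the polynomial runtime of $\msep$ and the termination condition of the recursion.
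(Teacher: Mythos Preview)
Your general-case plan is the paper's proof, but you needlessly second-guess the correct accounting. The bound you first write down --- at most $\nu_1(\cG,T)$ components containing a target edge at each level (Lemma~\ref{lem:lb} applied with the cumulative interventions $\cI_i$), $\cO(\log n)$ levels, and $\cO(\log\omega(\cG))$ expected interventions per component from Theorem~\ref{thm:meekseparator} --- multiplies directly to $\cO(\log n\cdot\log\omega(\cG))\cdot\nu_1(\cG,T)$, which \emph{is} the stated bound. There is no ``extra $\log n$'': both logarithmic factors are supposed to appear, one from the recursion depth and one from the per-component $\msep$ cost. The paper does exactly this per-iteration accounting, showing $\expt{|\cJ_i|}\le\cO(\log\omega(\cG))\cdot\nu_1(\cG,T)$ and summing over $\cO(\log n)$ iterations; no charging scheme or single-essential-graph injection is needed, and your option~(ii) is already the whole argument.

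For $T=E$ your route diverges from the paper's. The paper does \emph{not} change the algorithm (it still calls $\msep$ on each component); it only sharpens the lower-bound side of the per-iteration comparison. Specifically, it invokes the result of \cite{squires2020active} that fully orienting a chain component $\cH$ requires $\Omega(\omega(\cH))$ interventions, so the per-iteration cost $\sum_{\cH}\cO(\log\omega(\cH))$ is dominated by $\sum_{\cH}\Omega(\omega(\cH))\le\cO(\nu_1(\cG))$, giving $\cO(\nu_1(\cG))$ per iteration and $\cO(\log n)\cdot\nu_1(\cG)$ overall. Your idea of intervening on the entire $\nicefrac12$-clique separator could in principle be made to work against the same lower bound (its cost $|K|\le\omega(\cH)$ is likewise absorbed), but that is a different mechanism from the one the paper uses, and your appeals to the vertex-cover characterization or an unspecified telescoping do not by themselves supply the needed $\Omega(\omega(\cH))$ comparison.
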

Importantly, our result provides the \emph{first} known competitive ratio with respect to the subset verification number $\nu_{1}(\cG,T)$. Prior to our work, a full-graph identification algorithm that is competitive with respect to $\nu_{1}(\cG)=\nu_{1}(\cG,E)$ was provided by \cite{choo2022verification}. $\nu_{1}(\cG)$ is not a valid lower bound for the subset search problem. In particular, the value of $\nu_{1}(\cG,T)$ for certain $T$ could be substantially smaller than $\nu_{1}(\cG)$. For instance, if $\cG$ is a clique and $T$ is an edge incident to a particular node, then $\nu_{1}(\cG,T)=1$ while $\nu_{1}(\cG)=\nicefrac{n}{2}$. Hence, the benchmark based on the full-graph verification number can be significantly weaker compared to that based on the subset search verification number.

Additionally, for the subset search problem, \cite{choo2023subset} showed that there does \emph{not} exist any \emph{deterministic} algorithm that achieves a competitive ratio better than $\cO(n)$ with respect to $\nu_{1}(\cG,T)$. Therefore, our result shows that subset search is part of the large class of problems in algorithm design where \emph{randomization} substantially helps. Finally, when $T=E$, we get a $\cO(\log n)$ approximation for recovering the entire DAG, which matches the current best approximation ratio by \cite{choo2022verification}.

\noindent\textbf{Causal Matching.}~~
Another application of our Meek separator result is for solving the causal matching problem. We consider the same setting as \cite{zhang2021matching} (details provided in Section~\ref{sec:causal_state}). In \cite{zhang2021matching}, it was shown that causal mean matching has a unique solution and can be solved by iteratively finding source vertices\footnote{$v$ is a source vertex if and only if it has no parents.} of induced subgraphs of the underlying DAG (Lemma 1 and Observation 1 in \cite{zhang2021matching}). We therefore provide an algorithm to find source vertices of any DAG, which can be used in the iterative process. Our algorithm uses the Meek separator result and is given in Algorithm~\ref{alg:findsource}; the iterative process of using this algorithm to solve causal mean matching is provided in Section~\ref{sec:causal_state} and Algorithm~\ref{alg:meanmatch} in Appendix~\ref{app:meanmatch}.

Our analysis for causal mean matching consists of two layers. We first establish an upper bound on the number of interventions required in Algorithm~\ref{alg:findsource} to identify a source vertex. Then we use this result within the iterative process to derive an upper bound on the number of interventions needed in Algorithm~\ref{alg:meanmatch} to solve causal mean matching. 

\begin{restatable}[Source finding]{lemma}{lemsrc}\label{lem:src}
    Let $\cG=(V,E)$ be a DAG and $U \subseteq V$ be a subset of vertices. Algorithm~\ref{alg:findsource} that takes essential graph $\cE(\cG)$ and $U$ as input, runs in polynomial time and adaptively intervenes on a set of atomic interventions $\cI\subset V$, identifies a source vertex of the induced subgraph $\cG[U]$ with $\expt{|\cI|}\leq \cO\big(\log n\cdot\log \omega(\cG)\big)$.
\end{restatable}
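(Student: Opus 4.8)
The plan is a divide-and-conquer recursion driven by the Meek separator routine $\msep$ of Theorem~\ref{thm:meekseparator}, where I track a single chain component that provably still contains a source of $\cG[U]$ and halve its size every round. I would maintain the invariant that $\cH$ is a chain component of the current essential graph with $V(\cH)\cap U\neq\varnothing$ such that \emph{every} source of the induced DAG $\cG[V(\cH)\cap U]$ is a source of $\cG[U]$. Under this invariant it suffices to return a source of $\cG[V(\cH)\cap U]$: when $|V(\cH)\cap U|=1$ the single vertex is the answer, and more generally once $|V(\cH)|=\cO(1)$ one extra block of $\cO(1)$ atomic interventions fully orients $\cH$ and exposes a source.

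To set up the invariant I would start from $\cE(\cG)=\cE_\varnothing(\cG)$, form the condensation of its chain components (acyclic, since every essential graph is a chain graph), and pick $\cH$ to be a chain component intersecting $U$ that is minimal in the condensation order among all chain components intersecting $U$. Because every edge inside a chain component is undirected, a $U$-parent of a vertex in $V(\cH)\cap U$ would have to sit in a different $U$-intersecting component joined to $\cH$ by an inter-component arc; minimality of $\cH$ rules this out, so any source of $\cG[V(\cH)\cap U]$ has no $U$-parent whatsoever. This selection uses only $\cE(\cG)$ and $U$. Each round then runs $\msep$ on $\cH$ treated as its own essential-graph instance, performing on $\cG$ the atomic interventions $\msep$ requests; by the component-independence of $\cI$-essential graphs \cite{hauser2014two} these reveal inside $V(\cH)$ exactly the orientations $\msep$ expects, so its guarantee applies and, at expected cost $\cO(\log\omega(\cH))\le\cO(\log\omega(\cG))$, all chain components of the updated essential graph lying inside $V(\cH)$ get size at most $\nicefrac{1}{2}\,|V(\cH)|$. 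I would then re-apply the selection \emph{inside} $V(\cH)$: among the new chain components meeting $U$, take one, $\cH'$, minimal in the restricted condensation. The same minimality argument --- now also excluding an arc into $V(\cH')\cap U$ from any separator vertex, which after being intervened forms its own singleton component --- shows every source of $\cG[V(\cH')\cap U]$ is a source of $\cG[V(\cH)\cap U]$, hence of $\cG[U]$, so the invariant is restored with $\cH\leftarrow\cH'$.

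For the count: $|V(\cH)|$ at least halves each round and starts at most $n$, so there are at most $\log_2 n$ rounds \emph{deterministically}; each round costs $\cO(\log\omega(\cG))$ interventions in expectation and the terminal cleanup costs $\cO(1)$, so by linearity of expectation $\expt{|\cI|}\le\cO(\log n\cdot\log\omega(\cG))$. Polynomial running time is immediate since each of the $\cO(\log n)$ calls to $\msep$ and each condensation computation is polynomial, and only atomic interventions are used.

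I expect the crux to be the selection step: deciding correctly, \emph{from the observed orientations alone and without knowing $\cG$}, which freshly created smaller component still hosts a source of $\cG[U]$, and checking that the invariant survives each round. The leverage making this work is that arcs never occur inside a chain component, so ``$v$ has a $U$-parent'' is always witnessed by an inter-component arc, which is exactly what makes the condensation-minimality rule both computable and provably correct; and that interventions inside $\cH$ do not disturb orientations in other chain components, which legitimizes running $\msep$ on $\cH$ as a stand-alone instance. A minor check is that the $\nicefrac{1}{2}$-separation of $\msep$ is preserved when its $\le 2$ separator vertices are intervened atomically, since that can only refine the essential graph and shrink components further.
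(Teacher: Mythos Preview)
Your proposal is correct and mirrors the paper's proof: both repeatedly call $\msep$ on the current chain component, recurse into a $U$-intersecting sub-component that is minimal in the condensation (equivalently, one with no incoming directed path from another $U$-intersecting component), and bound the total cost as $\cO(\log n)$ rounds times $\cO(\log\omega(\cG))$ expected interventions per round. The only cosmetic difference is that the paper states the invariant as ``$\cH_i$ contains a source of $U$'' and delegates the inter-component argument to a separate lemma (\Cref{lem:source_node}) about directed paths between chain components, whereas you phrase the invariant as ``every source of $\cG[V(\cH)\cap U]$ is a source of $\cG[U]$'' and argue directly via single inter-component arcs; your route certifies the ``no $U$-parent'' property that the lemma literally asks for, while the paper's path-based lemma also yields the stronger ``no $U$-ancestor'' property used downstream in causal mean matching.
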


\begin{restatable}[Causal mean matching]{theorem}{thmmeanmatch}\label{thm:mean_matching}
Let $\cG$ be a DAG and $\cI^*$ be the unique solution to the causal mean matching problem with desired mean $\mu^*$. Algorithm~\ref{alg:meanmatch} that takes $\cE(\cG)$ and $\mu^*$ as input, runs in polynomial time and adaptively intervenes on set $\cI\subseteq V$, identifies $\cI^*$ with
$\mathbb{E}[|\cI|]\leq \cO\big(\log n\cdot \log \omega(\cG)\big)\cdot |\cI^*|~.$
\end{restatable}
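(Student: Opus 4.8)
The plan is to reduce the causal mean matching problem to a sequence of source-finding subproblems, and then charge the interventions spent by Algorithm~\ref{alg:meanmatch} against the interventions in the optimal solution $\cI^*$ on a per-source basis. First I would recall the structural characterization from \cite{zhang2021matching}: the unique optimal solution $\cI^*$ is obtained by an iterative peeling process, where at each step one identifies the source vertex of the induced subgraph $\cG[U]$ on the not-yet-processed vertices $U$ (starting with $U = V$), decides based on $\mu^*$ and the already-recovered structure whether that source must be intervened on, and then removes it from $U$. The key point is that $|\cI^*|$ equals the number of such peeling steps at which an intervention is committed, and that the whole process requires correctly identifying at most $|\cI^*|$ relevant source vertices — more precisely, we only need to run the source-finding routine until we have discovered all vertices of $\cI^*$, because once $\mu^*$ is matched we can stop (this is where I would lean on Lemma~1 and Observation~1 of \cite{zhang2021matching}, which are assumed available).

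Next I would invoke Lemma~\ref{lem:src}: each call to Algorithm~\ref{alg:findsource} on the current residual vertex set $U$ identifies a source vertex of $\cG[U]$ using an expected $\cO(\log n \cdot \log\omega(\cG))$ atomic interventions. Since $\omega(\cG[U]) \leq \omega(\cG)$ and $|U| \leq n$ throughout, this bound holds uniformly across all the source-finding calls. The number of source-finding calls needed before Algorithm~\ref{alg:meanmatch} has recovered all of $\cI^*$ is exactly $|\cI^*|$ — I would argue that the peeling order can be arranged (or the stopping rule set up) so that each newly identified source either lies in $\cI^*$ or is a vertex whose processing requires no new structural information beyond what the already-performed interventions and Meek rules provide; the non-$\cI^*$ sources are "free" in the sense that identifying the next relevant source and deciding its membership does not cost additional interventions beyond those already charged. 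Summing the expected cost over the $|\cI^*|$ source-finding invocations and using linearity of expectation gives $\mathbb{E}[|\cI|] \leq |\cI^*| \cdot \cO(\log n \cdot \log\omega(\cG)) = \cO(\log n \cdot \log\omega(\cG)) \cdot |\cI^*|$. Polynomial running time follows since each of the polynomially many source-finding calls runs in polynomial time by Lemma~\ref{lem:src}, and the bookkeeping between calls (updating $U$, applying Meek rules, checking the mean condition) is polynomial.

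The main obstacle I anticipate is the charging argument in the middle step: carefully showing that the number of source-finding calls that actually incur interventions is bounded by $|\cI^*|$ rather than by the number of peeling steps (which could be as large as $n$). This requires a precise account of which vertices in the peeling sequence are "cheap" — intuitively, once a source vertex $v$ is reached whose parents have all been determined and which is not in $\cI^*$, we should be able to certify $v \notin \cI^*$ and move on without fresh interventions, because $\cG[U]$'s source structure is already revealed by prior interventions plus Meek-rule propagation. Making this rigorous means tracking exactly how the $\cI$-essential graph evolves across calls and confirming that Algorithm~\ref{alg:findsource}, when the source is already oriented as a source in the current essential graph, returns immediately with zero new interventions. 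A secondary subtlety is ensuring the expectation bound composes correctly under adaptivity: since later calls depend on the random outcomes of earlier ones, I would phrase the argument as a bound on conditional expectations — conditioned on any history, the next relevant source-finding call costs $\cO(\log n \cdot \log\omega(\cG))$ in expectation — and then apply the tower property, which handles the adaptive dependence cleanly.
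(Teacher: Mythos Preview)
Your approach is the same as the paper's: bound the number of source-finding calls by $|\cI^*|$ and charge each call $\cO(\log n\cdot\log\omega(\cG))$ expected interventions via Lemma~\ref{lem:src}, then sum by linearity (the tower-property remark for adaptivity is fine and the paper leaves it implicit).

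The ``main obstacle'' you flag, however, is not actually there, and worrying about a length-$n$ peeling sequence is a detour. In Algorithm~\ref{alg:meanmatch} the set $T$ at each outer iteration consists precisely of the vertices whose means still mismatch after applying the partial solution built so far; any source of $\cG[T]$ \emph{must} lie in $\cI^*$, because no intervention on a non-ancestor can change its mean and its current mean is wrong. Hence every outer iteration contributes at least one new element to $\cI^*$, so the outer loop runs at most $|\cI^*|$ times directly --- there is no need to argue that ``non-$\cI^*$ sources are free'' or to track Meek-rule propagation between calls. The paper's proof is accordingly two sentences: outer loop $\le |\cI^*|$ rounds, each round $\cO(\log n\cdot\log\omega(\cG))$ by Lemma~\ref{lem:src}.
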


Note that for causal mean matching with unique solution $\cI^*$, $|\cI^*|$ provides a trivial lower bound on the number of interventions required to match the mean. Therefore this result shows that our algorithm achieves a logarithmic approximation to the optimal required number of interventions.

Of note, this is the \emph{first} \emph{average-case} competitive algorithm with respect to the instance-based lower bound $|\cI^*|$. Prior to our work, \cite{zhang2021matching} provided an efficient algorithm, which is $\log(n)$-competitive with respect to any algorithm in the \emph{worst case}. Such worst-case analysis is equivalent to the scenario where there exists an adaptive adversary when running the algorithm. This may be limited as it relies on extreme cases that may not accurately represent real-world scenarios. Furthermore, the number of interventions required by any algorithm in the worst case can be much larger than the instance-based lower bound $|\cI^*|$. For example, if $\cG$ is a clique and $\cI^*$ is an atomic intervention on its source vertex, then $|\cI^*|$ is $1$ but the number of interventions required by any algorithm in the worst case is $n$.\footnote{This can be deduced by using Lemma 5 in \cite{zhang2021matching}.} 




\section{Algorithm for Meek Separator}\label{sec:meek_sep}


 
For any general DAG $\cG$, we consider the largest connected component $\cH\in CC(\cE(\cG))$. It is well known that $\cH$ is a moral DAG. If $\cI$ is a $1/2$-Meek separator for $\cH$, we show that $\cI$ also serves as a $1/2$-Meek separator for $\cG$ (Appendix~\ref{app:meeksep}). Therefore, for the remainder of the section, we make the assumption that $\cG$ is a moral DAG without loss of generality. 


\subsection{Existence of Size-2 Meek Separator}

For any vertex $v \in V(\cG)$, let $A_{v}=V(\cG) \backslash \Des[v]$ and $B_{v}=\Des(v)$. Note that $v \notin A_v$ and $v \notin B_v$, therefore $|A_v|+|B_v|+1=|V(\cG)|$. At the heart of our algorithm is the following result which shows the existence of a Meek separator of size at most $2$ with some nice properties. 
\begin{restatable}[Meek separator]{lemma}{lemmeeksep}\label{lem:meeksep}
    Let $\cG$ be a moral DAG and $K$ be a $\nicefrac{1}{2}$-clique separator of $\cG$. There exists a vertex $u \in K$ satisfying the \textup{constraints} $|A_{u}| \leq \nicefrac{|V(\cG)|}{2}$ and $|A_{v}| > \nicefrac{|V(\cG)|}{2}$ for all $v \in \Des(u) \cap K$. Furthermore, such a vertex $u$ satisfies one of the two \textup{conditions}: 1). either $u$ is a sink vertex\footnote{$u$ is a sink vertex if and only if it has no children.} of $\cG[K]$, or 2). there exists a vertex $x$ such that, $x \in \Des(u) \cap K$ and $\big((V(\cG) \backslash \Des[x]) \cap \Des(u)\big) \cap K =\varnothing$ (i.e., $x$ and $u$ are consecutive vertices in the valid permutation of clique $K$). In both the cases respectively, either $\{ \{u\}\}$ or $\{ \{u\},\{x\}\}$ is a $\nicefrac{1}{2}$-Meek separator.
\end{restatable}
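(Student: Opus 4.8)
The plan is to first establish the existence of the vertex $u \in K$ with the stated property, and then analyze the two structural cases to show that intervening on $\{u\}$ (or $\{u,x\}$) yields a $\nicefrac12$-Meek separator. For the existence of $u$: since $K$ is a clique, it has a unique topological ordering $v_1, v_2, \dots, v_m$ (with $v_i \to v_j$ for $i < j$), and along this ordering the sets $A_{v_i} = V(\cG) \setminus \Des[v_i]$ can only grow, because $\Des[v_{i+1}] \subseteq \Des[v_i]$ whenever $v_i \to v_{i+1}$. Hence there is a well-defined threshold index: let $u = v_i$ be the \emph{last} vertex in the ordering with $|A_{v_i}| \leq \nicefrac{|V(\cG)|}{2}$ (this set is nonempty because the source $v_1$ of $K$ satisfies $|A_{v_1}| = |V(\cG)| - |\Des[v_1]| \leq |V(\cG)| - |K| \leq \nicefrac{|V(\cG)|}{2}$, using that $K$ is a clique so $K \subseteq \Des[v_1]$ and $|K| \geq |V(\cG)|/2$ since $K$ is a $\nicefrac12$-clique separator... actually more carefully, $|A_{v_1}|, |B_{v_1}| \le |V|/2$ follows from the $\nicefrac12$-separator property combined with $K\subseteq \Des[v_1]$). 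Then every $v \in \Des(u) \cap K$ comes strictly later in the ordering, so $|A_v| > \nicefrac{|V(\cG)|}{2}$ by maximality of the index of $u$.

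Next I would split into the two cases. \textbf{Case 1:} $u$ is a sink of $\cG[K]$, i.e.\ $\Des(u) \cap K = \varnothing$. The key fact (to be invoked from the Meek-rule machinery / properties of interventions) is that intervening on $\{u\}$ orients every edge incident to $u$, and moreover, since $\cG$ is moral, the recovered oriented arcs separate the unoriented part into the component inside $\Des(u)$ and the component inside $V(\cG) \setminus \Des[u]$ (no unoriented edge crosses between $B_u = \Des(u)$ and $A_u$ survives, as any such crossing edge would have to pass through... here I would appeal to the structure of $\cE_{\{u\}}(\cG)$: all edges between $\Des(u)$ and its complement get oriented because $u$'s orientation propagates via Meek rules down the descendant structure). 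So the surviving chain components lie within $A_u$ or within $B_u$. We have $|A_u| \leq \nicefrac{|V(\cG)|}{2}$ directly. For $B_u = \Des(u)$: because $u$ is a sink of $\cG[K]$, $\Des(u) \cap K = \varnothing$, so $\Des(u)$ contains no vertex of the clique separator; by the $\nicefrac12$-clique-separator property applied to the partition induced by removing $K$, $\Des(u)$ is confined to one side and hence $|B_u| = |\Des(u)| \leq \nicefrac12 |V(\cG)|$. Thus $\{\{u\}\}$ is a $\nicefrac12$-Meek separator.

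\textbf{Case 2:} $u$ is not a sink of $\cG[K]$. Let $x = v_{i+1}$ be the immediate successor of $u$ in the clique ordering, so $x \in \Des(u) \cap K$ and no vertex of $K$ lies strictly between them; this gives $\big((V(\cG)\setminus\Des[x]) \cap \Des(u)\big) \cap K = \varnothing$ as claimed. Now intervene on both $u$ and $x$. Intervening on $u$ confines surviving components to $A_u$ and $B_u = \Des(u)$; the $A_u$ side is already $\leq \nicefrac12|V(\cG)|$. For the $B_u = \Des(u)$ side, intervening additionally on $x \in \Des(u)$ splits $\Des(u)$ further into $\Des(u) \cap \Des[x]$ and $\Des(u) \setminus \Des[x]$. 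We have $\Des[x] \subseteq \Des[u]$, and $x$ comes after $u$ in the ordering so $|A_x| > \nicefrac12|V(\cG)|$, i.e.\ $|\Des[x]| < \nicefrac12|V(\cG)|$; hence $|\Des(u)\cap\Des[x]| \le |\Des[x]| - 1 < \nicefrac12|V(\cG)|$. For the other piece, $\Des(u)\setminus\Des[x]$ contains no vertex of $K$ (that is exactly the stated condition), so by the $\nicefrac12$-clique-separator property it lies on one side of $K$ and thus has size $\leq \nicefrac12|V(\cG)|$. Therefore $\{\{u\},\{x\}\}$ is a $\nicefrac12$-Meek separator.

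The main obstacle I anticipate is the claim that after intervening on $u$ (resp.\ $u$ and $x$), \emph{no unoriented edge of the $\{u\}$-essential graph crosses between $\Des(u)$ and its complement} — i.e.\ that the surviving chain components are genuinely confined to $A_u$ and $B_u$ separately. This requires a careful argument using the Meek rules and the morality of $\cG$ (likely: an edge $a \sim b$ with $a \in A_u$, $b \in \Des(u)$ would, together with the oriented arcs around $u$ on the $a$–$u$–$b$ path, trigger a Meek-rule orientation; one must rule out that $a \sim u$ or handle the no-v-structure condition). I would isolate this as the crux lemma, probably citing or adapting the known fact that in a moral DAG, intervening on $v$ orients all edges between $\Des(v)$ and $V\setminus\Des[v]$ (this type of statement appears in the verification-number literature, e.g.\ \cite{choo2022verification}), and the rest of the proof is then bookkeeping with the clique ordering and the separator size bounds.
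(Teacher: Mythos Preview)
Your overall strategy matches the paper's proof: the same monotonicity of $|A_{v_i}|$ along the clique's topological order, the same threshold vertex $u$, the same case split, and your anticipated ``crux lemma'' is exactly Lemma~\ref{lem:tt} in the paper (every chain component of $\cE_{\{v\}}(\cG)$ lies entirely in $A_v$, in $B_v$, or equals $\{v\}$).

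There is, however, a real gap in how you bound the pieces disjoint from $K$. In Case~1 you claim that $\Des(u)$, being disjoint from $K$, is ``confined to one side'' of the separator and hence $|B_u|=|\Des(u)| \leq \nicefrac{|V(\cG)|}{2}$. This is false: $\Des(u)$ need not be connected, so it can straddle both sides of $K$. Concretely, take $V=\{1,2,3,4,5\}$ with arcs $1\to 2\to 4$ and $1\to 3\to 5$ (a moral DAG) and $K=\{1\}$; then $K$ is a $\nicefrac12$-clique separator, $u=1$ is the sink of $\cG[K]$, yet $|\Des(1)|=4 > \nicefrac{5}{2}$. The same issue recurs in Case~2 for the region $\Des(u)\setminus\Des[x]$. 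The paper never bounds $|\Des(u)|$ itself; instead it observes that each chain component $\cH$ of $\cE_{\{u\}}(\cG)$ (resp.\ $\cE_{\{u,x\}}(\cG)$) that falls inside the relevant region is by definition a \emph{connected} subgraph of $\cG$ with $V(\cH)\cap K=\varnothing$, and then applies Lemma~\ref{lem:connectedsize} (any connected subgraph avoiding an $\alpha$-separator has at most $\alpha|V(\cG)|$ vertices). Your existence step has the same flaw in embryo: after retracting the incorrect claim $|K|\geq \nicefrac{|V(\cG)|}{2}$, you assert $|A_{v_1}|\leq \nicefrac{|V(\cG)|}{2}$ ``from the separator property combined with $K\subseteq\Des[v_1]$'', but this again needs $A_{v_1}$ to be connected; the paper supplies a short morality argument for that (two parents of $v_1$ in disconnected pieces of $A_{v_1}$ would form a v-structure at $v_1$).
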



\textbf{Proof Sketch.} Here we present an overview of the proof, focusing solely on the case where $u$ is not a sink node of $K$, since this case encompasses all the key concepts. To demonstrate the existence (Lemma \ref{lem:meeksep}), we begin by establishing the following crucial result.

\begin{restatable}[Connected components]{lemma}{lemtt}\label{lem:tt}
Let $\cG$ be a moral DAG and $v \in V(\cG)$ be an arbitrary vertex. Any connected component $\cH \in CC(\cE_{v}(\cG))$ satisfies one of the following conditions: $V(\cH)=v \text{ or }V(\cH) \subseteq B_v \text{ or }V(\cH) \subseteq A_v~.$
\end{restatable}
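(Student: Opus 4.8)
Recall that $A_v = V(\cG)\setminus\Des[v]$ and $B_v=\Des(v)$, so $\{v\}$, $A_v$, $B_v$ partition $V(\cG)$. The plan is to reduce the statement to a single orientation claim and then prove that claim by induction. First, intervening on $\{v\}$ orients every edge incident to $v$ (consistently with $\cG$), so $v$ is isolated in the undirected part of $\cE_v(\cG)$ and $\{v\}$ is its own chain component; it therefore suffices to show that \emph{no undirected edge of $\cE_v(\cG)$ joins a vertex of $A_v$ with a vertex of $B_v$}, because then connectivity forces every other chain component to lie wholly inside $A_v$ or wholly inside $B_v$. Next I would record two easy observations: any edge of $\cG$ between $a\in A_v$ and $b\in B_v$ must be oriented $a\to b$ in $\cG$ (if it were $b\to a$ then $a\in\Des(b)\subseteq\Des(v)$, contradicting $a\in A_v$); and $\cE_v(\cG)$ has the same skeleton as $\cG$, contains no v-structure absent from $\cG$, and, since $\cG$ is moral, its set of oriented arcs is exactly the closure of the arcs incident to $v$ under the Meek rules (Appendix~\ref{sec:appendix-meek-rules}). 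So the lemma reduces to: every arc $a\to b$ of $\cG$ with $a\in A_v$, $b\in B_v$ appears oriented in $\cE_v(\cG)$.

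For this I would induct on $d(b)$, the length of a shortest directed path from $v$ to $b$ in $\cG$ (finite and at least $1$ for $b\in B_v$), proving two statements together for every $b\in B_v$: (a) every arc $p\to b$ of $\cG$ with $p\in A_v$ is oriented in $\cE_v(\cG)$; and (b) for every shortest directed path from $v$ to $b$, its second-to-last vertex $w$ satisfies $w\to b$ oriented in $\cE_v(\cG)$. The two structural facts I would use repeatedly are morality (the parents of any vertex form a clique, since a non-adjacent pair of parents is a v-structure) and the Meek rules R1 ($a\to b$, $b\sim c$, $a\not\sim c\Rightarrow b\to c$) and R2 ($a\to b\to c$, $a\sim c\Rightarrow a\to c$), which $\cE_v(\cG)$ is closed under. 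In the base case $d(b)=1$ the arc $v\to b$ is oriented by the intervention, which gives (b) (here $w=v$); for (a), a parent $p\in A_v$ of $b$ is adjacent to $v$ because $\Pa(b)$ is a clique, the edge $p\sim v$ is oriented $p\to v$ in $\cG$ since $p\notin\Des[v]$ (and hence is oriented by the intervention), and then R2 applied to $p\to v\to b$ with $p\sim b$ orients $p\to b$.

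For the inductive step with $d(b)=k\ge 2$, fix any shortest path $v=u_0\to u_1\to\cdots\to u_k=b$, set $w=u_{k-1}$ and $w'=u_{k-2}$, and note that $d(w)=k-1$ and that $w'\not\sim b$ (a chord $w'\to b$ would create a directed path from $v$ to $b$ of length $k-1<k$). Then $u_0\to\cdots\to u_{k-1}$ is a shortest path to $w$, so by the inductive hypothesis (b) for $w$ the arc $w'\to w$ is oriented; since $w'\not\sim b$, Meek R1 on $w'\to w\sim b$ orients $w\to b$, establishing (b) for $b$ (the path was arbitrary). For (a), let $p\in A_v$ be a parent of $b$; then $p\sim w$ (both are parents of $b$), this edge joins $A_v$ with $B_v$ so it is $p\to w$ in $\cG$, and since $d(w)=k-1<k$ the inductive hypothesis (a) for $w$ orients $p\to w$; combined with $w\to b$ from part (b) and $p\sim b$, Meek R2 orients $p\to b$. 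This closes the induction, hence every edge of $\cG$ between $A_v$ and $B_v$ is oriented in $\cE_v(\cG)$, which, as explained above, proves the lemma.

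The step I expect to be the main obstacle is choosing the induction hypothesis correctly: one cannot orient \emph{all} edges entering a vertex of $B_v$ (edges internal to $B_v$ may genuinely stay undirected, e.g., when $v$ has two children joined by an edge), only those entering from $A_v$, and the auxiliary claim (b) is exactly the extra handle needed to start the propagation. The companion subtlety is the observation that a shortest path has no chord, which makes Meek R1 applicable at the first step and ultimately lets morality push the orientation across the clique of parents via Meek R2; I would also be careful to invoke cleanly the standard characterization of interventional essential graphs (Meek-closedness and preservation of v-structures) for a moral $\cG$.
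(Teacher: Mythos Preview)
Your proposal is correct, and it takes a genuinely different route from the paper. The paper's proof is a short contradiction argument that appeals to the interval characterization of orienting interventions (Theorem~10 of \cite{choo2023subset}, restated as \Cref{thm:orienting-vertices-form-an-interval}): for any cross-edge $c\to d$ with $c\in A_v$, $d\in B_v$, the set of single vertices whose intervention orients $c\to d$ is $\Des[w]\cap\Anc[d]$ for some $w\in\Anc[c]$, and one checks $v$ lies in this set. Your argument is instead a self-contained induction on the shortest-path distance from $v$ to $b\in B_v$, using only morality (parents of any vertex form a clique) and Meek rules R1 and R2 to propagate orientations. The paper's route is a two-line application of an external lemma; yours is more elementary and makes the propagation mechanism explicit. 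Your isolation of the auxiliary claim (b), that the last edge of every shortest $v$--$b$ path is oriented, is precisely the extra handle needed to make the induction close, and your identification of this as the main obstacle is accurate. One small elision worth tightening: when you argue $w'\not\sim b$, you should note that the edge, if present, would have to be $w'\to b$ (since $b\to w'$ together with $w'\to w\to b$ is a directed cycle) before invoking the shortest-path contradiction.
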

Returning to Lemma \ref{lem:meeksep}, consider vertices $u$ and $x$ and note that when we intervene on both $u$ and $x$, all the connected components within $CC(\cE_{\{u,x \}}(\cG))$ are either individual nodes or subsets of the subgraph induced by either $A_u$ or $B_x$ or $B_{u} \cap A_x$. Since $|A_u| \leq \nicefrac{|V(\cG)|}{2}$ and $|B_{x}| \leq \nicefrac{|V(\cG)|}{2}$ (as $|A_x|>\nicefrac{|V(\cG)|}{2}$), we can conclude that all the connected components within $A_u$ or $B_x$ have a maximum size of $|V(\mathcal{\cG})|/{2}$. Therefore, all that remains now is to bound the size of connected components that are subsets of $B_{u} \cap A_x$. Notably, these connected components $\mathcal{H}$ have no intersection with the clique $K$ and satisfy the condition $V(\mathcal{H}) \cap V(K) = \varnothing$. To establish a size bound for these connected components, we prove the following result.
\begin{restatable}[Size of connected components]{lemma}{lemconnectedsize}\label{lem:connectedsize}
    Let $\cG$ be a graph and $K$ be an $\alpha$-separator of $\cG$. Suppose $\cH$ is a connected subgraph of $\cG$ and $V(\cH) \cap K=\varnothing$, then $|V(\cH)| \leq \alpha \cdot |V(\cG)|$.
\end{restatable}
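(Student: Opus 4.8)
\textbf{Proof plan for Lemma~\ref{lem:connectedsize}.}
The plan is to unpack the definition of an $\alpha$-separator and then argue that a connected subgraph avoiding the separator must lie entirely on one side of the induced cut. By Definition~\ref{def:traditional-sep}, since $K$ is an $\alpha$-separator of $\cG$, there is a partition $V(\cG) = A \,\dot\cup\, B \,\dot\cup\, K$ such that no edge of $\cG$ joins a vertex of $A$ with a vertex of $B$, and $|A|, |B| \le \alpha \cdot |V(\cG)|$. Since $V(\cH) \cap K = \varnothing$, we have $V(\cH) \subseteq A \cup B$.

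Next I would show that $V(\cH)$ is contained entirely in $A$ or entirely in $B$. Suppose not; then there exist $a \in V(\cH) \cap A$ and $b \in V(\cH) \cap B$. Because $\cH$ is connected, there is a path $a = w_0 \sim w_1 \sim \cdots \sim w_\ell = b$ inside $\cH$. Walking along this path, the first vertex lies in $A$ and the last lies in $B$, so there is some consecutive pair $w_i \sim w_{i+1}$ with $w_i \in A$ and $w_{i+1} \in B$ (here I use that every $w_j$ lies in $A \cup B$, as noted above). This edge belongs to $E(\cH) \subseteq E(\cG)$ and joins a vertex of $A$ with a vertex of $B$, contradicting the defining property of the separator. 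Hence $V(\cH) \subseteq A$ or $V(\cH) \subseteq B$.

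In either case $|V(\cH)| \le \max\{|A|, |B|\} \le \alpha \cdot |V(\cG)|$, which is exactly the claim.

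There is essentially no serious obstacle here: the statement follows directly once one observes that a connected subgraph cannot straddle a vertex cut across which $\cG$ has no edges. The only point requiring a little care is that $\cH$ need not be an induced subgraph, so one must invoke $E(\cH) \subseteq E(\cG)$ explicitly when deriving the contradiction — the absence of $A$--$B$ edges is a property of $\cG$, and it is inherited by every subgraph regardless of which edges the subgraph retains.
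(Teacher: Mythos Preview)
Your proposal is correct and follows essentially the same approach as the paper: the paper argues that removing $K$ leaves $\cH$ intact and hence contained in a single connected component of $\cG\setminus K$, which must lie in one side of the partition and therefore has size at most $\alpha\cdot|V(\cG)|$. Your version simply makes the ``one side of the partition'' step explicit via the path-crossing argument, which is a faithful unpacking of the same idea.
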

Combining all the previous analyses, we conclude that all the connected components within $CC(\mathcal{E}_{\{u,x\}}(\mathcal{\cG}))$ have a maximum size of $|V(\mathcal{\cG})|/{2}$, and the set $\{u,x\}$ is a ${1}/{2}$-Meek separator. 

\subsection{Binary Search Algorithm}

To make the existence result algorithmic, we additionally observe the following structural properties.
\begin{restatable}[Properties of connected components]{lemma}{lemexisgoodsoln}\label{lem:exis_good_soln}
    Consider a moral DAG $\mathcal{G}$ and let $K$ be an $\alpha$-clique separator. Let $v_1, v_2, \dots, v_k$ be the vertices of this clique in a valid permutation. We observe that $|B_{v_{i+1}}| \leq |B_{v_{i}}|$, which in turn implies $|A_{v_{i+1}}| \geq |A_{v_{i}}|$. Additionally, we have
    $|A_{v_1}|\leq \alpha \cdot |V(\cG)|.$
\end{restatable}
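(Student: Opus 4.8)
The plan is to verify each of the three claims separately, all using only the definitions $A_v = V(\cG)\setminus\Des[v]$, $B_v = \Des(v)$, and the fact that $v_1,\dots,v_k$ is a valid (topological) permutation of the clique $K$ inside the moral DAG $\cG$. The first task is to show $|B_{v_{i+1}}| \le |B_{v_i}|$. Since $v_1,\dots,v_k$ respects the topological order of $\cG$ restricted to $K$, every arc inside $K$ goes from a lower-indexed to a higher-indexed vertex; in particular $v_i \to v_{i+1}$ (recall $K$ is a clique, so $v_i \sim v_{i+1}$, and the permutation is valid). Then $\Des[v_{i+1}] \subseteq \Des[v_i]$, because any directed path starting at $v_{i+1}$ extends to a directed path from $v_i$ by prepending the arc $v_i \to v_{i+1}$, and hence $\Des(v_{i+1}) \subseteq \Des(v_i)$, i.e. $B_{v_{i+1}} \subseteq B_{v_i}$. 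Taking cardinalities gives $|B_{v_{i+1}}| \le |B_{v_i}|$.

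The second claim, $|A_{v_{i+1}}| \ge |A_{v_i}|$, then follows immediately from the identity $|A_v| + |B_v| + 1 = |V(\cG)|$ noted just before the lemma (since $v \notin A_v$, $v \notin B_v$, and $A_v,B_v,\{v\}$ partition $V(\cG)$): subtracting, $|A_{v_{i+1}}| - |A_{v_i}| = |B_{v_i}| - |B_{v_{i+1}}| \ge 0$.

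The third claim, $|A_{v_1}| \le \alpha \cdot |V(\cG)|$, is where the $\alpha$-clique-separator hypothesis enters, and I expect this to be the only non-bookkeeping step. Write $V(\cG)$ as the disjoint union of the separator sets: since $K$ is an $\alpha$-separator, there is a partition $V(\cG) = P \,\dot\cup\, Q \,\dot\cup\, K$ with no edge joining $P$ and $Q$ and $|P|, |Q| \le \alpha|V(\cG)|$. Now observe that $A_{v_1} = V(\cG)\setminus\Des[v_1]$ contains no vertex of $K$: indeed $v_1 \notin A_{v_1}$, and for $j \ge 2$ we have $v_1 \to v_j$ (valid permutation on the clique), so $v_j \in \Des(v_1) = B_{v_1}$, hence $v_j \notin A_{v_1}$. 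So $A_{v_1} \subseteq P \cup Q$. I then argue $A_{v_1}$ lies entirely in one of $P$ or $Q$: $A_{v_1}$ induces a connected subgraph in the sense relevant here — more carefully, I would invoke Lemma~\ref{lem:connectedsize}. Actually the cleanest route is to apply Lemma~\ref{lem:connectedsize} directly to each connected component of $\cG[A_{v_1}]$: each such component $\cH$ satisfies $V(\cH)\cap K = \varnothing$, so by Lemma~\ref{lem:connectedsize}, $|V(\cH)| \le \alpha|V(\cG)|$. But this only bounds each component, not $|A_{v_1}|$ itself, so I need the stronger structural fact that $A_{v_1}$ is contained in a single component / single side. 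To get that, note $A_{v_1} = V(\cG)\setminus\Des[v_1]$ is an ancestral-type set and, crucially, in a moral DAG with $v_1$ the topologically-first vertex of the clique, $\cG[A_{v_1}]$ together with $v_1$'s position forces $A_{v_1}$ to sit on one side of the separator — this is exactly the kind of reasoning used in Lemma~\ref{lem:tt}, which tells us each component of $\cE_{v_1}(\cG)$ is contained in $B_{v_1}$, in $A_{v_1}$, or is $\{v_1\}$. I would combine Lemma~\ref{lem:tt} (applied with $v = v_1$) with the no-$P$-$Q$-edge property: the undirected part of $A_{v_1}$ cannot straddle both $P$ and $Q$, so $A_{v_1} \subseteq P$ or $A_{v_1}\subseteq Q$ (after possibly relabeling), giving $|A_{v_1}| \le \alpha|V(\cG)|$.

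The main obstacle is precisely this last point: upgrading the per-component bound of Lemma~\ref{lem:connectedsize} to a bound on all of $A_{v_1}$. If $\cG[A_{v_1}]$ happened to be disconnected with large total size, the naive argument fails; the resolution must use that the separator $K$ disconnects $P$ from $Q$ and that $A_{v_1}$ avoids $K$, so $A_{v_1}$ cannot have pieces on both sides connected through $K$ — but one still needs that $A_{v_1}$ does not independently occupy both $P$ and $Q$. I expect the paper handles this either by choosing the valid permutation so that $v_1$ is adjacent to the relevant side, or by a direct argument that $\Des[v_1] \supseteq K$ plus the separator property pins $A_{v_1}$ to one side; I would follow whichever of these is cleanest, and everything else is the routine cardinality bookkeeping above.
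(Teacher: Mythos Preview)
Your treatment of the first two claims matches the paper's: $v_i \to v_{i+1}$ in the clique forces $\Des(v_{i+1}) \subseteq \Des(v_i)$, hence $|B_{v_{i+1}}| \le |B_{v_i}|$, and the identity $|A_v|+|B_v|+1 = |V(\cG)|$ flips this to $|A_{v_{i+1}}| \ge |A_{v_i}|$. No issues there.

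The gap is in the third claim, and you have correctly located it yourself: you need $A_{v_1}$ to lie entirely on one side of the separator, and per-component application of Lemma~\ref{lem:connectedsize} is not enough. Your suggested patches --- invoking Lemma~\ref{lem:tt}, choosing the permutation so $v_1$ is ``adjacent to the relevant side,'' or hoping that $\Des[v_1]\supseteq K$ plus the separator property suffices --- do not work. Lemma~\ref{lem:tt} only tells you that each chain component after intervening on $v_1$ sits inside $A_{v_1}$ or $B_{v_1}$; it says nothing about $\cG[A_{v_1}]$ being a single piece. And $\Des[v_1]\supseteq K$ together with the separator property still allows $A_{v_1}$ to have disjoint pieces in both $P$ and $Q$.

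What you are missing is the use of the \emph{moral} hypothesis, which you never invoke. The paper shows directly that $\cG[A_{v_1}]$ is connected: if $a,b\in A_{v_1}$ were in different components of $\cG[A_{v_1}]$, then following paths from $a$ and from $b$ to $v_1$ yields two parents $c,d$ of $v_1$ that are not adjacent, i.e.\ a v-structure $c\to v_1\leftarrow d$, contradicting morality. (Equivalently: a connected moral DAG has a unique source $s$; $s\in A_{v_1}$, and for every $a\in A_{v_1}$ the directed $s$-to-$a$ path stays inside $A_{v_1}$ since any vertex of $\Des[v_1]$ on it would force $a\in\Des[v_1]$.) Once $\cG[A_{v_1}]$ is connected and $A_{v_1}\cap K=\varnothing$ (which you already established), Lemma~\ref{lem:connectedsize} gives $|A_{v_1}|\le \alpha|V(\cG)|$ in one shot.
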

To identify the Meek separator set $\{u, x\}$, we simply need to locate two consecutive vertices in the clique where $|A_u| \leq \nicefrac{|V(\cG)|}{2}$ and $|A_{x}|>\nicefrac{|V(\cG)|}{2}$. The aforementioned result indicates that we can utilize standard randomized binary search procedures to find these vertices efficiently.
\begin{algorithm}[ht]
\begin{algorithmic}[1]
\caption{$\msep(\cG, \nicefrac{1}{2})$}
\label{alg:weighted-search}
    \State \textbf{Input}: Essential graph $\cE(\cG)$ of a moral DAG $\cG$.
    \State\textbf{Output}: A $\nicefrac12$-Meek separator $\cI\cup \cJ$.
    \State Let $K$ be a $\nicefrac12$-clique separator (Lemma \ref{thm:chordal-separator}). Initialize $i=0$, $K_{0}=K$ and $\cI_{0} = \varnothing$.
    \State \textbf{while} $K_i$ is non empty \textbf{do}
        \State \hspace{10pt} Let $u_{i}$ be a uniform random vertex from $K_{i}$. Intervene on $u_{i}$.
        \State \hspace{10pt} Find $\cH_{i} \in CC(\cE_{u_i}(\cG))$ such that $|V(\cH_{i})|$ is maximized.
        \State \hspace{10pt} \textbf{if} $|V(\cH_{i})| \leq \nicefrac{|V(\cG)|}{2}$ then \Comment{$u_{i}$ is a $\nicefrac12$-Meek separator.}
        \State \hspace{20pt} Let $u=u_i$, $\cI=\cup_{j=1}^{i} \{\{u_{j}\}\}$, and \textbf{return} $\cI$.
        \State \hspace{10pt} Let $P_{u_{i}}=\{v \in K~|~v \to u_{i} \}$ and $Q_{u_{i}}=\{v\in K~|~v \gets u_{i} \}$.
        \State \hspace{10pt} \textbf{if} there exists a directed path from $u_i$ to $\cH_{i}$ then \Comment{$V(\cH_{i}) \subseteq \Des(u_{i})$ and  $|A_{u_{i}}| \leq \nicefrac{|V(\cG)|}{2}$}
        \State \hspace{20pt} Update $K_{i+1}=K_{i}\cap Q_{u_{i}}$ and $u=u_{i}$.
        \State \hspace{10pt} \textbf{else}
        \State \hspace{20pt} Update $K_{i+1}=K_{i}\cap P_{u_{i}}$ and $x=u_{i}$.
        \State \hspace{10pt} Update $i \gets i+1$.
        \State Let $\cI=\{\{x\}\}\cup_{j=1}^{\textcolor{black}{i-1}} \{\{u_{j}\}\}$.
        \State \textbf{return} $\cI$.
\end{algorithmic}
\end{algorithm}

Figure~\ref{fig:alg1} gives an example of the proposed Algorithm.\footnote{In Appendix~\ref{app:example}, we provide an extended example where the skeleton is not complete, highlighting that the Meek separator can be found by focusing on the $\nicefrac{1}{2}$-clique separator.} In general, at each iteration $i$, we randomly select a vertex $u_i$ and determine the connected component $\cH_i$ with the maximum cardinality. If $|V(\cH_{i})| \leq \nicefrac{|V(\cG)|}{2}$, we have found a Meek separator. Otherwise, we verify if a directed path exists from $u_i$ to $\cH_i$. In appendix (Lemma~\ref{lem:directedpath}), we demonstrate that if such a path exists, then $V(\cH_i) \subseteq \Des(u_i) =B_{u_i}$, implying that $|A_{u_i}| \leq \nicefrac{|V(\cG)|}{2}$ (since $|A_{u_i}|+|B_{u_i}|+1=|V(\cG)|$). To confirm if $u_i$ is the desired vertex $u$, we need to verify that all its descendants satisfy $|A_{v}| > \nicefrac{|V(\cG)|}{2}$. Consequently, we recursively examine the vertices in $K$ that are descendants of $u_i$ in $K_i$. Similarly, if the $u_i$ to $H_i$ path doesn't exist, then Lemma~\ref{lem:directedpath} implies that $V(\cH_i) \subseteq V(\cG) \backslash \Des[u_i] =A_{u_i}$ and $|A_{u_i}| > |V(\cG)|/2$. Therefore, to find the desired vertex $u$, we recursively examine the vertices in $K$ that are ancestors of $u_i$ in $K_i$. A similar analysis holds for the desired vertex $x$, and intuitively we conclude that the above algorithm outputs an intervention set that contains vertices $u$ and $x$ satisfying the conditions of Lemma~\ref{lem:meeksep}. The guarantees of the above algorithm are summarized below.

\begin{figure}[t]
     \centering
     \begin{subfigure}[b]{0.18\textwidth}
         \centering
         \includegraphics[width=0.7\textwidth]{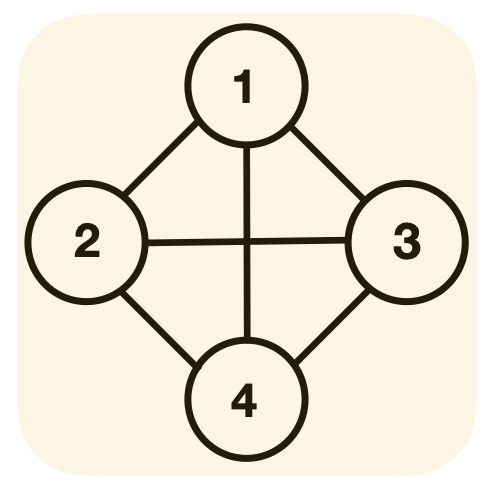}
         \caption{{Iteration 0\\$K_0=\{1,2,3,4\}$.}}
     \end{subfigure}
    \hfill
     \begin{subfigure}[b]{0.23\textwidth}
         \centering
         \includegraphics[width=0.548\textwidth]{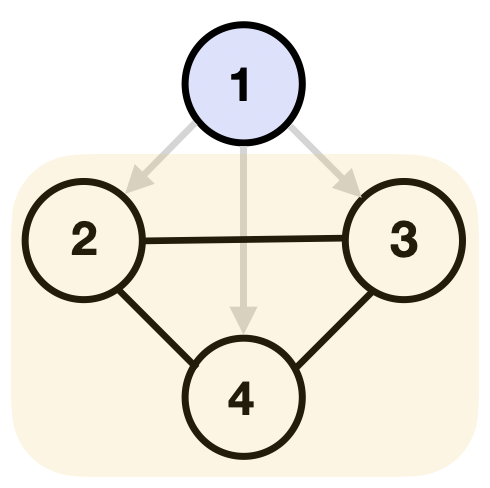}
         \caption{{Iteration 1\\$u_0=1,~K_1=\{2,3,4\}$.}}
     \end{subfigure}
    \hfill
     \begin{subfigure}[b]{0.23\textwidth}
         \centering
         \includegraphics[width=0.548\textwidth]{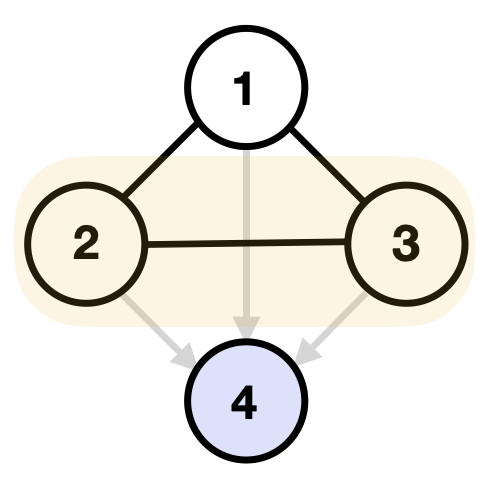}
         \caption{{Iteration 2\\$u_1=4,~K_2=\{2,3\}$.}}
     \end{subfigure}
    \hfill
     \begin{subfigure}[b]{0.18\textwidth}
         \centering
         \includegraphics[width=0.7\textwidth]{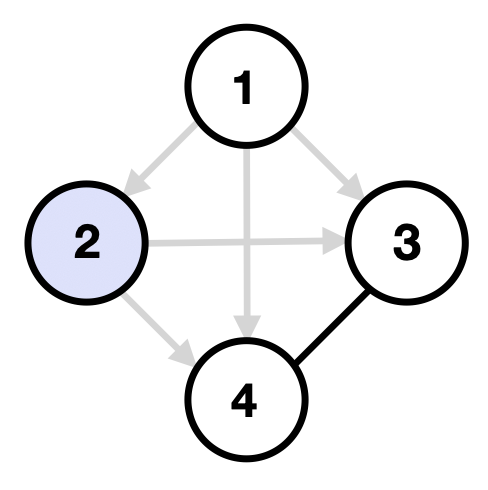}
         \caption{{Iteration 3\\$u_2=2$.}}
     \end{subfigure}
        \caption{{An example of Algorithm 1 finding the Meek separator in the ground-truth DAG in Fig.~\ref{fig:1}a. The sets $K_i$ are highlighted; oriented edges in $\cE_{u_i}(\cG)$ by intervening on $u_i$ are in grey; connected components  in $CC(\cE_{u_i}(\cG))$ are in black. \textbf{(a)} suppose we take $K_0=K=V(\cG)$ as the $\nicefrac12$-clique separator. \textbf{(b)} suppose we pick $u_0=1$ from $K_0$, then $K_1=K_0\cap Q_{u_0}=\{2,3,4\}$. \textbf{(c)} suppose we pick $u_1=4$ from $K_1$, then $K_2=K_1\cap P_{u_1}=\{2,3\}$. \textbf{(d)} suppose we pick $u_2=2$ from $K_2$, then Algorithm 1 terminates (line 7) returning meek separator $\cI=\{2\}$ and $\cJ=\{1,4\}$ that helps find it.}
        }
        \label{fig:alg1}
        \vspace*{-.08in}
\end{figure}

\begin{restatable}[Output of $\msep$]{lemma}{lemmsepoutput}\label{lem:msep_output}
    The algorithm $\msep$ performs at most $\cO(\log |K|)$ interventions {in expectation} and finds a vertex $u \in K$ that is either a $1/2$-Meek separator or satisfies the following conditions: $|A_{u}| \leq \nicefrac{|V(\cG)|}{2}$ and $|A_{v}| > \nicefrac{|V(\cG)|}{2}$ for all $v \in \Des(u) \cap K$.
\end{restatable}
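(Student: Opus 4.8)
The plan is to analyze Algorithm~\ref{alg:weighted-search} as a randomized binary search on the clique $K$, using the monotonicity established in Lemma~\ref{lem:exis_good_soln} to argue both correctness and the expected iteration count. First I would fix a valid permutation $v_1, \dots, v_k$ of $K$ (so that $i<j$ whenever $v_i \to v_j$ is an arc of $\cG[K]$), and observe via Lemma~\ref{lem:exis_good_soln} that the quantity $|A_{v_i}|$ is nondecreasing along this order and that $|A_{v_1}| \le \nicefrac{|V(\cG)|}{2}$. Hence there is a well-defined ``threshold index'' $t$, the largest index with $|A_{v_t}| \le \nicefrac{|V(\cG)|}{2}$; the desired vertex of the lemma statement is exactly $u = v_t$, since every $v \in \Des(u)\cap K$ has index $> t$ and thus $|A_v| > \nicefrac{|V(\cG)|}{2}$. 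The correctness claim then reduces to showing that the $K_i$ produced by the algorithm are nested intervals (in permutation order) that always contain $v_t$, unless the algorithm exits early at line 7–8 having already certified a $\nicefrac12$-Meek separator.

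Next I would verify the invariant ``$v_t \in K_i$ and $K_i$ is an order-contiguous block of $K$'' by induction on $i$, using the appendix result Lemma~\ref{lem:directedpath} (cited in the surrounding text) to interpret the branch at lines 10–13. When $u_i$ is picked and no early exit occurs, $|V(\cH_i)| > \nicefrac{|V(\cG)|}{2}$, and Lemma~\ref{lem:directedpath} tells us either $V(\cH_i)\subseteq B_{u_i}$ (so $|A_{u_i}| \le \nicefrac{|V(\cG)|}{2}$, meaning $\mathrm{index}(u_i) \le t$, and we recurse into $Q_{u_i}$ = strictly-later vertices, which still contains $v_t$) or $V(\cH_i) \subseteq A_{u_i}$ (so $|A_{u_i}| > \nicefrac{|V(\cG)|}{2}$, meaning $\mathrm{index}(u_i) > t$, and we recurse into $P_{u_i}$ = strictly-earlier vertices, again retaining $v_t$). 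In either case $K_{i+1}$ is a strictly smaller contiguous block still containing $v_t$, and the loop terminates when $K_i$ becomes empty; at that point the last vertex routed into the ``$Q$'' branch is $u=v_t$ — actually I need to be slightly careful here and track that the variable $u$ is last updated to the vertex with the largest index among those satisfying $|A_{u_i}|\le\nicefrac{|V(\cG)|}{2}$ that were ever sampled, and that the interval-shrinking forces this to be $v_t$ when $K_i$ empties. If instead some sampled $u_i$ has $|V(\cH_i)| \le \nicefrac{|V(\cG)|}{2}$, then by definition $\{\{u_i\}\}$ (together with the earlier interventions, which only orient more edges and hence only shrink components) is a $\nicefrac12$-Meek separator, and line 8 returns it.

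For the expected runtime, I would use the standard analysis of randomized binary search / randomized quickselect: at each iteration we sample $u_i$ uniformly from $K_i$, and regardless of which branch is taken, $K_{i+1}$ consists of the elements of $K_i$ strictly on one side of $u_i$ in the permutation order (either $P_{u_i}\cap K_i$ or $Q_{u_i}\cap K_i$). Thus $|K_{i+1}|$ is distributed like the size of one side of a uniformly random pivot in a list of size $|K_i|$, so $\E[|K_{i+1}| \mid |K_i|] \le \tfrac{|K_i|-1}{2}$, giving geometric decay in expectation and $\E[\#\text{iterations}] = \cO(\log |K|)$ by, e.g., a potential argument on $\log |K_i|$ or a direct linearity-of-expectation computation; each iteration performs exactly one atomic intervention, so $\E[|\cI|] = \cO(\log|K|)$. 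The main obstacle I anticipate is the bookkeeping of the second bullet: cleanly pinning down that the output vertex $u$ equals the threshold vertex $v_t$ — in particular arguing that the else-branch ($P_{u_i}$) updates can never ``skip past'' $v_t$ and that the final value of $u$ (versus $x$) is the one the lemma wants — which requires carefully stating and maintaining the invariant about which side of $v_t$ each sampled pivot falls on, rather than any delicate probabilistic estimate. The probabilistic part is essentially textbook once the deterministic interval structure is nailed down.
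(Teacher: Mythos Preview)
Your proposal is correct and follows essentially the same approach as the paper: fix the topological order on $K$, identify the threshold vertex $v_t$ via the monotonicity from Lemma~\ref{lem:exis_good_soln}, use Lemma~\ref{lem:directedpath} to interpret the branch at lines 10--13, maintain the invariant that $K_i$ is an order-contiguous block tracking $v_t$, and appeal to the standard randomized binary search bound for the $\cO(\log|K|)$ expectation. One small caveat: the specific inequality $\E[|K_{i+1}|\mid |K_i|]\le (|K_i|-1)/2$ is not true in general for a \emph{targeted} random pivot search (take $v_t$ at an endpoint of $K_i$), so you should instead use the ``middle-half'' argument you allude to---with probability $\ge 1/2$ the pivot lands in the central half of $K_i$ and shrinks it by a constant factor---which is exactly the ``standard analysis'' the paper invokes.
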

All proofs can be found in Appendix~\ref{app:meeksep}. Combining Lemma \ref{lem:msep_output} and Lemma \ref{lem:meeksep} results in Theorem \ref{thm:meekseparator}.


\section{Algorithm for Subset Search}\label{sec:sub_search}

Here we present our algorithm for the subset search problem that proves Theorem~\ref{thm:lb}. Our algorithm is based on the Meek separator subroutine and a description of the algorithm is given in Algorithm~\ref{alg:subset-search}. In the remainder of this section, we present a proof sketch to establish the guarantees.
\begin{algorithm}[tbh]
\begin{algorithmic}[1]
\caption{Atomic adaptive subset search.}
\label{alg:subset-search}
    \State \textbf{Input}: Essential graph $\cE(\cG)$ of a DAG $\cG$ and a set of target edges $T \subseteq E(\cG)$.
    \State \textbf{Output}: Atomic intervention set $\cI$ s.t.\ $T \subseteq R(\cG,\cI)$.
    \State Initialize $i=0$ and $\cI_0 = \varnothing$.
    \State \textbf{while} $T \backslash R(\cG,\cI_{i}) \neq \varnothing$ \textbf{do}
        \State\hspace{10pt}Initialize $\cJ_i \gets \varnothing$
        \State\hspace{10pt}\textbf{for} $\cH \in CC(\cE_{\cI_{i}}(\cG))$ of size $|\cH| \geq 2$ and $E(\cH)\cap T\neq \varnothing$ \textbf{do}
        \State\hspace{20pt}Invoke $\cJ_{\cH} = \msep(\cH,\nicefrac1/2)$.
        Update $\cJ_i = \cJ_{i} \cup \cJ_{\cH}$.
        \State\hspace{10pt}Update $\cI_{i+1} \gets \cI_i \cup \cJ_i$ and $i \gets i+1$.
    \State \textbf{return} $\cI_i$.
\end{algorithmic}
\end{algorithm}

\noindent\emph{Correctness.}
Upon the termination of our algorithm, it is worth noting that all the connected components that encompass the target edges $T$ have a size of $1$. This observation leads to the immediate implication that all the edges belonging to $T$ are oriented.

\noindent\emph{Competitive Ratio.} 
To bound the competitive ratio, we first bound the total cost of our algorithm and then relate it to the subset verification number. To bound the total cost, we first show that the number of outer loops in our algorithm is at most $\cO(\log n)$. This holds because, in each outer loop, the size of the connected components containing the target edges $T$ decreases at least by a multiplicative factor of $\nicefrac12$. After $\cO(\log n)$ such loops, all these connected components will have a size of $1$, implying that all edges in $T$ are successfully oriented, leading to the termination of our algorithm. 

Next, to bound the cost per outer loop, we consider $\cJ_{i}$, which represents the set of interventions performed during the $i$-th loop. It is worth noting that $\cJ_{i}$ is a union of Meek separators $\cJ_{\cH}$ for each connected component $\cH \in CC(\cE_{\cI_{i}}(\cG))$ such that $E(\cH) \cap T \neq \varnothing$. Thus, the cost per loop can be expressed as $|\cJ_{i}|=\sum_{\cH \in CC(\cE_{\cI_{i}}(\cG))}\textbf{1}(E(\cH) \cap T \neq \varnothing) \cdot |\cJ_{\cH}|$. 
By Theorem~\ref{thm:meekseparator}, the Meek separator we obtained for each $\cH$ is of size $\cO(\log \omega(\cH)) \leq\cO(\log \omega(\cG))$. We can conclude that the cost per iteration is at most:
$|\cJ_{i}| \leq \cO(\log \omega(G)) \sum_{\cH \in CC(\cE_{\cI_{i}}(\cG))} \textbf{1}(E(\cH) \cap T \neq \varnothing).$
To relate this cost to the subset verification number, we utilize the lower bound result from Lemma~\ref{lem:lb}, which states that:
$\nu_{1}(\cG,T) \geq \max_{\cI \subseteq V} \sum_{\cH \in CC(\cE_{\cI}(\cG))} \textbf{1}(E(\cH) \cap T \neq \varnothing)~.$
Combining the above equations, we obtain $|\cJ_{i}| \leq \cO(\log \omega(\cG)) \nu_{1}(\cG,T)$. As there are at most $\cO(\log n)$ iterations in total, the total cost of our algorithm can be bounded by: $\cO(\log n \cdot \log \omega(\cG)) \nu_{1}(\cG,T)~.$
We defer the detailed proofs of Theorem~\ref{thm:lb} (upper bound) and Lemma~\ref{lem:lb} (lower bound) to Appendix~\ref{app:d}. 




\section{Algorithm for Causal Mean Matching}\label{sec:causal_state}

Here we study the causal mean matching problem. En route, we provide an algorithm that, given an essential graph $\cE(\cG)$ of a DAG $\cG$ and a subset $U$ of vertices, finds a source vertex within the induced subgraph $\cG[U]$. This source vertex, denoted as $s$, satisfies the property that there exists no vertex $v \in U$ such that $v \in \Anc(s)$. The description is given in Algorithm~\ref{alg:findsource}. This algorithm identifies a source vertex of $U$ in $\cO(\log n\cdot\log \omega(\cG))$ interventions and its guarantees are summarized in Lemma~\ref{lem:src}. Below, we present a concise overview of its proof.
\begin{algorithm}[tbh!]
\begin{algorithmic}[1]
\caption{$\findsrc(\cE(\cG),U)$}
\label{alg:findsource}
    \State \textbf{Input}: Essential graph $\cE(\cG)$ of a DAG $\cG$ and a subset of vertices $U \subseteq V(\cG)$.
    \State \textbf{Output}: Atomic intervention set $\cI$ and a source vertex in $U$.
    \State Initialize $i=0$ and $\cI_{0} = \varnothing$.
    \State Let $C_{0}=\{\cH\in CC(\cE_{\cI_{0}}(\cG))~|~V(\cH) \cap U \neq \varnothing \}$ and let $\cH_{0} \in C$ be a connected component with no incoming directed path from any other component $\cH \in C_{0}$.
    \State \textbf{while} $|V(\cH_{i}) \cap U|>1$ \textbf{do}
        \State \hspace{10pt} Compute $\cJ_{i}=\msep(\cE(\cH_{i}),\nicefrac12)$ and intervene on it.
        \State \hspace{10pt} Let $C_{i+1}=\{\cH \in CC(\cE_{\cJ_{i}}(\cH_{i}))~|~V(\cH) \cap U \neq \varnothing \}$ and let $\cH_{i+1} \in C_{i+1}$ be a connected component with no incoming directed path from any other component $\cH \in C_{i+1}$. 
        \State \hspace{10pt} Let $\cI_{i+1}=\cI_{i} \cup \cJ_{i}$. Increment $i$ by 1.
    \State \textbf{return} $\cI_i$ and $V(\cH_i)$.
\end{algorithmic}
\end{algorithm}

As stated in the description, our algorithm invokes the Meek separator in each iteration and identifies the connected component containing a source vertex $s$ and recurses on it. This connected component can be identified by finding the component that has no incoming directed path from any other components.
%
Then as we invoke the Meek separator in each iteration, the size of the connected component decreases at least by a factor of $\nicefrac12$. Therefore the algorithm terminates in $\cO(\log n)$. Since we perform at most $\cO(\log \omega(\cG))$ interventions in each iteration, the total number of interventions performed by our algorithm is at most $\cO(\log n\cdot\log \omega(\cG))$. This concludes our proof overview for the source-finding algorithm. In the remainder, we use it to solve the causal mean matching problem.

\noindent\emph{Causal Mean Matching.} We consider the same setting as in \cite{zhang2021matching} with atomic interventions, where the goal is to find a set of shift interventions $\cI$ such that the mean of the interventional distribution $\E_{P^\cI}[V]$ matches a desired mean $\mu^*$. An atomic shift intervention set $\cI$ with shift values $\{a_i\}_{i\in \cI}$ modifies the conditional distribution as $P^\cI(v_i=x+a_i\mid v_{\Pa(i)})=P(v_i=x\mid v_{\Pa(i)})$ for $i\in \cI$. In particular, \cite{zhang2021matching} show that there exists a unique solution $\cI^*$ such that $\E_{P^{\cI^*}}[V]=\mu^*$ and to find such $\cI^*$, it suffices to iteratively find the source vertices of all vertices whose means differ from that of $\mu^*$.  The intuition behind this is that (1) intervening on other vertices will not change the mean of the source vertex, and (2) the shift value of the source vertex equals exactly the mean discrepancy with respect to $\mu^*$. Thus, our Algorithm~\ref{alg:findsource} can be used as a subroutine to solve for $\cI^*$ iteratively. We describe the full procedure in Algorithm~\ref{alg:meanmatch} in Appendix~\ref{app:meanmatch}. 
Our analysis of Lemma~\ref{lem:src} is used to derive the guarantee of Algorithm~\ref{alg:meanmatch} in Theorem~\ref{thm:mean_matching}. Details are deferred to Appendix~\ref{app:meanmatch}.


\section{Experiments}
\label{sec:experiments}
Here, we implement our Meek separator to solve for subset search and causal mean matching discussed in the previous sections. Details and extended experiments are provided in Appendix~\ref{app:experiment}.


\noindent\textbf{Subset Search.} For this experiment, we consider the local causal graph discovery problem where the goal is to identify the target edges within an $r$-hop neighborhood of a random vertex $v$. We compare our method in Algorithm~\ref{alg:subset-search} and its variant, \texttt{MeekSep} and \texttt{MeekSep-1}, against four carefully constructed baselines. The variant \texttt{MeekSep-1} runs Algorithm~\ref{alg:subset-search} but checks after performing every intervention inside line 7 and terminates if the subset search problem is solved. The \texttt{Random} baseline intervenes on a randomly sampled vertex at every step and terminates when the subset search problem is solved. The \texttt{Coloring-FG} baseline identifies the full causal graph using \cite{shanmugam2015learning}, where \texttt{Coloring-NI} is the variant that only identifies the subgraph induced by vertices incident to the target edges, similar to the method suggested by \cite{choo2023subset}. Our proposed methods consistently outperform these baselines across different graph sizes in Figure~\ref{fig:2a}. Finally, \texttt{Verification} shows the subset search verification number \cite{choo2023subset} which serves as a lower bound.

\noindent\textbf{Causal Mean Matching.} We consider Erdös-Rényi graphs \cite{erdHos1959renyi} with $50$ vertices where the ground-truth solution $\cI^*$ is randomly sampled from these vertices. We compare our Algorithm~\ref{alg:meanmatch} and its variant, \texttt{MeekSep} and \texttt{MeekSep-1}, against four baselines proposed in \cite{zhang2021matching}. The variant is in the fashion described for the subset search. The \texttt{Random} and \texttt{CliqueTree} baselines use the same backbone as \texttt{MeekSep}, but search for source vertex using randomly sampled interventions and the clique-tree strategy proposed in \cite{zhang2021matching}, respectively. \texttt{Coloring-FG} first identifies the full graph then solves for $\cI^*$, whereas \texttt{Verification} is the lower bound that uses $|\cI^*|$ interventions. The number of extra interventions relative to \texttt{Verification} is shown in Figure~\ref{fig:2b}, where we observe our methods to outperform \texttt{Random} and \texttt{Coloring-FG}.
Empirically, our approach is competitive with the state-of-the-art method \texttt{CliqueTree} while providing far better theoretical guarantees.

\begin{figure}[t]
     \centering
     \begin{subfigure}[b]{0.44\textwidth}
         \centering
    \includegraphics[width=0.85\textwidth]{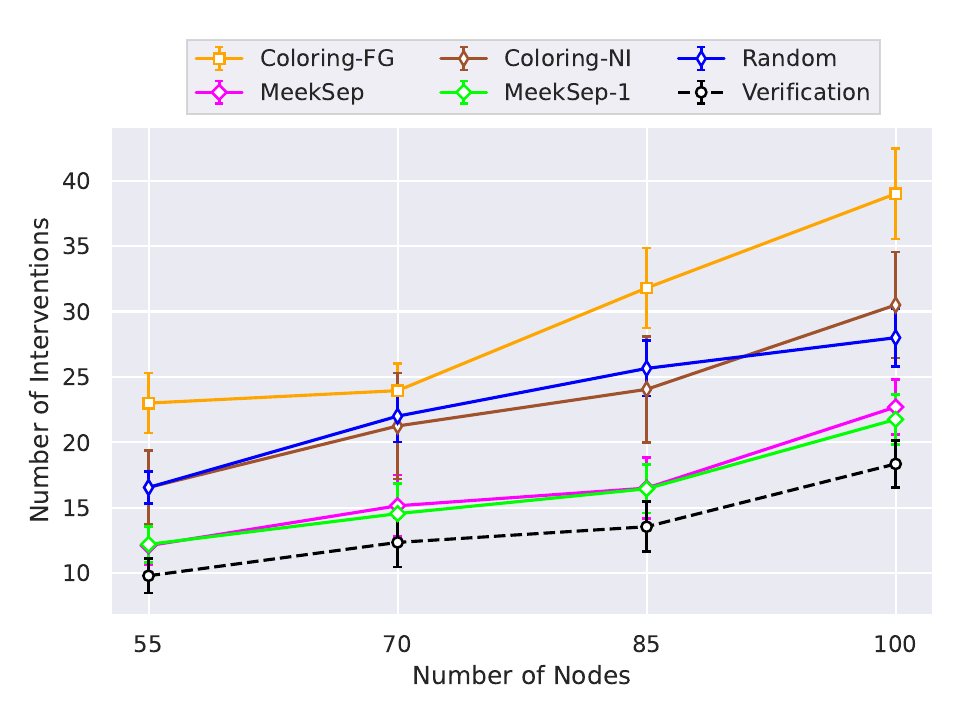}
         
         \caption{Subset search on $r$-hop model with $r=3$}\label{fig:2a}
     \end{subfigure}
     \begin{subfigure}[b]{0.44\textwidth}
         \centering
        \includegraphics[width=0.85\textwidth]{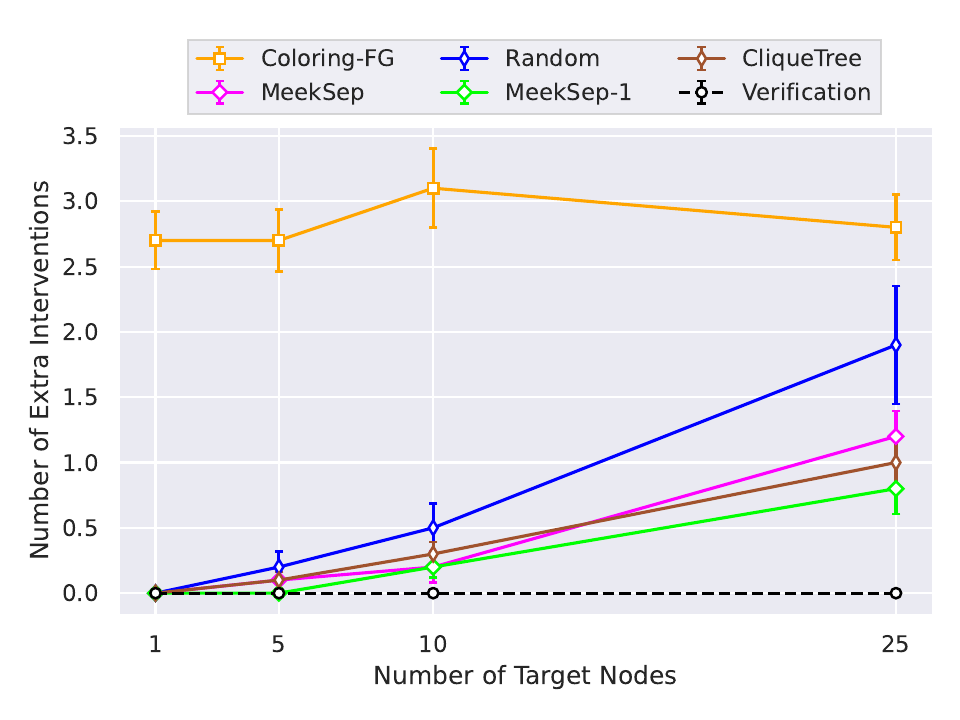}
         \caption{Causal mean matching with varying $|\cI^*|$}
         \label{fig:2b}
     \end{subfigure}
    \caption{Meek separator for \textbf{(a)} subset search and \textbf{(b)} causal matching. Each dot is averaged across $20$ DAGs, where the error bar shows $0.5$ and $0.2$ standard deviation in (a) and (b), respectively.}
    \vspace*{-.08in}
\end{figure}
\section{Discussion}

In this work, we introduced Meek separators. In particular, we established the existence of a small-sized Meek separator and presented efficient algorithms to compute it. Meek separators hold great potential for designing divide-and-conquer strategies to tackle various causal discovery problems. We demonstrated this by designing efficient approximation algorithms for two important problems in targeted causal discovery: subset search and causal mean matching. Our approximation guarantees are exponentially better than the guarantees achievable by any deterministic algorithm for both problems.
It would be an interesting future research endeavour to explore the application of Meek separators to address other problems in the field of causal discovery.

\textbf{Limitations and Future Work.}
 We made several standard assumptions such as causal sufficiency and we considered the noiseless setting. In future work, it would be of interest to relax some of these assumptions. Particularly, we believe that investigating the sample complexity for conducting targeted causal discovery would be an important avenue to pursue.

In addition to these broad questions, there are some specific open problems. One such problem is understanding the weighted subset search problem. Although efficient algorithms have been proposed in \cite{choo2023subset} to compute weighted subset verification numbers, the weighted subset search problem remains open. Exploring its approximability would be an interesting research direction. Moreover, for the causal mean matching problem, extending the matching criteria beyond the mean to encompass other higher-order moments of the distribution would be a natural and compelling future direction.

\section*{Acknowledgements}

The authors were supported by the the Eric and Wendy Schmidt Center at the Broad Institute, as well as NCCIH/NIH (1DP2AT012345), ONR (N00014-22-1-2116), DOE-ASCR (DE-SC0023187), the MIT-IBM Watson AI Lab, and a Simons Investigator Award. 
J.Z. was partially supported by an Apple AI/ML PhD Fellowship.

\bibliographystyle{alpha}
\bibliography{refs}

\clearpage
\tableofcontents
\addtocontents{toc}{\protect\setcounter{tocdepth}{2}}

\newpage
\appendix
\section{Meek Rules}
\label{sec:appendix-meek-rules}

Meek rules refer to a collection of four edge orientation rules that are proven to be sound and complete when applied to a set of arcs that possesses a consistent extension to a directed acyclic graph (DAG) \cite{meek1995}. With the presence of edge orientation information, it is possible to iteratively apply Meek rules until reaching a fixed point, thereby maximizing the number of oriented arcs.

\begin{definition}[Consistent extension]
For a given graph $G$, a set of arcs is considered to have a \emph{consistent DAG extension} $\pi$ if there exists a permutation of the vertices satisfying the following conditions: (i) for every edge $\{u,v\}$ in $G$, it is oriented as $u \to v$ whenever $\pi(u) < \pi(v)$, (ii) there are no directed cycles, and (iii) all the given arcs are included in the extension.
\end{definition}

\begin{definition}[The four Meek rules \protect\cite{meek1995}, see \Cref{fig:meek-rules} for an illustration]
\hspace{0pt}
\begin{description}
    \item [R1] Edge $\{a,b\} \in E \setminus A$ is oriented as $a \to b$ if $\exists$ $c \in V$ such that $c \to a$ and $c \not\sim b$.
    \item [R2] Edge $\{a,b\} \in E \setminus A$ is oriented as $a \to b$ if $\exists$ $c \in V$ such that $a \to c \to b$.
    \item [R3] Edge $\{a,b\} \in E \setminus A$ is oriented as $a \to b$ if $\exists$ $c,d \in V$ such that $d \sim a \sim c$, $d \to b \gets c$, and $c \not\sim d$.
    \item [R4] Edge $\{a,b\} \in E \setminus A$ is oriented as $a \to b$ if $\exists$ $c,d \in V$ such that $d \sim a \sim c$, $d \to c \to b$, and $b \not\sim d$.
\end{description}
\end{definition}

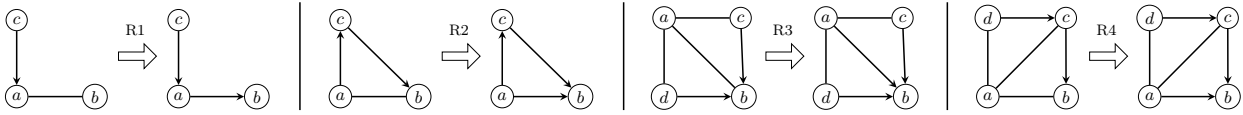
\begin{figure}[htbp]
\centering
\resizebox{\linewidth}{!}{%
\begin{tikzpicture}
%
%
\node[draw, circle, inner sep=2pt] at (0,0) (R1a-before) {\small $a$};
\node[draw, circle, inner sep=2pt, right=of R1a-before] (R1b-before) {\small $b$};
\node[draw, circle, inner sep=2pt, above=of R1a-before](R1c-before) {\small $c$};
\draw[thick, -stealth] (R1c-before) -- (R1a-before);
\draw[thick] (R1a-before) -- (R1b-before);

\node[draw, circle, inner sep=2pt] at (3,0) (R1a-after) {\small $a$};
\node[draw, circle, inner sep=2pt, right=of R1a-after] (R1b-after) {\small $b$};
\node[draw, circle, inner sep=2pt, above=of R1a-after](R1c-after) {\small $c$};
\draw[thick, -stealth] (R1c-after) -- (R1a-after);
\draw[thick, -stealth] (R1a-after) -- (R1b-after);

\node[single arrow, draw, minimum height=2em, single arrow head extend=1ex, inner sep=2pt] at (2.2,0.75) (R1arrow) {};
\node[above=5pt of R1arrow] {\footnotesize R1};

%
%
\node[draw, circle, inner sep=2pt] at (6,0) (R2a-before) {\small $a$};
\node[draw, circle, inner sep=2pt, right=of R2a-before] (R2b-before) {\small $b$};
\node[draw, circle, inner sep=2pt, above=of R2a-before](R2c-before) {\small $c$};
\draw[thick, -stealth] (R2a-before) -- (R2c-before);
\draw[thick, -stealth] (R2c-before) -- (R2b-before);
\draw[thick] (R2a-before) -- (R2b-before);

\node[draw, circle, inner sep=2pt] at (9,0) (R2a-after) {\small $a$};
\node[draw, circle, inner sep=2pt, right=of R2a-after] (R2b-after) {\small $b$};
\node[draw, circle, inner sep=2pt, above=of R2a-after](R2c-after) {\small $c$};
\draw[thick, -stealth] (R2a-after) -- (R2c-after);
\draw[thick, -stealth] (R2c-after) -- (R2b-after);
\draw[thick, -stealth] (R2a-after) -- (R2b-after);

\node[single arrow, draw, minimum height=2em, single arrow head extend=1ex, inner sep=2pt] at (8.2,0.75) (R2arrow) {};
\node[above=5pt of R2arrow] {\footnotesize R2};

%
%
\node[draw, circle, inner sep=2pt] at (12,0) (R3d-before) {\small $d$};
\node[draw, circle, inner sep=2pt, above=of R3d-before](R3a-before) {\small $a$};
\node[draw, circle, inner sep=2pt, right=of R3a-before] (R3c-before) {\small $c$};
\node[draw, circle, inner sep=2pt, right=of R3d-before](R3b-before) {\small $b$};
\draw[thick, -stealth] (R3c-before) -- (R3b-before);
\draw[thick, -stealth] (R3d-before) -- (R3b-before);
\draw[thick] (R3c-before) -- (R3a-before) -- (R3d-before);
\draw[thick] (R3a-before) -- (R3b-before);

\node[draw, circle, inner sep=2pt] at (15,0) (R3d-after) {\small $d$};
\node[draw, circle, inner sep=2pt, above=of R3d-after](R3a-after) {\small $a$};
\node[draw, circle, inner sep=2pt, right=of R3a-after] (R3c-after) {\small $c$};
\node[draw, circle, inner sep=2pt, right=of R3d-after](R3b-after) {\small $b$};
\draw[thick, -stealth] (R3c-after) -- (R3b-after);
\draw[thick, -stealth] (R3d-after) -- (R3b-after);
\draw[thick] (R3c-after) -- (R3a-after) -- (R3d-after);
\draw[thick, -stealth] (R3a-after) -- (R3b-after);

\node[single arrow, draw, minimum height=2em, single arrow head extend=1ex, inner sep=2pt] at (14.2,0.75) (R3arrow) {};
\node[above=5pt of R3arrow] {\footnotesize R3};

%
%
\node[draw, circle, inner sep=2pt] at (18,0) (R4a-before) {\small $a$};
\node[draw, circle, inner sep=2pt, above=of R4a-before](R4d-before) {\small $d$};
\node[draw, circle, inner sep=2pt, right=of R4d-before] (R4c-before) {\small $c$};
\node[draw, circle, inner sep=2pt, right=of R4a-before](R4b-before) {\small $b$};
\draw[thick, -stealth] (R4d-before) -- (R4c-before);
\draw[thick, -stealth] (R4c-before) -- (R4b-before);
\draw[thick] (R4d-before) -- (R4a-before) -- (R4c-before);
\draw[thick] (R4a-before) -- (R4b-before);

\node[draw, circle, inner sep=2pt] at (21,0) (R4a-after) {\small $a$};
\node[draw, circle, inner sep=2pt, above=of R4a-after](R4d-after) {\small $d$};
\node[draw, circle, inner sep=2pt, right=of R4d-after] (R4c-after) {\small $c$};
\node[draw, circle, inner sep=2pt, right=of R4a-after](R4b-after) {\small $b$};
\draw[thick, -stealth] (R4d-after) -- (R4c-after);
\draw[thick, -stealth] (R4c-after) -- (R4b-after);
\draw[thick] (R4d-after) -- (R4a-after) -- (R4c-after);
\draw[thick, -stealth] (R4a-after) -- (R4b-after);

\node[single arrow, draw, minimum height=2em, single arrow head extend=1ex, inner sep=2pt] at (20.2,0.75) (R4arrow) {};
\node[above=5pt of R4arrow] {\footnotesize R4};

\draw[thick] (5.25,1.75) -- (5.25,-0.25);
\draw[thick] (11.25,1.75) -- (11.25,-0.25);
\draw[thick] (17.25,1.75) -- (17.25,-0.25);
\end{tikzpicture}
}
\caption{An illustration of the four Meek rules}
\label{fig:meek-rules}
\end{figure}

An algorithm \cite[Algorithm 2]{pmlr-v161-wienobst21a} has been developed to compute the closure under Meek rules efficiently. The algorithm runs in $\cO(d \cdot |E|)$ time, where $d$ represents the degeneracy of the graph skeleton\footnote{A $d$-degenerate graph is an undirected graph in which every subgraph has a vertex of degree at most $d$. Note that the degeneracy of a graph is typically smaller than the maximum degree of the graph.}.

\section{Preliminaries and Other Useful Results}\label{app:prelims}
Here we state some useful notation and results.
For an arc $u\to v$ in $\cG$, we define $R^{-1}_1(\cG, u \to v) \subseteq V$ and $R^{-1}_k(\cG, u \to v) \subseteq 2^V$ to refer to interventions orienting an arc $u \to v$. Equivalently, 
$R^{-1}_1(\cG, u \to v) = \{w \in V: u \to v \in R(\cG,w)\}$ and 
$R^{-1}_k(\cG, u \to v) = \{I \subseteq V: |I| \leq k, u \to v \in R(\cG,I)\}$.

\begin{restatable}[Theorem 7 in \protect\cite{choo2023subset}]{proposition}{interventionalessentialgraphproperties}
\label{thm:properties}
Consider a DAG $\cG = (V,E)$ and intervention sets $\cI, \cJ \subseteq 2^V$.
The following statements are true:
1) $skel(\cG^{\cI})$ is exactly the chain components of $\cE_{\cI}(\cG)$. 2) $\cG^{\cI}$ does not have new v-structures. 3) For any two vertices $u$ and $v$ in the same chain component of $\cE_{\cI}(\cG)$, we have $\Pa_{\cG,\cI}(u) = \Pa_{\cG,\cI}(v)$. 4) If $u \to v \in R(\cG,\cI)$, then $u$ and $v$ belong to different chain components of $\cE_{\cI}(\cG)$. 5) Any acyclic completion of $\cE(\cG^{\cI})$ can be combined with $R(\cG,\cI)$ to obtain a valid DAG that has the same essential graph and $\cI$-essential graph as $\cE(\cG)$ and $\cE_{\cI}(\cG)$, respectively. 6) $R(\cG^{\cI},\cJ) = R(\cG,\cJ) \setminus R(\cG,\cI)$. 7) $R(\cG,\cI \cup \cJ) = R(\cG^{\cI},\cJ) \;\dot\cup\; R(\cG,\cI)$.
8) $R(\cG,\cI \cup \cJ) = R(\cG^{\cI},\cJ) \;\dot\cup\ R(\cG^{\cJ},\cI) \;\dot\cup\; \big(R(\cG,\cI) \cap R(\cG,\cJ)\big)$.
\end{restatable}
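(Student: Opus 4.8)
The plan is to prove the eight items in a dependency‑respecting order, bootstrapping from near‑definitional facts to the compositional identities (6)--(8). I would use throughout the facts recalled in the text: $R(\cG,\cK)=A(\cE_\cK(\cG))$; every $\cE_\cK(\cG)$ is a chain graph with chordal, ``autonomous'' chain components; $\cE_\cK(\cG)$ is the closure under the Meek rules of $\cE(\cG)$ together with the edges cut by $\cK$ oriented as in $\cG$; and $\cE_\cK(\cG)$ represents the family $\mathcal{F}_\cK(\cG)$ of DAGs in $[\cG]$ that agree with $\cG$ on every edge cut by $\cK$, so that $R(\cG,\cK)$ is exactly the set of edges on which all members of $\mathcal{F}_\cK(\cG)$ agree; in particular $R(\cG,\cdot)$ is $\subseteq$‑monotone. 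The order is (1),(4),(3),(2), then (5), then (6),(7),(8).

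\emph{Items (1), (4), (3), (2).} Item (1) is definitional: $\cG^\cI=\cG[E\setminus R(\cG,\cI)]$ has skeleton equal to the undirected edges of $\cE_\cI(\cG)$, whose connected components are the chain components. For (4), an arc $u\to v\in R(\cG,\cI)$ inside one chain component, together with an undirected $u$--$v$ path inside it, would form a directed cycle in the chain graph $\cE_\cI(\cG)$. For (3), by connectivity it suffices to treat an undirected $u\sim v$; if $x\to u\in R(\cG,\cI)$ then $x\sim v$ in $\cG$ (else R1 orients $u\to v$), this edge is oriented in $\cE_\cI(\cG)$ (else $x$ joins the chain component of $u,v$, contradicting (4)), and it is not $v\to x$ (else $v\to x\to u$ triggers R2); so $x\to v\in R(\cG,\cI)$, and symmetry gives $\Pa_{\cG,\cI}(u)=\Pa_{\cG,\cI}(v)$. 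For (2): a triple $a\to b\gets c$ with $a\not\sim c$ in $\cG^\cI$ has $a\sim b$ and $c\sim b$ undirected in $\cE_\cI(\cG)$, so $a,b,c$ share a chain component; by (4) the edge $a\sim c$, were it in $\cG$, could be neither oriented in $\cE_\cI(\cG)$ (that would split $a,c$) nor undirected there (that is $a\sim c$ in $\cG^\cI$), so $a\not\sim c$ in $\cG$ --- no new v‑structure. Since v‑structure arcs of $\cG$ always lie in $R(\cG,\cI)$ and hence leave $\cG^\cI$, the same argument shows $\cG^\cI$ has \emph{no} v‑structures, so $\cE(\cG^\cI)$ is undirected within each chain component.

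\emph{Item (5).} Let $\mathcal{D}$ be an acyclic completion of $\cE(\cG^\cI)$ --- read, as is standard for essential graphs, as a consistent extension lying in $[\cG^\cI]$, i.e.\ (since $\cG^\cI$ is v‑structure‑free) an acyclic orientation of the chordal chain components introducing no v‑structure. Set $\cG':=R(\cG,\cI)\cup A(\mathcal{D})$. I would check: (i) \emph{acyclicity} --- intra‑component arcs come from $\mathcal{D}$ and inter‑component arcs from $R(\cG,\cI)$ by (4), and since $\cE_\cI(\cG)$ is a chain graph its components carry a linear order respected by the latter, so a directed cycle would have to stay inside one component, impossible; (ii) \emph{$\cG'\in[\cG]$} --- skeletons agree, every v‑structure of $\cG$ has both arcs in $R(\cG,\cI)$ and persists, and conversely a v‑structure $a\to b\gets c$ of $\cG'$ with $a\not\sim c$ cannot use an arc of $\mathcal{D}$ (if $a\to b\in A(\mathcal{D})$, sharing component $\cH$ with $b$, then $c\to b\in R(\cG,\cI)$ forces $c\in\Pa_{\cG,\cI}(b)=\Pa_{\cG,\cI}(a)$ hence $c\sim a$ by (3), while $c\to b\in A(\mathcal{D})$ puts $c$ in $\cH$ and makes it a v‑structure of $\mathcal{D}$, both impossible), so it is a v‑structure of $\cG$; hence $\cE(\cG')=\cE(\cG)$; (iii) \emph{$\cE_\cI(\cG')=\cE_\cI(\cG)$} --- the set of $\cI$‑cut edges depends only on the common skeleton, each lies in $R(\cG,\cI)$ and is oriented identically in $\cG'$ and $\cG$, and $\cE_\cI(\cdot)$ is the Meek‑closure of this common data.

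\emph{Items (6), (7), (8).} I would deduce these from the DAG‑family description of $R$ plus two transfer observations about an edge $e\in E(\cG^\cI)=E\setminus R(\cG,\cI)$. (a) By (5), every $\mathcal{D}\in[\cG^\cI]$ extends to $\cG':=\mathcal{D}\cup R(\cG,\cI)\in[\cG]$ with $\cE_\cI(\cG')=\cE_\cI(\cG)$; since $\cG'$ copies $\cG$ on $R(\cG,\cI)$ (in particular on all $\cI$‑cuts) and $\mathcal{D}$ on $E(\cG^\cI)$, a $\mathcal{D}\in\mathcal{F}_\cJ(\cG^\cI)$ yields $\cG'\in\mathcal{F}_\cJ(\cG)\cap\mathcal{F}_{\cI\cup\cJ}(\cG)$, and $\cG'$ agrees with $\mathcal{D}$ on $e$. (b) For any $\widetilde{\cG}\in[\cG]$, the restriction $\widetilde{\cG}[E\setminus R(\cG,\cI)]$ has skeleton $E(\cG^\cI)$ and, by (4) together with the fact that v‑structure arcs of $\cG$ lie in $R(\cG,\cI)$, is v‑structure‑free, hence lies in $[\cG^\cI]$; it agrees with $\widetilde{\cG}$ on $E(\cG^\cI)$, so $\widetilde{\cG}\in\mathcal{F}_\cJ(\cG)$ (resp.\ $\mathcal{F}_{\cI\cup\cJ}(\cG)$) forces this restriction into $\mathcal{F}_\cJ(\cG^\cI)$. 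Thus (a) gives $\mathcal{F}_\cJ(\cG^\cI)\subseteq\{\widetilde{\cG}[E\setminus R(\cG,\cI)]:\widetilde{\cG}\in\mathcal{F}_{\cI\cup\cJ}(\cG)\}$, (b) gives the reverse inclusion, and agreement on $e\in E(\cG^\cI)$ is preserved under restriction, so $e\in R(\cG^\cI,\cJ)\iff e\in R(\cG,\cI\cup\cJ)$ for $e\in E(\cG^\cI)$; combined with $R(\cG^\cI,\cJ)\subseteq E(\cG^\cI)$, monotonicity $R(\cG,\cI)\subseteq R(\cG,\cI\cup\cJ)$, and $E=R(\cG,\cI)\,\dot\cup\,E(\cG^\cI)$, this is (7). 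For (6): $R(\cG^\cI,\cJ)\subseteq R(\cG,\cJ)$ follows from (b) alone (any $\widetilde{\cG}\in\mathcal{F}_\cJ(\cG)$ restricts into $\mathcal{F}_\cJ(\cG^\cI)$, hence agrees with $\cG^\cI=\cG$ on any $e\in R(\cG^\cI,\cJ)$), while $R(\cG,\cJ)\setminus R(\cG,\cI)\subseteq R(\cG^\cI,\cJ)$ follows from the equivalence above and monotonicity; since $R(\cG^\cI,\cJ)\subseteq E\setminus R(\cG,\cI)$, these give (6). Finally (8) is set algebra: by (6), $R(\cG^\cI,\cJ)=R(\cG,\cJ)\setminus R(\cG,\cI)$ and $R(\cG^\cJ,\cI)=R(\cG,\cI)\setminus R(\cG,\cJ)$, whose disjoint union with $R(\cG,\cI)\cap R(\cG,\cJ)$ is $R(\cG,\cI)\cup R(\cG,\cJ)=R(\cG,\cI\cup\cJ)$ by (7) and (6). \textbf{The main obstacle} is (6)--(7): reconciling ``intervene on $\cG$, then restrict'' with ``restrict to $\cG^\cI$, then intervene'', since $\cG^\cI$ has a sparser skeleton (Meek rules fire more readily) yet carries no v‑structure or $R(\cG,\cI)$ orientation information. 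This is precisely why (2)--(4) are proved first: (4) confines the ``missing'' arcs strictly between chain components and (2)--(3) make each component's external context uniform, so neither can drive a Meek orientation inside a component --- which is what makes the restriction/extension along $R(\cG,\cI)$ in (a) and (b) a faithful correspondence between the relevant DAG families.
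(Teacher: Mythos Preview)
The paper does not prove this proposition: it is quoted verbatim as Theorem~7 of \cite{choo2023subset} in Appendix~\ref{app:prelims} and used as a black box (e.g.\ inside the proofs of \Cref{thm:meekseparator} and \Cref{lem:source_node}). So there is no ``paper's own proof'' to compare against.

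That said, your plan is sound and internally consistent. The order (1),(4),(3),(2)$\to$(5)$\to$(6)--(8) is the natural dependency chain; your arguments for (1)--(4) are the standard Meek‑rule/chain‑graph ones, and your handling of (2) correctly upgrades ``no \emph{new} v‑structures'' to ``no v‑structures at all'' (which you then use for (5) and in observation (b)). For (5), the three checks (acyclicity via the chain‑component partial order, preservation of skeleton and v‑structures, and agreement on $\cI$‑cut edges) are exactly what is needed. For (6)--(8), the semantic route through the DAG families $\mathcal{F}_\cK(\cdot)$ together with the extension/restriction correspondence built from (5) and (b) is a clean way to get the identities; in particular your verification that the restriction $\widetilde{\cG}[E\setminus R(\cG,\cI)]$ lands in $[\cG^\cI]$ for \emph{every} $\widetilde{\cG}\in[\cG]$ (not only those in $\mathcal{F}_\cI(\cG)$) is the step that makes the $\subseteq$ direction of (6) go through. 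One small point worth making explicit when you write it up: in observation (a) you use that every $\cI$‑cut edge already lies in $R(\cG,\cI)$, so that $\cG'=\mathcal{D}\cup R(\cG,\cI)$ automatically sits in $\mathcal{F}_\cI(\cG)$; you state this in passing but it is the crux of why $\cG'\in\mathcal{F}_{\cI\cup\cJ}(\cG)$.
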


\begin{restatable}[Theorem 10 in \protect\cite{choo2023subset}]{lemma}{orientingverticesformaninterval}
\label{thm:orienting-vertices-form-an-interval}
Let $\cG = (V,E)$ be a DAG without v-structures and $u \to v$ in $\cG$ be unoriented in $\cE(\cG)$.
Then, $R^{-1}_1(\cG, u \to v) = \Des[w] \cap \Anc[v]$ for some $w \in \Anc[u]$.
\end{restatable}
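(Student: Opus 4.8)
\textbf{Proof proposal for Lemma~\ref{thm:orienting-vertices-form-an-interval}.}

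The plan is to characterize exactly which atomic interventions orient a given unoriented arc $u\to v$, and show this set is the ``interval'' $\Des[w]\cap\Anc[v]$ for a suitable $w\in\Anc[u]$. First I would recall that an atomic intervention on a vertex $a$ orients $u\to v$ if either (i) the edge $u\sim v$ is directly cut by $a$, i.e. $a\in\{u,v\}$, or (ii) the orientation is forced by repeated application of the Meek rules after cutting all edges incident to $a$. So the task is to understand the closure under Meek rules. Since $\cG$ has no v-structures (it is a moral DAG, or a chain component of an essential graph, which is always v-structure-free), the only Meek rules that can fire to propagate orientations starting from an atomic intervention are R1 and R2: R1 propagates orientations ``outward'' along directed paths away from the intervened vertex within the chain component, and R2 closes triangles. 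The key structural fact I would establish is a directed-reachability description: intervening on $a$ orients $u\to v$ if and only if, in the chain component containing $u$ and $v$, there is a directed path from $a$ reaching ``into'' the edge on the $u$ side but not reaching $v$ first — more precisely, $a$ must be an ancestor-or-equal of $v$ (so that the propagation can reach the edge $u\sim v$) and the propagation must arrive at $u$ before $v$.

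The main steps, in order: (1) Show $R^{-1}_1(\cG,u\to v)\subseteq\Anc[v]$. If $a\notin\Anc[v]$, then intervening on $a$ cannot force $u\to v$: one can exhibit a consistent acyclic completion of $\cE_a(\cG)$ in which $v\to u$, using that $\cG^{\{a\}}$ has no new v-structures (Proposition~\ref{thm:properties}, item 2) and that $a$'s orientation effects stay ``below'' $a$ or are confined by the v-structure-free property. (2) Show that $R^{-1}_1(\cG,u\to v)$ is ``downward closed toward $v$'': if $a\in R^{-1}_1(\cG,u\to v)$ and $a\to b$ is an arc (in $\cG$) with $b\in\Anc[v]$, then $b\in R^{-1}_1(\cG,u\to v)$ as well — intuitively because intervening on $b$ does at least as much orientation ``below $b$'' as intervening on $a$ does. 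Combined with step (1), this shows the set is ``closed going down to $v$''. (3) Show that $R^{-1}_1(\cG,u\to v)$ is ``upward closed toward $u$'': if $a\in R^{-1}_1(\cG,u\to v)$ and $c\to a$ with $c$ still able to reach the edge, then $c\in R^{-1}_1(\cG,u\to v)$; hence the set is closed going up until one ``falls off'' the set of vertices whose descendant set includes $u$, i.e. until one leaves $\Anc[u]$ appropriately. (4) Combine: the set of valid $a$ is exactly $\{a: a\in\Anc[v], \text{ and the Meek-propagation from }a\text{ reaches }u\}$; the reaching-$u$ condition, in a v-structure-free chordal chain component, is equivalent to $u\in\Des[a]$ together with $a$ not ``above'' $u$ in a way that reorients; carefully, this cuts out exactly $\Des[w]\cap\Anc[v]$ where $w$ is the topmost ancestor of $u$ from which $u\to v$ is still oriented. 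I would identify $w$ as $w=\arg\min_{\pi}\{a\in\Anc[u]: a\in R^{-1}_1(\cG,u\to v)\}$ and check $R^{-1}_1(\cG,u\to v)=\Des[w]\cap\Anc[v]$ by the two closure directions.

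The main obstacle I expect is step (2)/(3): proving the monotonicity of the orientation-reachability relation under moving along arcs, i.e. that if intervening on $a$ orients $u\to v$ then so does intervening on a child of $a$ (on the path toward $v$) or a suitable parent of $a$ (on the path toward $u$). This requires a careful induction on the Meek-rule closure, using the no-new-v-structure property (Proposition~\ref{thm:properties}, item 2) and the chordality of chain components to argue that every directed path used by the propagation can be ``rerouted'' through the new intervention target. A clean way to handle this is to argue contrapositively via consistent extensions: if $b\notin R^{-1}_1(\cG,u\to v)$ there is a DAG in the $\{b\}$-essential class with $v\to u$; I would then modify this DAG near $a$ to produce a DAG in the $\{a\}$-essential class still containing $v\to u$, contradicting $a\in R^{-1}_1(\cG,u\to v)$. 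Making this modification legal (acyclic, same skeleton, same v-structures, consistent with cutting $a$'s edges) is the delicate technical heart of the argument; the chordality of the chain component and Proposition~\ref{thm:properties} items 2 and 5 are the tools I would lean on.
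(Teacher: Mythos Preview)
The paper does not prove this lemma: it is imported verbatim as Theorem~10 of \cite{choo2023subset} and placed in Appendix~\ref{app:prelims} among ``Preliminaries and Other Useful Results,'' then used as a black box in the proof of Lemma~\ref{lem:tt}. There is thus no in-paper proof to compare your proposal against.

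That said, your outline has a genuine gap. Step~(3) asserts that $R^{-1}_1(\cG,u\to v)$ is ``upward closed toward $u$'': if $a$ orients $u\to v$ and $c\to a$ with $c$ ``still able to reach the edge'' (which you gloss as $c\in\Anc[u]$), then $c$ orients $u\to v$ too. This is false. Take the moral triangle $a\to u$, $u\to v$, $a\to v$: intervening on $a$ orients $a\to u$ and $a\to v$ but leaves $u\sim v$ unoriented, since R1 cannot fire ($a$ is adjacent to both endpoints) and no other rule applies. Hence $a\notin R^{-1}_1(\cG,u\to v)$ even though $a\in\Anc[u]$; here $w=u$ and $R^{-1}_1(\cG,u\to v)=\{u,v\}$. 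Step~(4) repeats the same error by claiming the ``reaching-$u$ condition\ldots is equivalent to $u\in\Des[a]$.'' The correct identification of the top vertex $w$ is more delicate: it is governed by where the R1-propagation chain toward $u\to v$ first gets blocked by an adjacency (as in the triangle), not merely by ancestry. Your proposal therefore lacks both a correct definition of $w$ and an argument for the inclusion $\Des[w]\cap\Anc[v]\subseteq R^{-1}_1(\cG,u\to v)$ once $w$ is properly pinned down; the consistent-extension modification you sketch for steps~(2)/(3) does not address this, because the obstruction in the triangle is not about acyclicity or v-structures but about Meek R1 simply not applying.
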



\section{Another Example of Algorithm~\ref{alg:weighted-search}}\label{app:example}

We provide another example of Algorithm~\ref{alg:weighted-search} in an incomplete graph, highlighting that Meek separator by solely focusing on the $\nicefrac{1}{2}$-clique separator.
\begin{figure}[ht]
     \centering
        \begin{subfigure}[b]{0.4\textwidth}
         \centering
         \includegraphics[width=0.8\textwidth]{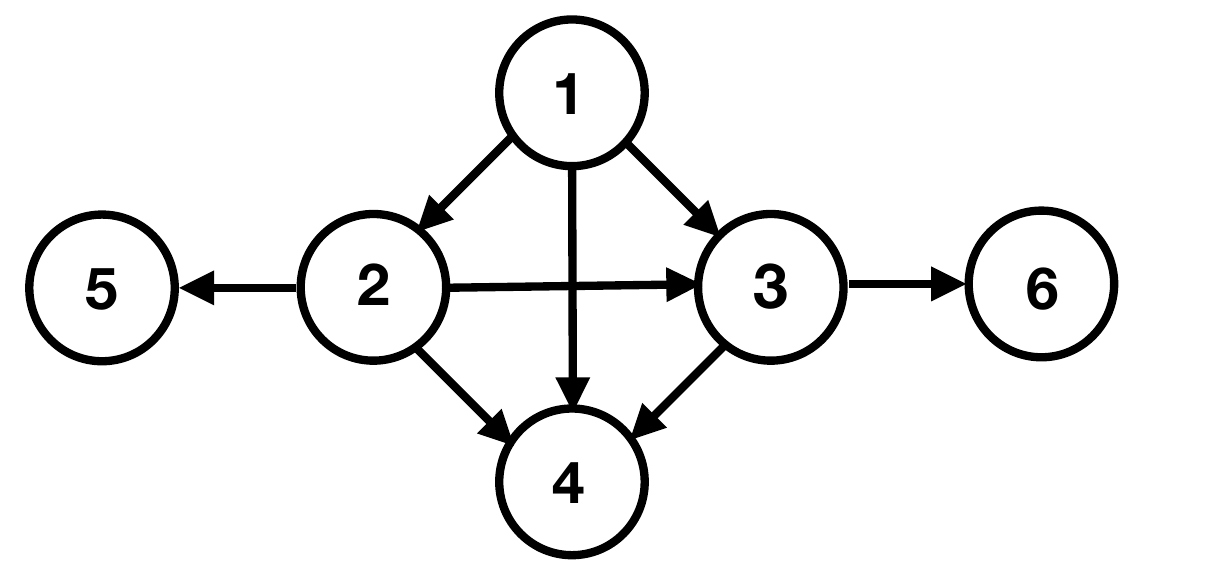}
         \caption{{Ground-truth DAG.}}
     \end{subfigure}
     \\
     \begin{subfigure}[b]{0.4\textwidth}
         \centering
         \includegraphics[width=0.8\textwidth]{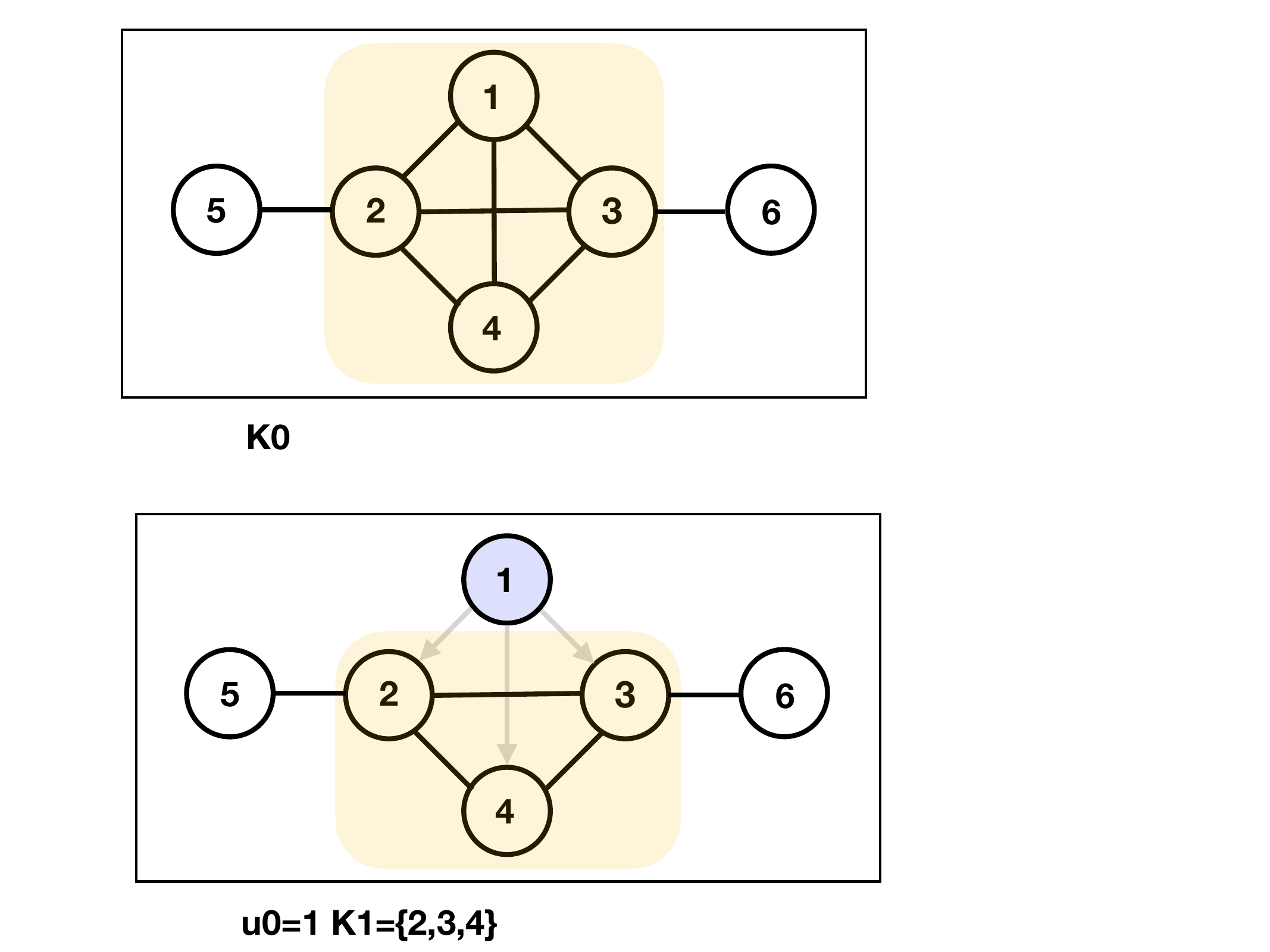}
         \caption{{Iteration 0: $K_0=\{1,2,3,4\}$.}}
     \end{subfigure}
     \begin{subfigure}[b]{0.4\textwidth}
         \centering
         \includegraphics[width=0.8\textwidth]{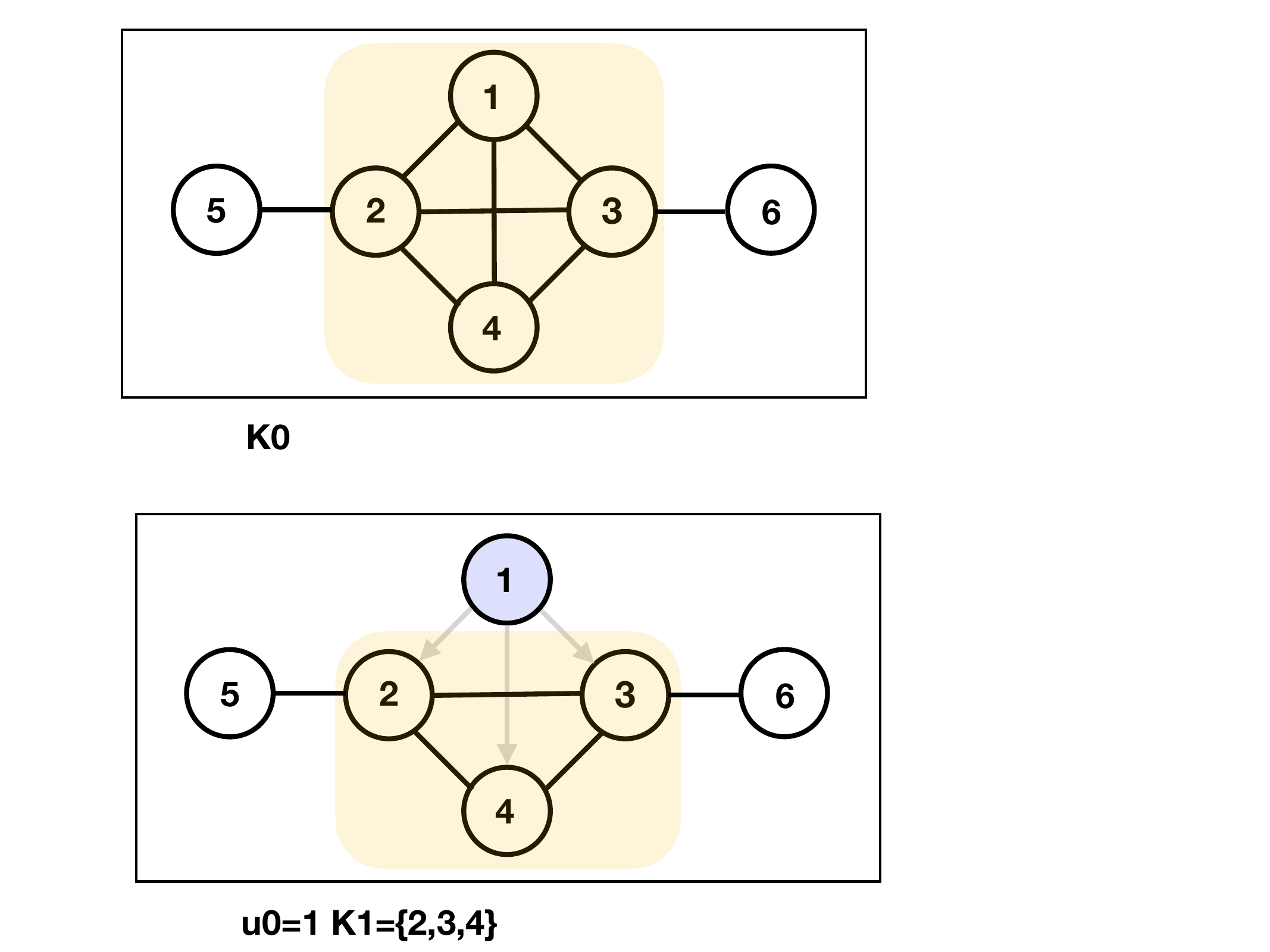}
         \caption{{Iteration 1: $u_0=1,~K_1=\{2,3,4\}$.}}
     \end{subfigure}
    \\
     \begin{subfigure}[b]{0.4\textwidth}
         \centering
         \includegraphics[width=0.8\textwidth]{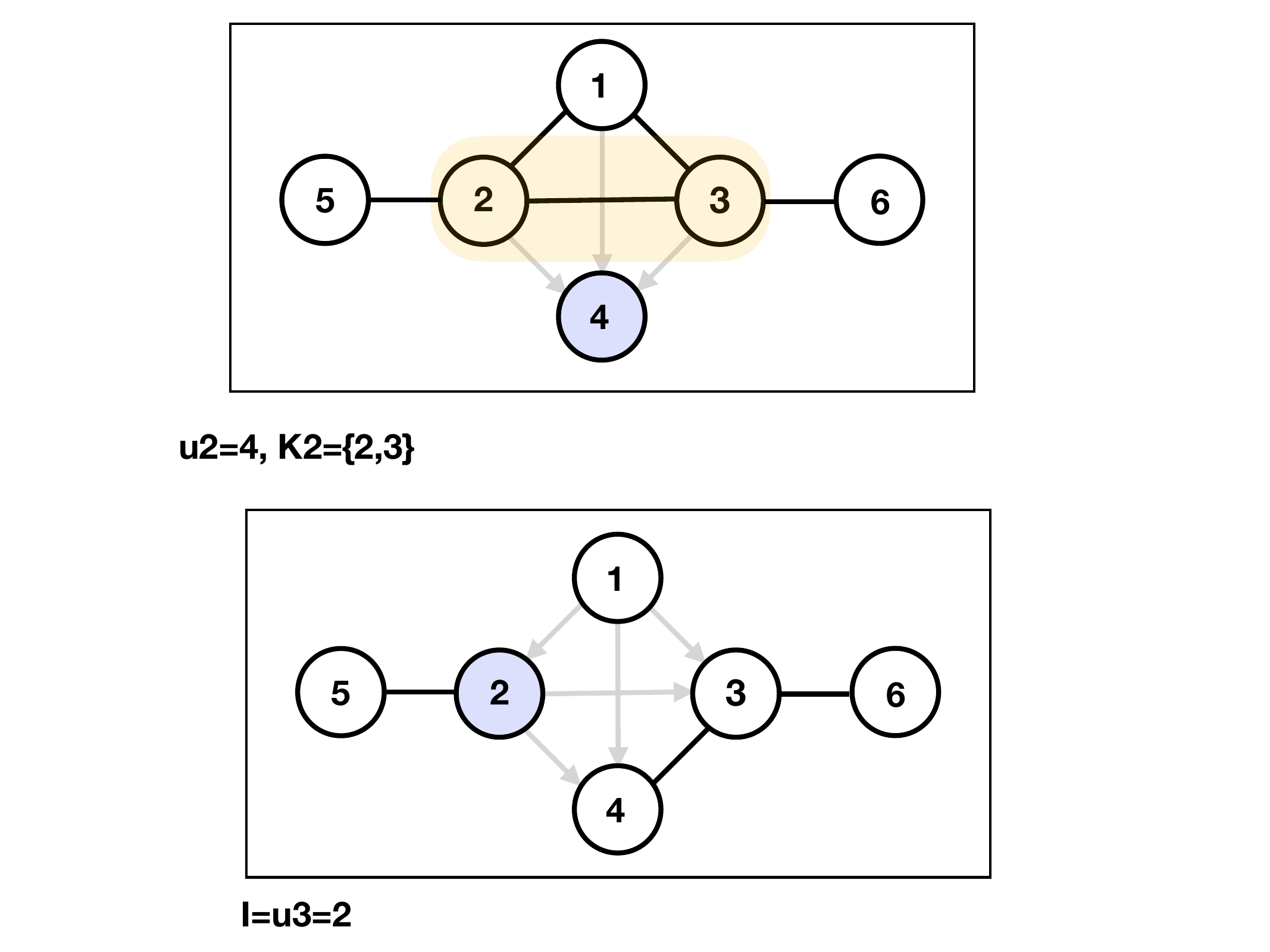}
         \caption{{Iteration 2: $u_1=4,~K_2=\{2,3\}$.}}
     \end{subfigure}
     \begin{subfigure}[b]{0.4\textwidth}
         \centering
         \includegraphics[width=0.8\textwidth]{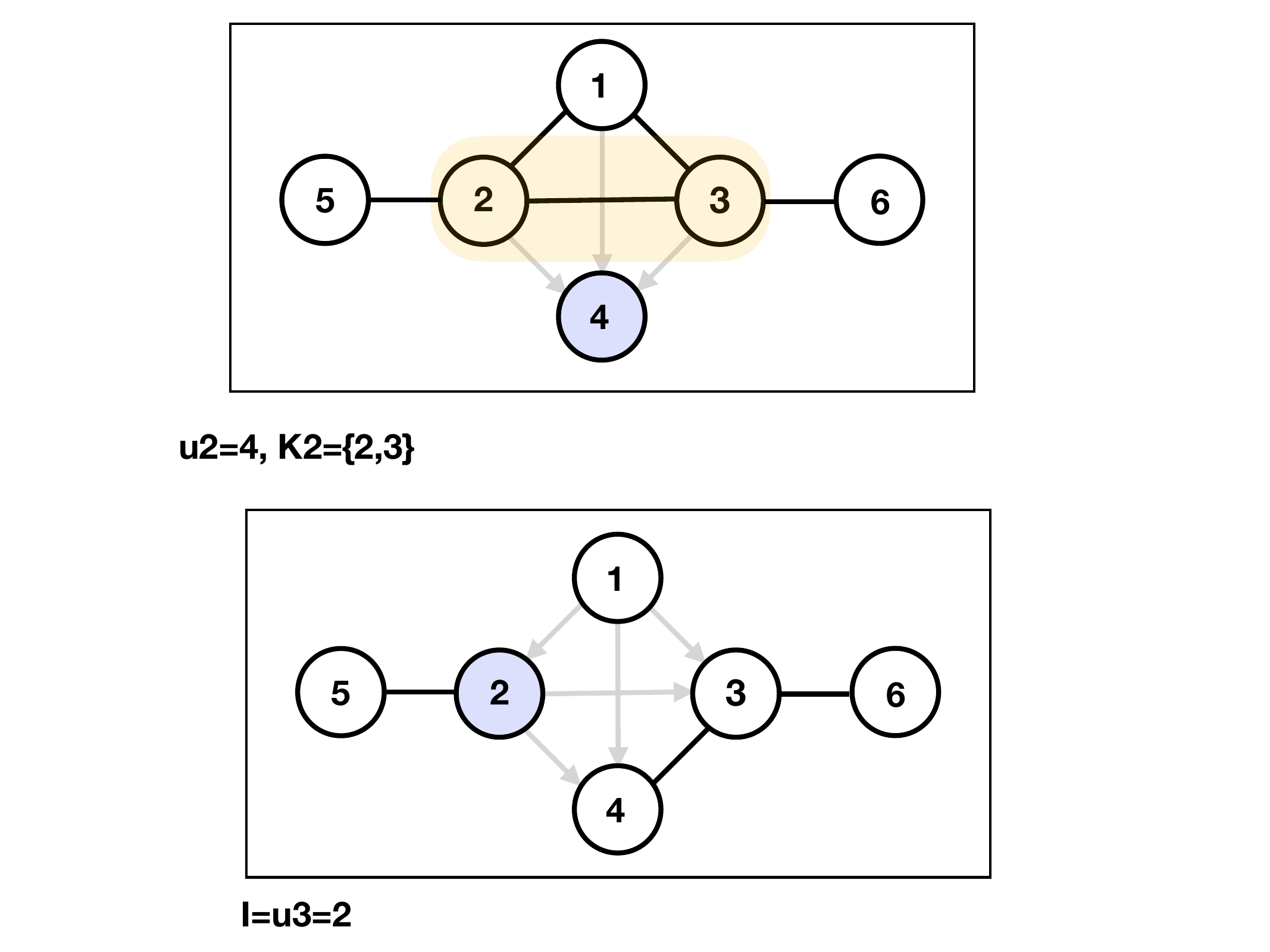}
         \caption{{Iteration 3: $u_2=2$.}}
     \end{subfigure}
        \caption{{An example of Algorithm 1 finding the Meek separator in an incomplete graph. The sets $K_i$ are highlighted; oriented edges in $\cE_{u_i}(\cG)$ by intervening on $u_i$ are in grey; connected components  in $CC(\cE_{u_i}(\cG))$ are in black. \textbf{(a)} ground-truth DAG $\cG$. \textbf{(b)} suppose we take $K_0=K=V(\cG)$ as the $\nicefrac12$-clique separator. \textbf{(c)} suppose we pick $u_0=1$ from $K_0$, then $K_1=K_0\cap Q_{u_0}=\{2,3,4\}$. \textbf{(d)} suppose we pick $u_1=4$ from $K_1$, then $K_2=K_1\cap P_{u_1}=\{2,3\}$. \textbf{(e)} suppose we pick $u_2=2$ from $K_2$, then Algorithm 1 terminates (line 7) returning meek separator $\cI=\{2\}$ and $\cJ=\{1,4\}$ that helps find it.}
        }
        \label{afig:alg1}
        \vspace*{-.12in}
\end{figure}
\section{Remaining Proofs for Meek Separator}\label{app:meeksep}
Here we provide all the remaining proofs for the Meek separator algorithm. 

%

\subsection{Proof for \Cref{lem:tt}}
\lemtt*
\begin{proof}
    Performing an intervention on node $v$ results in orienting all its adjacent edges. Consequently, one of the connected components in $CC(\cE_{v}(\cG))$ is $\{ v\}$, and it is adequate to focus on the remaining connected components.

    Suppose, for the sake of contradiction, that there exists a connected component $\cH \in CC(\cE_{v}(\cG))$ such that $\cH$ contains two vertices $a$ and $b$, such that, $a\in A_v$ and $b \in B_v$. Since $a$ and $b$ belong to the same connected component, we consider the path within $H$ that connects these vertices. Notably, this path includes two adjacent vertices $c$ and $d$, where $c \in A_v$ and $d \in B_v$, and the edge $(c,d)$ remains unoriented. 
    
    However, according to \Cref{thm:orienting-vertices-form-an-interval}, intervening on any vertex within the set $\Des[w]\cap\Anc[d]$ for some fixed $w\in\Anc[c]$ will orient edge $(c,d)$. As $w\in\Anc[c]$, we have $\Des[c]\subseteq\Des[w]$ and therefore $\Des[c]\cap\Anc[d]\subseteq\Des[w]\cap\Anc[d]$. Consequently as $v\in \Des[c]\cap\Anc[d]$, we get that intervening on $v$ orients the edge $(c,d)$.

    Hence, it is impossible for vertices $a\in A_v$ and $b \in B_v$ to belong to the same connected component. Consequently, we can conclude the proof.
\end{proof}

\subsection{Proof for \Cref{lem:connectedsize}}
\lemconnectedsize*
\begin{proof}
    
Since $\cH$ is a connected subgraph of $\cG$ and $V(\cH) \cap V(K)=\varnothing$, removing $K$ from $\cG$ does not result in the deletion of any edges or vertices from the subgraph $\cH$. Consequently, the vertices in $\cH$ will remain connected even after the removal of $K$, and they will all belong to the same connected component in the graph $\cG \backslash K$. Considering that $K$ is an $\alpha$-separator, this further implies that $|V(\cH)| \leq \alpha \cdot |V(\cG)|$. Thus, we can conclude the proof.
\end{proof}

\subsection{Proof for \Cref{lem:exis_good_soln}}
\lemexisgoodsoln*
\begin{proof}
Since $v_i$ precedes $v_{i+1}$ in the true ordering defined by the underlying ground truth DAG $\cG$, it follows that $v_{i+1} \in \Des(v_{i})$. Consequently, we can deduce that $\Des(v_{i+1}) \subseteq \Des(v_{i})$, which implies $|B_{v_{i+1}}| \leq |B_{v_{i}}|$. Additionally, note that $|A_{v}|+|B_{v}|+1=|V(\cG)|$. Therefore, it holds that $|A_{v_{i+1}}| \geq |A_{v_{i}}|$.

Furthermore, as $v_{1}$ is the source node of the clique, removing $K$ ensures that all vertices in $A_{v_{1}}$ {are still in the remaining graph. They also induce a connected subgraph. Otherwise suppose $a\in A_{v_{1}}$ and $b\in A_{v_{1}}$ are not connected. Then the parent $c$ of $v_1$ on a path from $a$ to $v_1$ and the parent $d$ of $v_1$ on a path from $b$ to $v_1$ are not connected. This creates a v-structure $c\rightarrow v_1\leftarrow d$, contradicting $\cG$ being moral. Thus $A_{v_1}$ is a connected subgraph with no intersection with $K$. As $K$ is a $\alpha$-clique separator, we have by \Cref{lem:connectedsize} that $|A_{v_1}|\leq \alpha\cdot |V(\cG)|$. We conclude the proof.}
\end{proof}

\subsection{Proof for \Cref{lem:meeksep}}
\lemmeeksep*
\begin{proof}
Consider the vertices $v_{1},\dots, v_{k}$ of the clique $K$ in the true ordering defined by the underlying ground truth $\cG$. Since $K$ is a $1/2$-clique separator, according to \Cref{lem:exis_good_soln}, we deduce that $|A_{v_1}| \leq |V(\cG)|/2$. Let $u$ denote the last vertex, in terms of the true ordering, within the clique $K$ that satisfies $A_u \leq |V(\cG)|/2$. It is important to note that $u$ fulfills the \emph{constraints} specified in the lemma. 

Therefore there exists a vertex $u$ that fulfills these \emph{constraints}. Now we show that any vertex that fulfills these constraints will satisfy one of two \emph{conditions}, and that either $\{\{u\}\}$ or $\{\{u\},\{x\}\}$ is a $\nicefrac{1}{2}$-Meek separator.

Suppose $u$ is a sink vertex of $K$. Let $\cH \in CC(\cE_{u}(\cG))$ and note that according to \Cref{lem:tt}, $V(\cH)= \{u \}$ or $V(\cH) \subseteq \Des(u)$ or $V(\cH)\subseteq V(\cG) \backslash \Des[u]$. Since $|A_{u}| \leq |V(\cG)|/2$, any $\cH$ such that $V(\cH) \subseteq V(\cG) \backslash \Des[u]$ satisfies $|V(\cH)| \leq |V(\cG)|/2$. Now, suppose $V(\cH) \subseteq \Des(u)$. In this case, we observe that $V(\cH) \cap K=\varnothing$ and $\cH$ is a connected subgraph of $G$. By utilizing \Cref{lem:connectedsize}, we immediately have $|V(\cH)| \leq |V(\cG)|/2$. Therefore, $\{\{u\}\}$ will be a $1/2$-Meek separator.

Suppose $u$ is not a sink vertex. Consider the vertex $x$ that follows $u$ within the clique $K$ in the ordering defined by the DAG $\cG$. Note that $x \in \Des(u) \cap K$ and $(\Des(u) \cap (V(\cG) \backslash \Des[x])) \cap K = \varnothing$. Furthermore from the definition of $u$, we also have that $|A_{x}| > |V(\cG)|/2$, which further implies $|B_{x}| \leq |V(\cG)|/2$. As intervening on more vertices only creates finer-grained connected components, we have that for any $\cJ \subseteq V(\cG)$, $w \in V(\cG)$, and $\cH' \in CC(\cE_{\cJ \cup \{ w\}}(\cG))$, there exists an $\cH'' \in CC(\cE_{\cJ}(\cG))$ such that $\cH'$ is a subgraph of $\cH''$.

Let $\cI=\{x,u \}$, consider any $\cH \in CC(\cE_{\cI}(G))$ and note that there exist connected components $M \in CC(\cE_{u}(\cG))$ and $L \in CC(\cE_{x}(\cG))$ such that $\cH$ is a subgraph of both $M$ and $L$. We apply \Cref{lem:tt} to deduce that $V(M)=u$ or $V(M) \subseteq \Des(u)$ or $V(M) \subseteq V(\cG) \backslash \Des[u]$ and $V(L)=x$ or $V(L) \subseteq \Des(x)$ or $V(L) \subseteq V(\cG) \backslash \Des[x]$.

If $V(M)=u$ or $V(M) \subseteq V(\cG) \backslash \Des[u]$ or $V(L)=x$ or $V(L) \subseteq \Des(x)$, then $V(\cH) = u$ or $V(\cH) = x$ or $V(\cH) \subseteq A_{u}$ or $V(\cH) \subseteq B_x$. As $|A_u|,|B_x| \leq |V(\cG)|/2$, in all of these cases, we have that $|V(\cH)| \leq |V(\cG)|/2$. Now consider the remaining cases where $V(M) \subseteq \Des(u)$ and $V(L) \subseteq V(\cG) \backslash \Des[x]$. As $\cH$ is a subgraph of both $M$ and $L$, we have that $\cH$ is a subgraph of $C$, where $C$ satisfies $V(C)=\Des(u) \cap (V(\cG) \backslash \Des[x])$. As $(\Des(u) \cap (V(\cG) \backslash \Des[x])) \cap K = \varnothing$, we have that $V(C) \cap K =\varnothing$. Furthermore, as $\cH$ is a subgraph of $C$ we also have that $V(\cH) \cap K=\varnothing$. As $\cH$ is a connected subgraph of $G$ and $V(\cH) \cap K=\varnothing$, by utilizing \Cref{lem:connectedsize}, we have that $|V(\cH)| \leq |V(\cG)|/2$. Therefore in all cases, we have that $|V(\cH)| \leq |V(\cG)|/2$, and thus $\{\{u\},\{x\} \}$ is a $1/2$-Meek separator, which concludes the proof.
\end{proof}

\subsection{Proof for \Cref{lem:msep_output}}
Here we present a proof for \Cref{lem:msep_output}, which describes the structure of the solution returned by our Meek separator algorithm. In order to prove this lemma, we first establish the following result.

\begin{lemma}\label{lem:directedpath}
    Let $\cG$ be a connected moral DAG and $v \in V(\cG)$ be an arbitrary vertex. For any connected component $\cH \in CC(\cE_{v}(\cG))$, there exists a directed path from $v$ to $\cH$ in $\cE_{v}(\cG)$ if and only if $\cH$ satisfies $V(\cH) \subseteq \Des(v)$. Furthermore, we can certify if $\cH$ is a subset of $\cG \backslash \Des[v]$ or $\Des(v)$ in polynomial time.
\end{lemma}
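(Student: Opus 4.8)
The plan is to prove both directions of the equivalence and then argue the polynomial-time certification. Throughout, I will exploit two structural facts that are available: Lemma \ref{lem:tt}, which says every connected component $\cH \in CC(\cE_v(\cG))$ satisfies $V(\cH) = \{v\}$, or $V(\cH) \subseteq B_v = \Des(v)$, or $V(\cH) \subseteq A_v = V(\cG)\setminus\Des[v]$; and Proposition \ref{thm:properties}, in particular that $skel(\cG^{\cI})$ is exactly the set of chain components and that intervening orients exactly the edges in $R(\cG,v)$ plus Meek closure, with no new v-structures introduced.

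First I would handle the ``only if'' direction. Suppose there is a directed path $v \to w_1 \to \cdots \to w_k$ in $\cE_v(\cG)$ with $w_k \in V(\cH)$. Since every oriented arc $x\to y$ in $\cE_v(\cG)$ is consistent with the ground-truth DAG $\cG$ (the $\cI$-essential graph only orients arcs in their true direction), a directed path in $\cE_v(\cG)$ is a directed path in $\cG$; hence $w_k \in \Des(v)$, so $V(\cH) \cap \Des(v) \neq \varnothing$. By Lemma \ref{lem:tt}, the only option compatible with this is $V(\cH) \subseteq \Des(v) = B_v$ (it cannot be $\{v\}$ since $v$ is intervened and forms its own singleton component, and a nonempty intersection with $\Des(v)$ rules out $V(\cH)\subseteq A_v$). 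This gives $V(\cH) \subseteq \Des(v)$.

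The ``if'' direction is the main obstacle, and here is the approach. Assume $V(\cH) \subseteq \Des(v)$ with $\cH$ a genuine chain component (so $|V(\cH)|\ge 1$ and its edges are unoriented in $\cE_v(\cG)$). Pick any $w \in V(\cH)$. In $\cG$ there is a directed path from $v$ to $w$; I want to show that such a path exists already in the oriented part $\cE_v(\cG)$. The key claim is: for every vertex $t \in \Des(v)$, there is a directed path from $v$ to $t$ in $\cE_v(\cG)$ that lies entirely in oriented arcs. I would prove this by induction on the topological distance of $t$ from $v$ in $\cG$. The base case is a child $t$ of $v$: the arc $v\to t$ is oriented in $\cE_v(\cG)$ because intervening on $v$ orients all edges incident to $v$. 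For the inductive step, take a shortest directed path $v \to \cdots \to s \to t$ in $\cG$; by induction there is a directed path from $v$ to $s$ in $\cE_v(\cG)$, so it suffices to show the arc $s\to t$ is oriented in $\cE_v(\cG)$. If $s \to t$ is already oriented we are done. Otherwise $s\sim t$ is unoriented, meaning $s$ and $t$ lie in the same chain component $\cH'$ of $\cE_v(\cG)$; by Lemma \ref{lem:tt} applied to $\cH'$, since $s,t\in\Des(v)$, we get $V(\cH')\subseteq\Des(v)$. Now I want to derive that Meek rule R2 (or R1) fires to orient $s\to t$: we have an oriented arc into $s$ coming from the path (say $p \to s$ with $p \in \Des[v]$), and I need the configuration $p\to s\to t$ with the right non-adjacency, or some chain of Meek applications, to conclude $s\to t$ is oriented — the careful bookkeeping of which Meek rule closes this off is the delicate part. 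An alternative, cleaner route avoiding Meek-rule case analysis: use Lemma \ref{thm:orienting-vertices-form-an-interval}, which says $R^{-1}_1(\cG, s\to t) = \Des[w']\cap\Anc[t]$ for some $w'\in\Anc[s]$; since $v$ is an ancestor of $s$ (as $s\in\Des(v)$ lies on a directed path from $v$) — wait, that is not quite what's needed; rather, I use that the arc $s\to t$ being unoriented in $\cE_v(\cG)$ means $v \notin \Des[w']\cap\Anc[t]$, and combine with $v\in\Anc[s]\subseteq\Anc[t]$ to force $v\notin\Des[w']$, i.e. $w'$ is not a descendant of $v$, which with $w'\in\Anc[s]$ and $s\in\Des(v)$ lets me locate a $v$-to-$s$ path through an oriented region — I would need to reconcile these two approaches and pick whichever gives the cleanest argument; I expect the Lemma \ref{thm:orienting-vertices-form-an-interval} route to work but require care. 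Once the claim holds, taking $t = w$ gives a directed path from $v$ into $\cH$, completing the equivalence.

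Finally, for the polynomial-time certification: given $\cH \in CC(\cE_v(\cG))$, check in $\cE_v(\cG)$ whether any vertex of $\cH$ is reachable from $v$ via a directed path using only oriented arcs — a single graph search in $\cO(|E|)$ time. By the equivalence just proved, reachability holds iff $V(\cH)\subseteq\Des(v)=B_v$; otherwise, by Lemma \ref{lem:tt}, $V(\cH)\subseteq A_v = \cG\setminus\Des[v]$ (the singleton $\{v\}$ case is trivially distinguishable). This decides which of the two sets contains $\cH$, and runs in polynomial time. I would state this certification as the closing paragraph of the proof.
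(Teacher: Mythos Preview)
Your ``only if'' direction and your polynomial-time certification paragraph are correct and essentially identical to the paper's treatment.

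The ``if'' direction, however, has a genuine gap. Your inductive hypothesis says only that there exists \emph{some} oriented path from $v$ to $s$ in $\cE_v(\cG)$. To fire Meek rule R1 and orient $s\to t$, you need an oriented arc $p\to s$ together with $p\not\sim t$. The oriented path from the induction hypothesis gives you an oriented arc $p'\to s$ for some $p'$, but nothing forces $p'\not\sim t$; and conversely, the predecessor $p$ of $s$ on the shortest $v\text{-to-}t$ path in $\cG$ does satisfy $p\not\sim t$ (else you would have a shortcut), but you have not established that $p\to s$ is oriented. Your alternative route via Lemma~\ref{thm:orienting-vertices-form-an-interval} is likewise left unfinished: from $v\notin\Des[w']$ you do not reach a contradiction or construct the required path.

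The paper closes this gap cleanly without induction. Take a \emph{shortest} directed path $P: v=v_0\to v_1\to\cdots\to v_\ell\to w$ in $\cG$ from $v$ to a vertex $w\in V(\cH)$, and suppose some edge of $P$ is unoriented in $\cE_v(\cG)$. Let $v_j\sim v_{j+1}$ be the \emph{first} unoriented edge along $P$. Since all edges incident to $v$ are oriented, $j\ge 1$, so $v_{j-1}\to v_j$ is oriented (it precedes the first unoriented edge). If $v_{j-1}\not\sim v_{j+1}$, Meek R1 would orient $v_j\to v_{j+1}$, contradicting our choice; hence $v_{j-1}\sim v_{j+1}$, and acyclicity forces $v_{j-1}\to v_{j+1}$ in $\cG$, contradicting that $P$ is shortest. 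Thus $P$ is fully oriented, giving the desired directed path from $v$ to $\cH$ in $\cE_v(\cG)$. Your inductive framework can be salvaged by strengthening the hypothesis to ``every edge on a shortest $v\text{-to-}t$ path in $\cG$ is oriented in $\cE_v(\cG)$,'' but once you do that the inductive step is exactly the paper's first-unoriented-edge argument, so the direct version is simpler.
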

\begin{proof}

    Suppose there exists a directed path from $v$ to a connected component $\cH$, then note that all the vertices in this directed path are descendants of $v$ and there exists a vertex $u \in V(\cH)$ such that, $u \in \Des(v)$. Furthermore, by \Cref{lem:tt}, we know that all the connected components $\cH$ should satisfy: $V(\cH)=\{v \}$ or $V(\cH)=\Des(v)$ or $V(\cH)=V(\cG) \backslash \Des(v)$. Using the fact that there exists a vertex $u \in V(\cH)$ which belongs to $\Des(v)$. Therefore, $\Des(v) \cap V(\cH)$ is not empty and we conclude that $V(\cH) \subseteq \Des(v)$.

    In the subsequent part of the proof, we examine the converse direction. Assume there exists a connected component $\cH$ such that $V(\cH) \subseteq \Des(v)$, and we aim to demonstrate the existence of a directed path from $v$ to $\cH$ in the interventional essential graph $\cE_v(\cG)$. Since $\cH \subseteq \Des(v)$, we consider the shortest directed path from $v$ to $\cH$ in $\cG$. Let $w$ be the endpoint of this path $P$ within $\cH$.

Suppose that all edges in $P$ are oriented. In this case, we have already found a directed path from $v$ to $\cH$ in $\cE_v(\cG)$, and our objective is achieved. Hence, we focus on the scenario where some edges in $P$ are unoriented. Let $P: v = v_0 \to v_1 \to \dots \to v_{\ell} \to w$ denote the vertices along the path $P$, and let $v_j \sim v_{j+1}$ represent the first unoriented edge encountered.

Since all edges incident to $v$ are oriented, we have $v_j \neq v$. Now, consider the situation where $v_{j-1}$ and $v_{j+1}$ are not adjacent. In such a case, the Meek rule R1 would orient the edge $v_j \to v_{j+1}$. However, since $v_j \to v_{j+1}$ remains unoriented, it implies that the edge $v_{j-1} \to v_{j+1}$ must exist. However, this contradicts the fact that $P$ is the shortest directed path. Therefore, it must be the case that $P$ is either entirely oriented or entirely unoriented. As some edges in $P$ are oriented, we conclude that path $P$ is completely oriented, thereby establishing the presence of a directed path from $v$ to $\cH$ in the interventional essential graph $\cE_{v}(\cG)$.

In the remaining part of the proof, we discuss the time complexity of finding these directed paths.
    To certify whether $V(\cH) \subseteq \Des(v) \text{ or }V(\cH) \subseteq V(\cG) \backslash \Des[v]$, all we need to do is check whether a directed path exists between $v$ and any $u\in V(\cH)$. If the direction of the path is from $v$ to $\cH$, then $V(\cH) \subseteq \Des(v)$ or else $V(\cH) \subseteq V(\cG) \backslash \Des[v]$. To check whether a directed path exists between two vertices can be done in polynomial time and therefore we can certify if $V(\cH) \subseteq \Des(v) \text{ or }V(\cH) \subseteq V(\cG) \backslash \Des[v]$. We conclude the proof.
\end{proof}

\lemmsepoutput*
\begin{proof}
    Consider a clique $K$ and a true ordering $\pi$ defined on the vertices $V(\cG)$ of the underlying ground truth DAG $\cG$. Let $v_{1}, v_{2}, \ldots, v_{k}$ represent the vertices in the clique $K$, labeled according to the true ordering $\pi$. In other words, $v_{i}$ precedes $v_{j}$ in $\pi$ whenever $i<j$. Since $K$ is a $1/2$-Clique separator, according to \Cref{lem:meeksep}, we have $|A_{v_1}| \leq |V(\cG)|/2$. Additionally, based on \Cref{lem:exis_good_soln}, we know that $|A_{v_{j}}| \leq |A_{v_{j+1}}|$ for all $j \in [1,k-1]$. Let $v_{j^{*}}$ be the last vertex within the clique $K$ in the true ordering $\pi$ such that $|A_{v_{j^{*}}}| \leq |V(\cG)|/2$.

    In our proof, we demonstrate that our algorithm can either discover a $1/2$-Meek separator or identify the vertex $v_{j^{*}}$ within $\cO(\log |K|)$ interventions, on average. If, at any stage of the algorithm, $u_{i}$ is a $1/2$-Meek separator, our task is complete. Consequently, for the remainder of the proof, we concentrate on the alternative scenario and establish that our algorithm locates $v_{j^{*}}$. In this case, we observe that either $v_{j^{*}}$ corresponds to the sink node of $K$ or we uncover the subsequent vertex, $v_{j^{*}+1}$, in the ordering. Notably, according to \Cref{lem:meeksep}, either case implies that $\cI=\{u\}$ or $\cI=\{u,x\}$ constitutes a $1/2$-Meek separator.

    During each iteration $i$ of our algorithm, we intervene on a uniformly random chosen vertex $u_{i}$ from the remaining clique $K_{i}$. Let $\cH_{i}$ represent the largest connected component in the interventional essential graph $\cE_{u_{i}}(\cG)$. The algorithm terminates when the size of the clique becomes empty. Notably, when we intervene on a vertex $u_{i}$, by \Cref{lem:directedpath}, the existence of a directed path from $u_{i}$ to $\cH_i$ implies that $\cH_i$ corresponds to a connected component in the descendant subgraph of $u_i$. Since $|V(\cH_i)| > |V(\cG)|/2$, it follows that $|\Des(u_{i})| > |V(\cG)|/2$, which further implies $|A_{u_{i}}| \leq |V(\cG)|/2$. We assign $u=u_{i}$ and observe that $|\Des(u)|=|\Des(u_{i})|> |V(\cG)|/2$, implying $|A_{u}| \leq |V(\cG)|/2$. In this scenario, we recursively proceed with the set $Q_{u_i} \cap K_{i}$, which comprises the nodes that appear after the vertex $u_{i}$ in the true ordering $\pi$. It is worth noting that these vertices $y \in Q_{u_i} \cap K_{i}$ satisfy $|A_{y}| \geq |A_{u_{i}}|$ since $y \in \Des(u_{i})$ (\Cref{lem:meeksep}).

In the other case, when no path exists from $u_{i}$ to $\cH_{i}$, by \Cref{lem:directedpath}, we deduce that $V(\cH_i) \subseteq A_{u_{i}}$, thereby leading to $|A_{u_{i}}| > V(\cG)/2$. Consequently, we assign $x=u_{i}$ and therefore $|A_{x}| > V(\cG)/2$. In this situation, we recursively proceed with the set $P_{u_{i}} \cap K_{i}$, which represents the vertices appearing before the vertex $u_{i}$ in the true ordering $\pi$.

    In both cases, it is important to note that $K_{i}$ always consists of a contiguous set of vertices (defined by the true ordering) within the clique $K$. Let $s_{i+1}$ and $t_{i+1}$ denote the source and sink vertices, respectively, in the remaining clique $K_{i+1}$. In the first case, where $K_{i+1}=Q_{u_i} \cap K_{i}$, the vertex $u=u_{i}$ satisfies $\Des(u) \cap (V(\cG) \backslash \Des(s_{i+1})) \cap K = \varnothing$. In the latter case, where $K_{i+1}=P_{u_i} \cap K_{i}$, the vertex $x=u_{i}$ satisfies $\Des(t_{i+1}) \cap (V(\cG) \backslash \Des(x)) \cap K = \varnothing$. In simpler terms, this means that vertex $u$ (immediate parent) precedes the remaining clique $K_{i+1}$ in the true ordering $\pi$, while vertex $x$ (immediate child) succeeds it within the clique $K$.

Our algorithm terminates when the remaining clique becomes empty, implying that either $u$ is a sink vertex or $u$ and $x$ are consecutive vertices within the clique. In other words, $\Des(u) \cap (V(G) \backslash \Des(x)) \cap K = \varnothing$. Therefore, the solution returned by our algorithm satisfies the conditions of the lemma. The only remaining task is to bound the number of interventions required by our algorithm.

 To bound the number of interventions or iterations, we need to analyze the decrease in the size of the clique $K_i$ at each iteration. Recall that $K_{i}$ consists of a contiguous set of vertices from the original clique $K$, and in each iteration, we randomly select a vertex $u_{i}$. As discussed earlier, our algorithm either outputs a $1/2$-Meek separator at some intermediate step or performs a randomized binary search to locate the vertices $v_{j^{*}}$ and $v_{j^{*}+1}$ (if it exists), satisfying $|A_{v_{j^{*}}}| \leq |V(\cG)|/2$ and $|A_{v_{j^{*}+1}}| > |V(\cG)|/2$.

Since the parents and children of a vertex $v$ within the clique $K$ are known after intervening on it, the algorithm essentially performs a binary search to find the vertices $v_{j^{*}}$ and $v_{j^{*}+1}$. The standard analysis of randomized binary search provides an expected upper bound of $\cO(\log |K|)$ iterations. 

Combining these insights, we can conclude the proof by establishing that the expected number of iterations is bounded by $\cO(\log |K|)$.
\end{proof}

\subsection{Proof for \Cref{thm:meekseparator}}
\thmmeeksep*
\begin{proof}
The proof of the theorem for a moral DAG follows immediately by combining \Cref{lem:msep_output} and \Cref{lem:meeksep}.

In the remainder, we extend our result to encompass general directed acyclic graphs. While an informal argument for general DAGs has been provided at the beginning of \Cref{sec:meek_sep}, we now delve into further details to substantiate that argument. Let us recall the definition of $\cG^{\cI} = \cG[E \setminus R(\cG,\cI)]$ and focus on the graph $\cG^{\varnothing}$. This subgraph is derived by removing both the v-structure edges and the oriented edges resulting from the application of Meek rules. According to \Cref{thm:properties}, we establish that $\cG^{\varnothing}$ does not introduce any new v-structures and, therefore, is a moral DAG.

To proceed, let $\cH \in CC(\cE_{\varnothing}(\cG))$ be the largest connect component within $\cE_{\varnothing}(\cG)$ and designate $\cI$ as a $1/2$-Meek separator of $\cH$. Note that $R(\cH,\cI) \subseteq R(\cG^{\varnothing},\cI)$ and as highlighted in \Cref{thm:properties}, we also determine that $R(\cG^{\varnothing},\cI)=R(\cG,\cI)\backslash R(\cG,\varnothing)$. Consequently, the connected components within both the intervention essential graphs $\cE_{\cI}(\cG)$ and $\cE_{\cI}(\cG^{\varnothing})$ are identical. Since $\cI$ is a $1/2$-Meek separator for $\cG^{\varnothing}$, it also functions as a half $1/2$-Meek separator for $\cG$. 
\end{proof}

\section{Remaining Proofs for Subset Search}\label{app:d}
Here we present all the proofs for the subset search problem. The proof for the lower bound and the upper bound results are presented in \Cref{sec:lbsubsearch} and \Cref{sec:ubsubsearch} respectively. 

\subsection{Lower Bound for Subset Search (\Cref{lem:lb})}\label{sec:lbsubsearch}
Here we prove our lower bound result for the subset verification problem.
\lemlb*
\begin{proof}
Consider an intervention set $\cI$ and let $CC(\cE_{\cI}(\cG))$ be the set of all connected components in the intervention essential graph $\cE_{\cI}(\cG)$. As interventions within each connected components are independent\cite{hauser2014two}, we have that,
$$\nu_{1}(\cG,T) \geq \sum_{\cH \in CC(\cE_{\cI}(\cG))}\nu_{1}(\cH,T\cap E(\cH))~.$$
For each $\cH \in CC(\cE_{\cI}(\cG))$, where $T\cap E(\cH)$ is non-empty, it is trivial that $\nu_{1}(\cH,T\cap E(\cH)) \geq 1$ as we need at least one intervention to orient all the edges in $T\cap E(\cH)$. Therefore the lower bound follows, which concludes the proof.
\end{proof}

\subsection{Upper Bound for Subset Search (\Cref{thm:lb})}\label{sec:ubsubsearch}
Here we present comprehensive proof for the upper bound result. Although a proof sketch of this result has already been presented in \Cref{sec:sub_search}, we now provide additional details to solidify our argument.

\thmub*
\begin{proof}
    As discussed in \Cref{sec:sub_search}, the correctness of the \Cref{alg:subset-search} follows because of the termination condition of the algorithm. Note that, upon termination, all the connected components that encompass the target edges $T$ have a size of $1$. This observation leads to the immediate implication that all the edges belonging to $T$ are oriented.

    In the remaining part of the proof, we analyze the cost of the algorithm and divide the analysis into two parts: the number of outer loops and the cost per loop.

The bound on the number of outer loops is straightforward. Since the size of connected components containing the target edges decreases at least by a factor of $1/2$ in each iteration, and after $\cO(\log n)$ iterations all connected components either consist of a single vertex or do not have any incident target edges $T$, therefore our algorithm terminates in $\cO(\log n)$ iterations.

To bound the cost per loop, note that, we invoke the Meek separator only on the connected components $\cH$ that have at least one target edge. For each such component $\cH$, \Cref{lem:lb} establishes that any algorithm must perform at least one intervention on this component. Our Meek separator algorithm performs at most $\cO(\log \omega(\cH)) \in \cO(\log \omega(\cG))$ interventions for each component $\cH$. Hence, in any iteration, we perform at most $O(\log \omega(\cG)) \nu_{1}(\cG,T)$ interventions. It is important to mention that when $T=E$, we use the lower bound from \cite{squires2020active} which states that any algorithm would require at least $\Omega(\omega(\cH))$ interventions to orient all edges within the connected component $\cH$, whereas we only perform $O(\log \omega(\cH))$ interventions in that iteration. Consequently, in the special case of $T=E$, the cost per iteration is at most $O(\nu_{1}(\cG,E))$.

Combining these analyses, we conclude that our algorithm orients all the edges in $T$ by performing at most $\cO(\log n \log \omega(\cG)) \nu_{1}(\cG,T)$ interventions. In the special case of $T=E$, our total number of interventions is at most $\cO(\log n)  \cdot \nu_{1}(\cG)$. This completes the proof.
\end{proof}

\section{Remaining Proofs for Causal Mean Matching}\label{app:meanmatch}

Similar to the subset search, we use the Meek separator as a subroutine to provide an approximation algorithm. A crucial step of our algorithm is a source finding algorithm, which given an essential graph $\cE(\cG)$ and a subset of nodes $U \subseteq V(\cG)$, returns a source node of $U$ by performing at most $\cO(\log n\cdot \log \omega(\cG))$ number of interventions. Such a source-finding algorithm immediately helps us solve the causal state-matching problem. In the remainder of the section we provide the source finding algorithm and use it solve the causal state matching problem. To understand our source finding algorithm, we need the following lemma.

\begin{lemma}\label{lem:source_node}
Let $\cG=(V,E)$ be a DAG, $\cI \subseteq V$ be an intervention set and let $S , \cH \in \cE_{\cI}(\cG^*)$. Suppose there exists a directed path from $S$ to $\cH$, then no directed path exists from $\cH$ to $S$.
Furthermore, if $s,t \in V$ be such that $t \in \Des(s)$ and they are not in the same connected component, then there exists a directed path that is oriented from the connected component containing $s$ to the connected component containing $t$.
\end{lemma}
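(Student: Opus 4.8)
The plan is to prove both statements using the acyclicity of the underlying DAG $\cG$ together with the structural properties of $\cI$-essential graphs recorded in Proposition~\ref{thm:properties}. For the first statement, I would argue by contradiction. Suppose there is a directed path from the chain component $S$ to the chain component $\cH$ and also a directed path from $\cH$ to $S$ in $\cE_\cI(\cG)$. Every oriented arc of $\cE_\cI(\cG)$ lies in $R(\cG,\cI)\subseteq A(\cG)$, hence is consistent with the true orientation in $\cG$ and with any valid topological ordering $\pi$ of $\cG$. Concatenating the two directed paths would produce a closed directed walk using only arcs of $\cG$, which is impossible since $\cG$ is acyclic (equivalently, $\pi$ would have to strictly decrease along a cycle). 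Here I use property~(4) of Proposition~\ref{thm:properties} only to note that $S\neq\cH$ as distinct chain components, so the walk genuinely revisits a vertex; the contradiction with acyclicity then closes the argument.

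For the second statement, I would take $s,t\in V$ with $t\in\Des(s)$ lying in distinct chain components $\cH_s,\cH_t$ of $\cE_\cI(\cG)$, and fix a directed path $s=w_0\to w_1\to\cdots\to w_\ell=t$ in $\cG$. Walking along this path, consider the chain components $\cC_0,\cC_1,\dots,\cC_\ell$ containing $w_0,\dots,w_\ell$ respectively, with $\cC_0=\cH_s$ and $\cC_\ell=\cH_t$. For each consecutive pair $w_j\to w_{j+1}$, either both endpoints lie in the same chain component (the edge is unoriented in $\cE_\cI(\cG)$), or they lie in different ones; in the latter case the arc $w_j\to w_{j+1}$ must be oriented in $\cE_\cI(\cG)$, since any edge joining two different chain components of a chain graph is an arc. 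Thus, collapsing runs of vertices that stay inside a single component, the sequence of distinct components visited is linked by oriented arcs of $\cE_\cI(\cG)$, giving a directed path in $\cE_\cI(\cG)$ from $\cH_s$ to $\cH_t$. (Since $\cH_s\neq\cH_t$ this path is nonempty.) To make "directed path from one component to another'' precise I would simply mean a directed path in $\cE_\cI(\cG)$ whose first vertex lies in $\cH_s$ and whose last vertex lies in $\cH_t$; by property~(3) of Proposition~\ref{thm:properties} the recovered parents are uniform within a component, so the particular representative vertices do not matter.

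The main obstacle, such as it is, is bookkeeping rather than depth: I must be careful that an edge of $\cG$ whose two endpoints lie in different chain components of $\cE_\cI(\cG)$ is indeed \emph{oriented} in $\cE_\cI(\cG)$ (so that the collapsed path is genuinely directed), and that this orientation agrees with the arc direction in $\cG$ (so that no directed cycle is manufactured in the first part). Both facts follow from $\cE_\cI(\cG)$ being a chain graph with $A(\cE_\cI(\cG))=R(\cG,\cI)\subseteq A(\cG)$: within a chain component all edges are undirected by definition of chain component, so an edge $u\sim v$ of $\cG$ that is \emph{not} oriented in $\cE_\cI(\cG)$ has $u,v$ in the same component; contrapositively, $u,v$ in different components forces $u\to v$ (or $v\to u$) in $\cE_\cI(\cG)$, and this orientation is inherited from $\cG$. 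Once this is pinned down, both claims follow immediately, and I expect the whole proof to be short.
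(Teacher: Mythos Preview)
Your argument for the second claim is correct and cleaner than the paper's: you walk along a directed $s\to t$ path in $\cG$ and note that every edge crossing between distinct chain components must already be oriented in $\cE_\cI(\cG)$ (by the definition of chain component), with orientation inherited from $\cG$; collapsing the within-component runs yields the desired directed path. The paper instead takes a shortest path and invokes Meek rule~R1 to rule out unoriented edges along it, which is more work for the same conclusion.

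Your argument for the first claim, however, has a gap. A directed path ``from $S$ to $\cH$'' runs from some $s_1\in S$ to some $h_1\in \cH$, and the reverse path from some $h_2\in \cH$ to some $s_2\in S$; nothing forces $h_1=h_2$ or $s_1=s_2$. Concatenating the two paths therefore does \emph{not} give a closed directed walk in $\cG$: to close it you must travel from $h_1$ to $h_2$ inside $\cH$ and from $s_2$ to $s_1$ inside $S$, and those arcs of $\cG$ may point either way. Concretely, the DAG on $\{s_1,s_2,h_1,h_2\}$ with arcs $s_1\to h_1$, $h_1\to h_2$, $h_2\to s_2$, $s_1\to s_2$ is acyclic, so acyclicity of $\cG$ alone cannot exclude the configuration; what excludes it is that the cycle $s_1\to h_1 \sim h_2\to s_2 \sim s_1$ would be a semi-directed cycle in $\cE_\cI(\cG)$, contradicting its being a chain graph. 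The easy fix is exactly this: close the walk with undirected paths inside $S$ and $\cH$ and invoke the chain-graph property of $\cE_\cI(\cG)$ rather than acyclicity of $\cG$. The paper takes a different route, using property~(3) of Proposition~\ref{thm:properties}: the oriented parent of $h_1$ on the first path is also a recovered parent of $h_2$ (since $h_1,h_2$ lie in the same component), which forces $h_2\in\Des(s_1)$ in $\cG$; symmetrically $s_1\in\Des(h_2)$, and \emph{then} one genuinely has a directed cycle in $\cG$.
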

\begin{proof}
We are given that there exists a directed path from $S$ to $\cH$. 
As there exists a directed path from $S$ to $\cH$, there exist a vertex $s \in V(S)$ and $t \in V(\cH)$ such that $t \in \Des(s)$.

Suppose for a contradiction assume that there exists a directed path from $\cH$ to $S$. Let $u$ and $v$ be the endpoint of this directed path in $\cH$ and $S$ respectively. By \Cref{thm:properties}, we know that all the recovered parents for the vertices within the same connected component are the same; therefore we have that, $\Pa_{u,\cI}(\cG)= \Pa_{t,\cI}(\cG)$ and $\Pa_{s,\cI}(\cG)= \Pa_{v,\cI}(\cG)$. 
Let $t'$ be the parent of $t$ on the directed path that connects the vertices $s$ and $t$. Note that $t' \in \Des[s]$ and $t'$ could be $s$ or some other vertex. As $s$ and $t$ belong to different connected components $t' \to t$ is oriented. As $\Pa_{u,\cI}(\cG)= \Pa_{t,\cI}(\cG)$, we have that $t' \in \Pa_{u,\cI}(\cG)$, therefore $u \in \Des(t')$ which further implies $u \in \Des(s)$. 
As $v \in \Des(u)$ and as they are in different connected components, a similar argument as above implies that $s \in \Des(u)$, which is a contradiction. Therefore $\cH$ to $S$ directed path does not exist. The analysis conducted above verifies the first claim of the lemma. In the subsequent portion, we focus on establishing the second part.

Consider vertices $s$ and $t$ such that $t \in \Des(s)$. If both $s$ and $t$ belong to the same connected component in the interventional essential graph $\cE_{\cI}(\cG)$, the lemma statement holds trivially. Henceforth, we assume that $s$ and $t$ belong to distinct connected components, denoted as $S$ and $T$ respectively.

Since $t \in \Des(s)$, there exists a directed path from vertex $s$ to vertex $t$ in the ground truth DAG $\cG$. Let us denote $Q$ as the shortest directed path from $s$ to $t$ in $\cG$. To form path $P$, we remove the edges within the connected components $S$ and $T$ from $Q$ and retain only the edges that connect $S$ and $T$. Consequently, $P$ represents this modified portion of path $Q$. Furthermore, let $v$ and $w$ be the respective endpoints of path $P$ in $S$ and $T$. As $S$ and $T$ are two different connected components, we have that some of the edges in $P$ are oriented.


Suppose that all edges in $P$ are oriented. In this case, we have already found a directed path from $S$ to $T$ in $\cE_{\cI}(\cG)$, and our objective is achieved. Hence, we focus on the scenario where some edges in $P$ are unoriented. Let $P: v = v_0 \to v_1 \to \dots \to v_{\ell} \to w$ denote the vertices along the path $P$, and let $v_j \sim v_{j+1}$ represent the first unoriented edge encountered.

Since $v_1$ does not belong to the connected component $S$, we have that $v \to v_1$ is oriented and we have $v_j \neq v$. Now, consider the situation where $v_{j-1}$ and $v_{j+1}$ are not adjacent. In such a case, the Meek rule R1 would orient the edge $v_j \to v_{j+1}$. However, since $v_j \to v_{j+1}$ remains unoriented, it implies that the edge $v_{j-1} \to v_{j+1}$ must exist. However, this contradicts the fact that $P$ is the shortest directed path. Therefore, it must be the case that $P$ is either entirely oriented or entirely unoriented. As some edges in $P$ are oriented, we conclude that path $P$ is completely oriented, thereby establishing the presence of a directed path from $S$ to $T$ in the interventional essential graph $\cE_{\cI}(\cG)$, which concludes the proof.
\end{proof}

\subsection{Proof for \Cref{lem:src}}
We now provide the analysis of the source-finding algorithm. The guarantees of this algorithm are summarized in the lemma that follows.
\lemsrc*
\begin{proof}
    As $\cJ_{i}$ is a $1/2$-Meek separtors, we immediately have that $|V(\cH_{i+1})| \leq |V(\cH_{i})|/2$. Therefore our algorithm terminates in at most $\cO(\log n)$ iterations. By \Cref{thm:meekseparator},  note that, in each iteration, we make at most $O(\log \omega(\cH_{i})) \in 
    \cO(\log \omega(\cG^*))$ number of iterations. Therefore, the total number of interventions are at most $
    \cO(\log n\cdot \log \omega(\cG^*))$. It remains to show that we find the source node of $U$.

    As in each iteration we recurse on connected component $\cH_{i} \in C_{i}$ that has no incoming edge from any other component $\cH \in C_{i}$. From \Cref{lem:source_node}, we know that $\cH_{i}$ contains one of the source nodes of $U$. Therefore our algorithm always recurses on a connected component containing a source node until the algorithm finds it. 
\end{proof}

\begin{algorithm}[h!]
\begin{algorithmic}[1]
\caption{CausalMeanMatch$(\cE(\cG),P,\mu^*)$}
\label{alg:meanmatch}
    \State \textbf{Input}: Essential graph $\cE(\cG)$ of a DAG $\cG$, observational distribution of $V$, and desired mean $\mu^*$.
    \State \textbf{Output}: Atomic intervention set $\cI$.
    \State Initialize $\cI^*=\varnothing$ and $\cI=\{\varnothing\}$.\;
    \State \textbf{while} {$\E_{P^{\cI^*}}(V)\neq \mu^*$}{
    \State  \hspace{10pt} Let $T=\{i|i\in [p], \E_{P^{\cI^*}}(V_i)\neq\mu^*_i\}$.\;
    \State  \hspace{10pt} Let $\cG$ be the subgraph of $\cE_{\cI}(\cG)$ induced by $T$.\;
    \State  \hspace{10pt} Let $U_T$ be the identified source nodes in $T$.\;
     \State  \hspace{10pt} \textbf{while} {$U_T=\varnothing$}{
    \State  \hspace{20pt} Let $\cC$ be a chain component of $\cG$ with no incoming edges.\;
    \State\hspace{20pt} Perform interventions by $\findsrc(\cE(\cG),S)$ and append interventions to $\cI$.\;
    \State\hspace{20pt} Update $\cG$ and $U_T$ as the outer loop.\;
   }
    \State\hspace{10pt} Set $a_i=\mu^*_i-\E_{P^{\cI^*}}(V_i)$ for $i$ in $U_T$.\;
    \State\hspace{10pt} Include the atomic interventions with perturbation targets $U_T$ and shift values $\{a_i\}_{i\in U_T}$ respectively in $\cI^*$ and perform $\cI^*$.\;
 }
    \State \textbf{return} $\cI^*$
\end{algorithmic}
\end{algorithm}

\subsection{Proof for \Cref{thm:mean_matching}}

Given such a source-finding algorithm, we can use it to solve the mean matching problem. We summarize this result below.

\thmmeanmatch*

\begin{proof}
    Note the outer loop in \Cref{alg:meanmatch} takes at most $|\cI^*|$ round. In each of this round, the inner loop is ended with at most $
    \cO(\log n \log \omega(\cG^*))$ interventions in expectation, as proven by \Cref{lem:src}. Therefore, in expectation, the number of interventions performed is upper bounded by $
    \cO(\log n \log \omega(\cG^*))|\cI^*|$.
\end{proof}
\section{Details of Numerical Experiments}\label{app:experiment}

We implemented our methods using the NetworkX package \cite{hagberg2008exploring} and the CausalDAG package \url{https://github.com/uhlerlab/causaldag}. All code is written in Python and run on CPU. The source code of our implementation can be found at \url{https://github.com/uhlerlab/meek_sep}.

\subsection{Subset Search}
\textbf{Problem Generation:} We consider the $r$-hop model in \cite{choo2023subset}. In this model, an Erdös-Rényi graph \cite{erdHos1959renyi} with edge density $0.001$ on $n$ nodes is first generated. Then a random tree on these $n$ nodes is generated. The final DAG is obtained by (1) combining the edge sets using a fixed topological order, where $u\rightarrow v$ if it is in the combined edge sets and $u$ has a smaller vertex label than $v$, and (2) removing v-structures by connecting $u\rightarrow w$ where $u\rightarrow v\leftarrow w$ and $u$ has a smaller vertex label than $w$. Then the subset of edges is selected to be the edges within $r$-neighborhood of a randomly picked vertex. 

\textbf{Multiple Runs:} For each dot presented in the results, we run each method on $20$ different instances using the generation method described above. The average and standard deviation across instances are reported in the results.

\Cref{fig:4} shows similar results as \Cref{fig:2a} on the $3$-hop model, where we vary the number of hops $r\in\{1,2,4,5\}$ on DAGs with different sizes. \Cref{fig:5} shows the trend of varying number of hops on DAGs with different sizes. We observe our method \texttt{MeekSep} and \texttt{MeekSep-1} to consistently outperform existing baselines.

\begin{figure}[H]
     \centering
     \begin{subfigure}[b]{0.4\textwidth}
         \centering
    \includegraphics[width=0.7\textwidth]{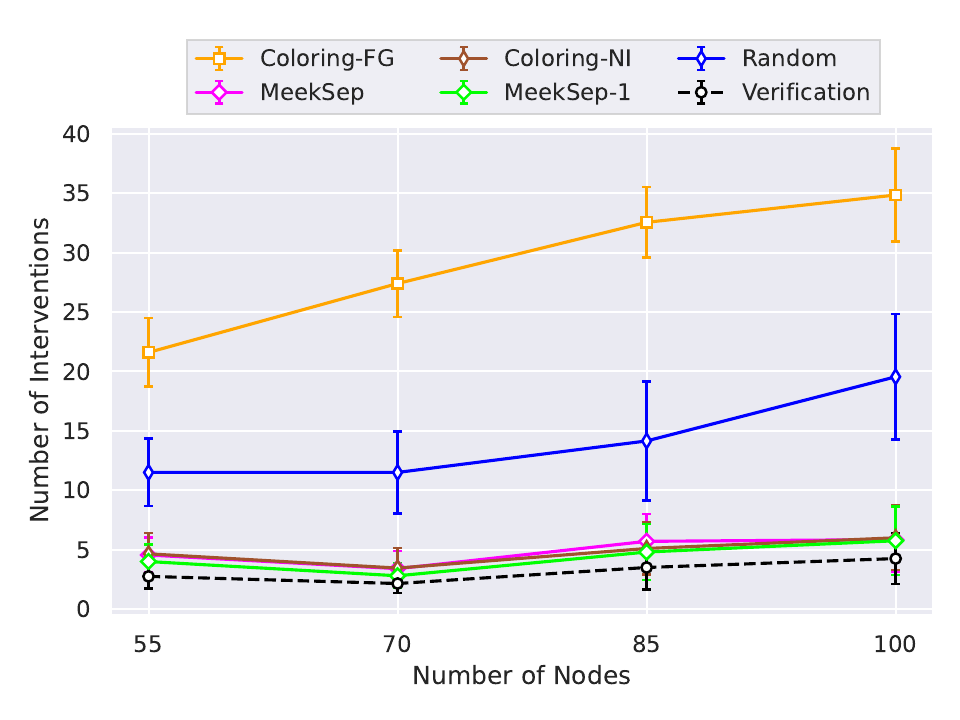}
         
         \caption{$r=1$}
     \end{subfigure}
     \begin{subfigure}[b]{0.4\textwidth}
         \centering\includegraphics[width=0.7\textwidth]{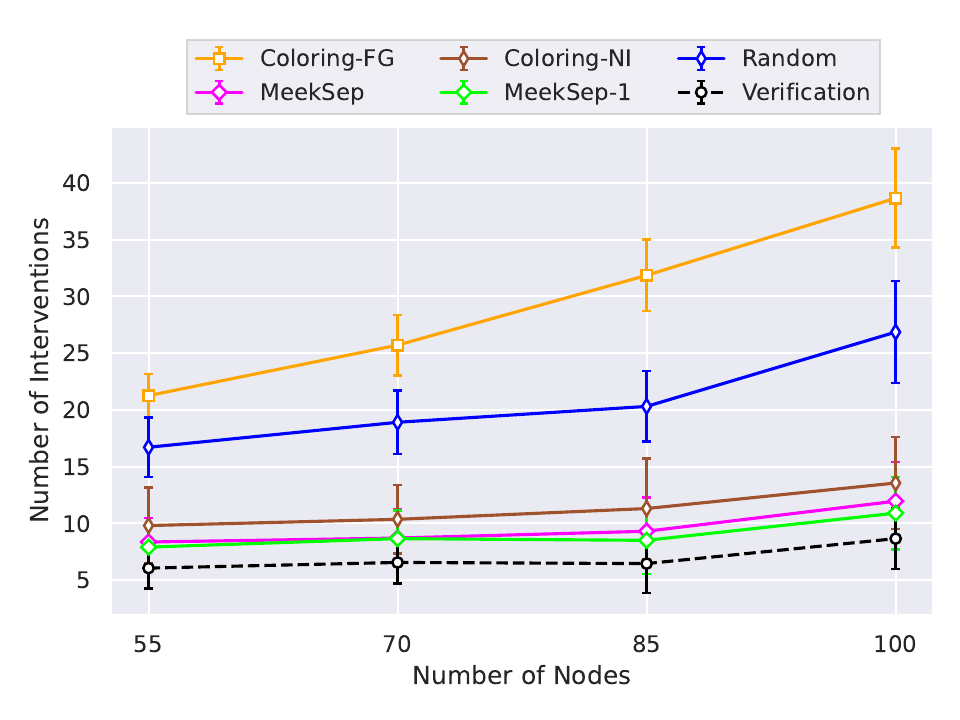}
         \caption{$r=2$}
     \end{subfigure}
    \hfill
    \begin{subfigure}[b]{0.4\textwidth} \centering\includegraphics[width=0.7\textwidth]{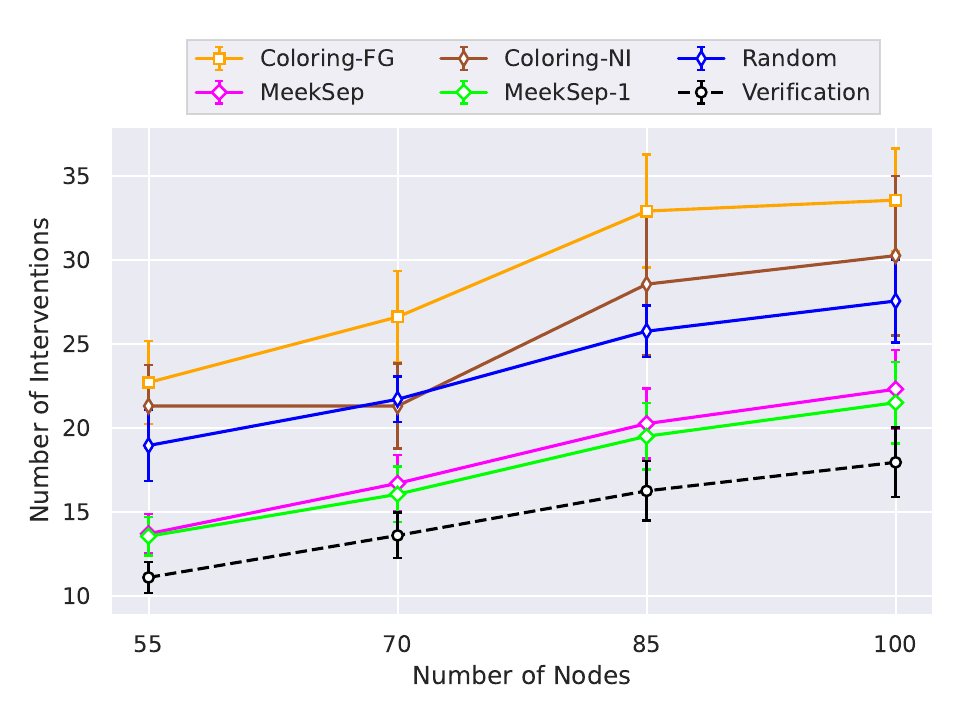}
         \caption{$r=4$}
     \end{subfigure}
        \begin{subfigure}[b]{0.4\textwidth}
         \centering       
         \includegraphics[width=0.7\textwidth]{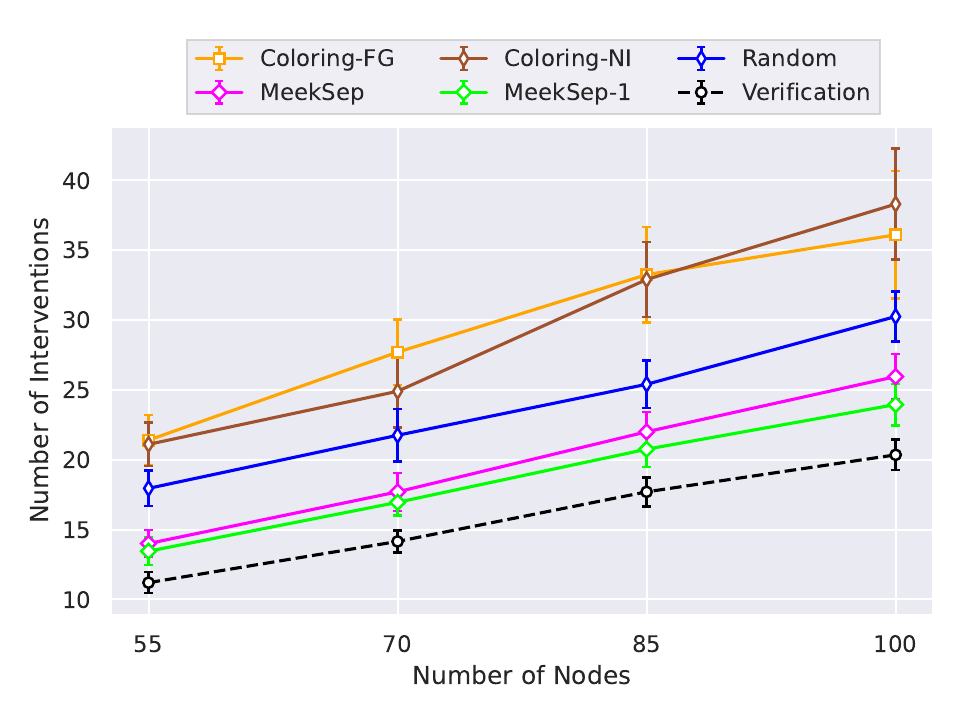}
         \caption{$r=5$}
     \end{subfigure}
    \caption{Meek separator for subset search on $r$-hop model. Each dot is averaged across $20$ DAGs, where the error bar shows $0.5$ standard deviation.}\label{fig:4}
    \vspace{-0.2in}
\end{figure}

\begin{figure}[t]
     \centering
     \begin{subfigure}[b]{0.4\textwidth}
         \centering
    \includegraphics[width=0.7\textwidth]{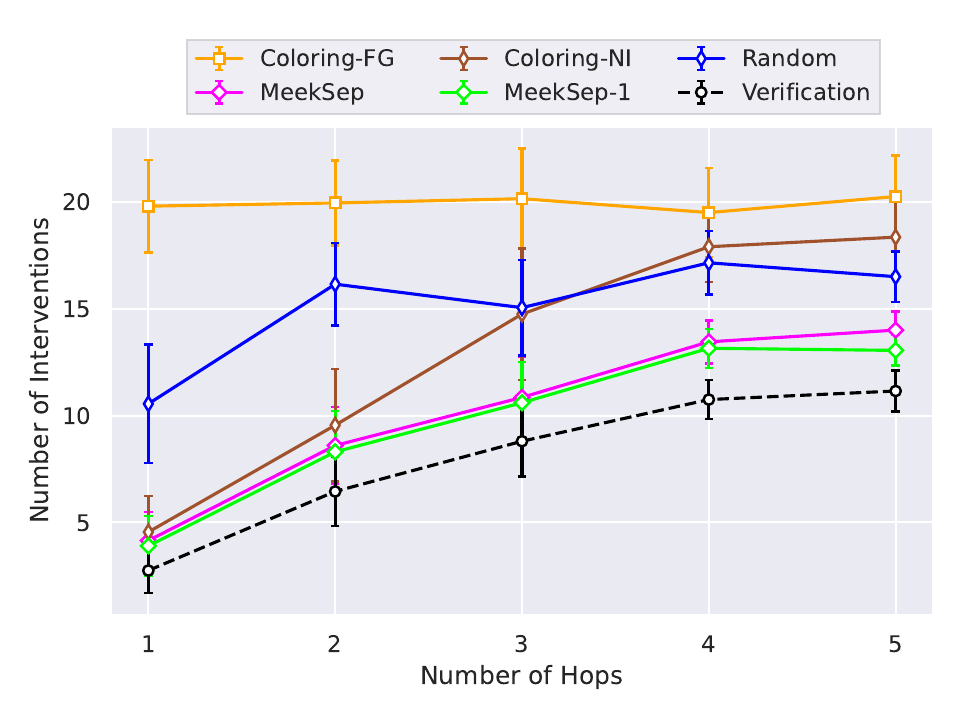}

         \caption{$n=50$}
     \end{subfigure}
     \begin{subfigure}[b]{0.4\textwidth}
         \centering\includegraphics[width=0.7\textwidth]{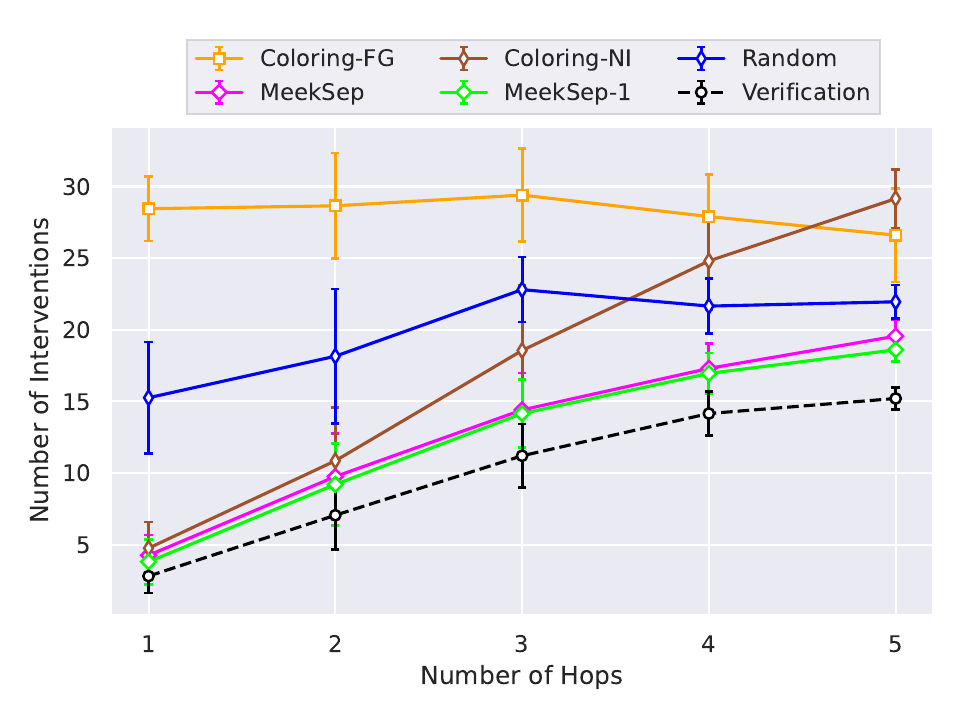}
         \vspace{-0.1in}
         \caption{$n=75$}
     \end{subfigure}
    \hfill
    \begin{subfigure}[b]{0.4\textwidth} \centering\includegraphics[width=0.7\textwidth]{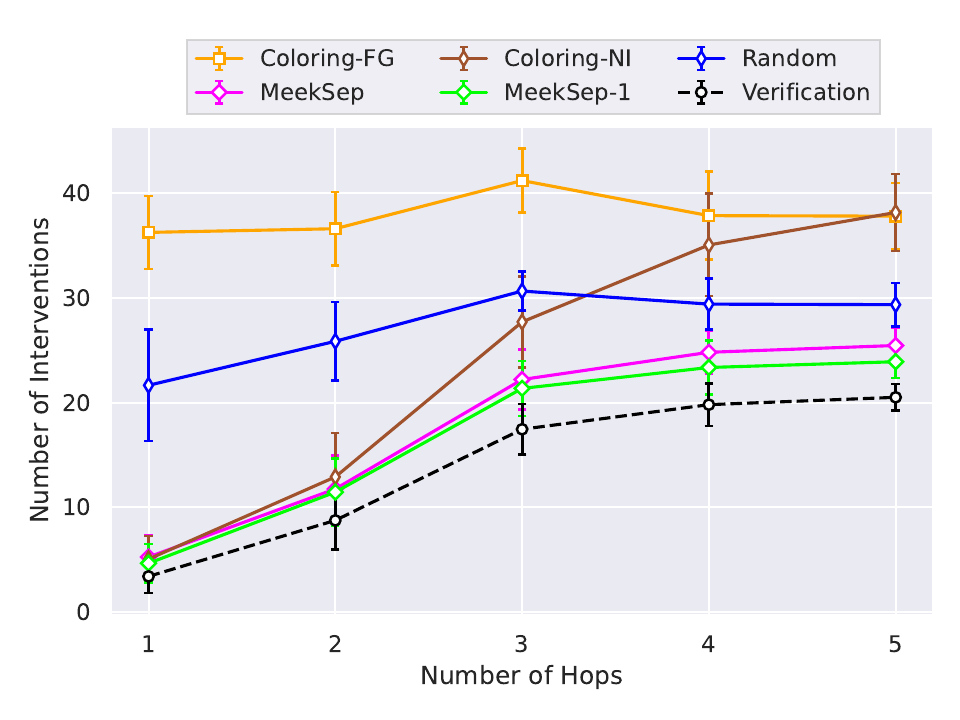}

         \caption{$n=100$}
     \end{subfigure}
        \begin{subfigure}[b]{0.4\textwidth}
         \centering       \includegraphics[width=0.7\textwidth]{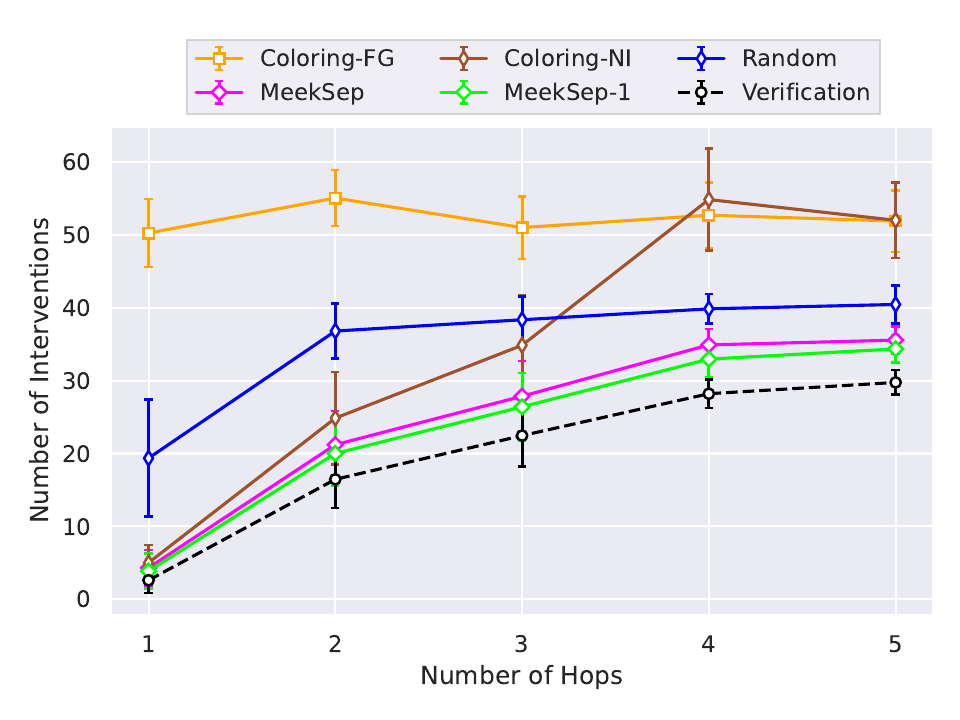}
         \caption{$n=150$}
     \end{subfigure}
    \caption{Meek separator for subset search on $r$-hop model. Each dot is averaged across $20$ DAGs, where the error bar shows $0.5$ standard deviation.}\label{fig:5}
    \vspace{-0.2in}
\end{figure}

\subsection{Causal Mean Matching}
\textbf{Problem Generation:} We consider three random graph models: Erdös-Rényi graphs, Barabási–Albert graphs \cite{albert2002statistical}, and random tree graphs. The edge density in Erdös-Rényi graphs is $0.2$ where the number of edges to attach from a new node to existing nodes in Barabási–Albert graphs is set to $2$. The intervention targets of $\cI^*$ is a random subset of $n$ vertex in the DAG.

\textbf{Multiple Runs:} For each dot presented in the result, we run each method on $10$ different instances using the generation method described above. The average and standard deviation across instances are reported in the results.

\Cref{fig:2b} shows the result on Erdös-Rényi graphs, where we vary the number of targets in $\cI^*$ on DAGs with $50$ nodes. \Cref{fig:6a} and \Cref{fig:6b} show similar results on random tree graphs and Barabási–Albert graphs. In \Cref{fig:6c}, we consider Erdös-Rényi graphs where $|\cI^*|$ is set to $25$. This result shows the trend of varying number of nodes. We observe that our method is empirically competitive with the state-of-the-art method \texttt{CliqueTree} across all cases.

\begin{figure}[H]
     \centering
     \begin{subfigure}[b]{0.31\textwidth}
         \centering
    \includegraphics[width=0.9\textwidth]{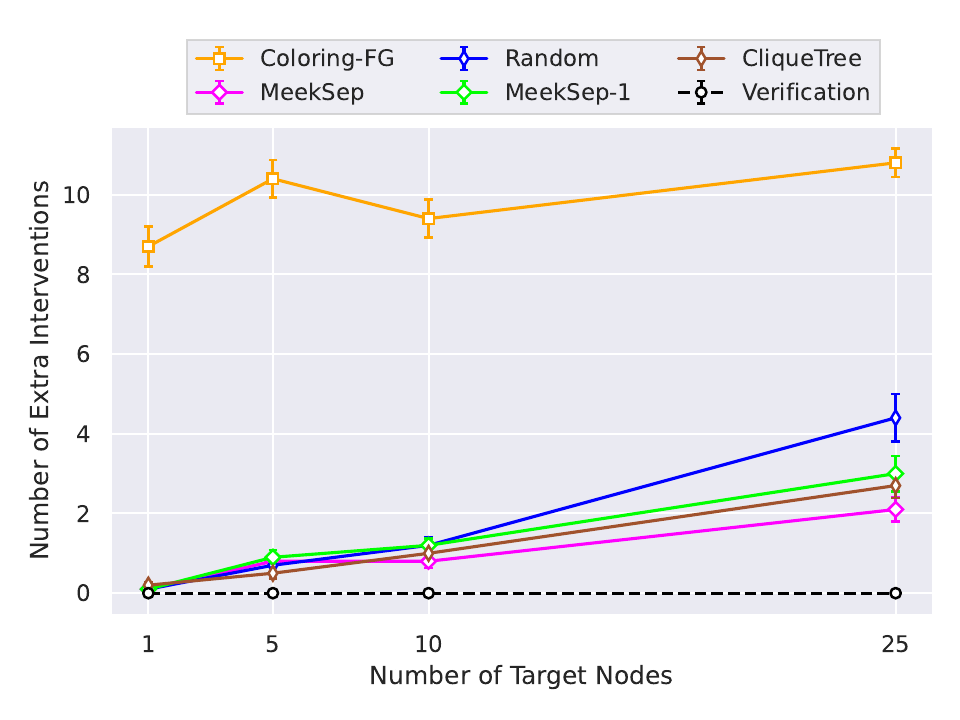}
         \caption{Random tree graphs}\label{fig:6a}
     \end{subfigure}
     \begin{subfigure}[b]{0.31\textwidth}
         \centering
        \includegraphics[width=0.9\textwidth]{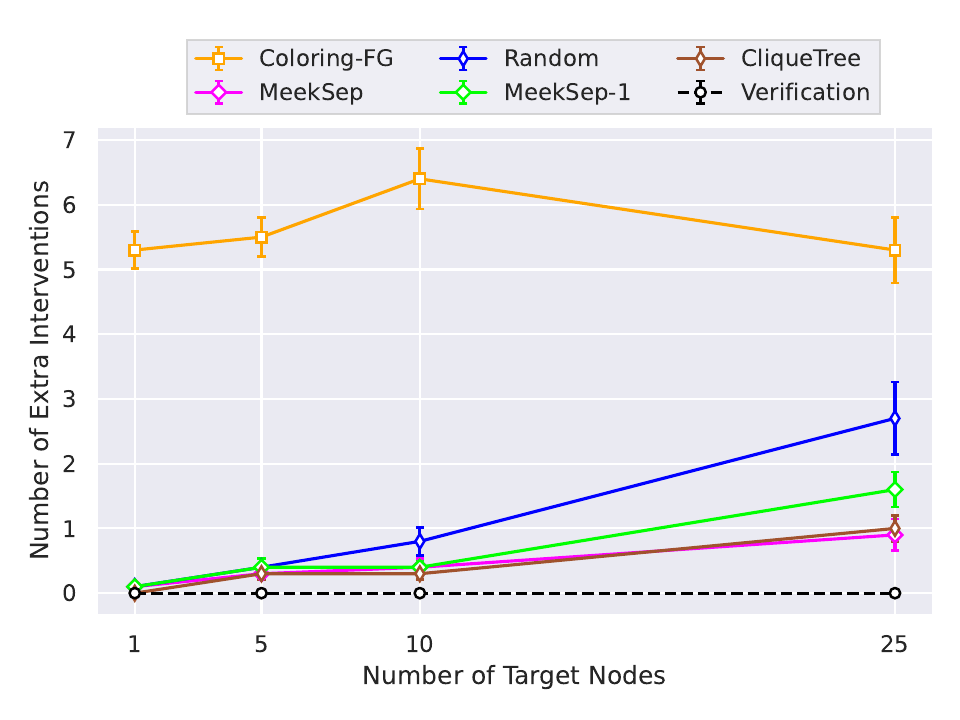}
         \caption{Barabási–Albert graphs}
         \label{fig:6b}
     \end{subfigure}
      \begin{subfigure}[b]{0.31\textwidth}
         \centering
    \includegraphics[width=0.9\textwidth]{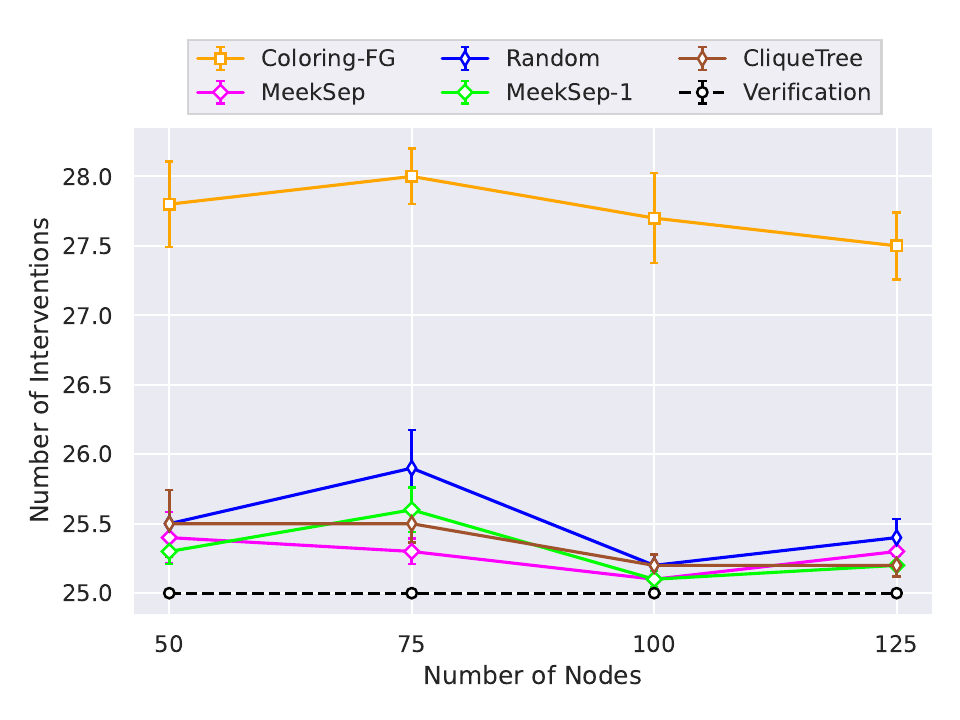}
         \caption{$|\cI^*|=25$}\label{fig:6c}
     \end{subfigure}
    \caption{Meek separator for causal matching. Each dot is averaged across $10$ DAGs, where the error bar shows $0.2$ standard deviation}\label{fig:6}
    \vspace{-0.2in}
\end{figure}

\end{document}